\documentclass[twoside,11pt]{article}

%


\usepackage{amsmath,empheq} 
\usepackage{mathtools}

\numberwithin{equation}{section}

\usepackage{jmlr2e}
\usepackage[english]{babel}
\usepackage{color}
\usepackage{bbm}
\usepackage{algorithm}
\usepackage{algorithmic}
\usepackage{verbatim}
\usepackage{hhline}
\usepackage{subcaption}

\usepackage{hyperref}


\newcommand{\explain}[2]{\underset{\mathclap{\overset{\uparrow}{#2}}}{#1}}


\def\P{\mathbb{P}}

\def\dim{\mathrm{dim}}

\def\cP{\mathcal{P}}
\def\cS{\mathcal{S}}



\jmlrheading{14}{2013}{xx-yy}{7/13}{??/??}{Yu-Xiang Wang and Huan Xu}


\ShortHeadings{Noisy Sparse Subspace Clustering}{Wang and Xu}
\firstpageno{1}

\begin{document}

\title{Noisy Sparse Subspace Clustering}

\author{\name Yu-Xiang Wang \email yuxiangw@cs.cmu.edu \\
       \addr Machine Learning Department\\
       Carnegie Mellon University\\
       Pittsburgh, PA 15213
       \AND
       \name Huan Xu \email mpexuh@nus.edu.sg\\
       \addr Department of Mechanical Engineering\\
       National University of Singapore\\
       Singapore 117576}

\editor{editors not assigned yet}

\maketitle

\begin{abstract}
This paper considers the problem of subspace clustering under noise. Specifically, we study the behavior of Sparse Subspace Clustering (SSC) when either adversarial or random noise is added to the unlabelled input data points, which are assumed to be in a union of low-dimensional subspaces.
We show that a modified version of SSC is \emph{provably effective} in correctly identifying the underlying subspaces, even with noisy data. This extends theoretical guarantee of this algorithm to more practical settings and provides justification to the success of SSC in a class of real applications.
\end{abstract}

\begin{keywords}
  Subspace clustering, robustness, stability, compressive sensing, sparse
\end{keywords}

\section{Introduction}

Subspace clustering is a problem motivated by many real applications. It is now widely known that many high dimensional data including motion trajectories~\citep{costeira1998motion_seg}, face images~\citep{basri2003lambertianface}, network hop counts~\citep{eriksson2011high_rankMC}, movie ratings~\citep{zhang2012RecSys} and social graphs~\citep{xu2011graphclustering} can be modelled as samples drawn from the {\em union} of multiple low-dimensional linear subspaces (illustrated in Figure~\ref{fig:Union_of_sub_model}). Subspace clustering, arguably the most crucial step to understand such data, refers to the task of clustering the data into their original subspaces and uncovers the underlying structure of the data. The partitions correspond to different rigid objects for motion trajectories, different people for face data, subnets for network data, like-minded users in movie database and latent communities for social graph.

\begin{figure}
  \centering
  \includegraphics[width=0.7\linewidth]{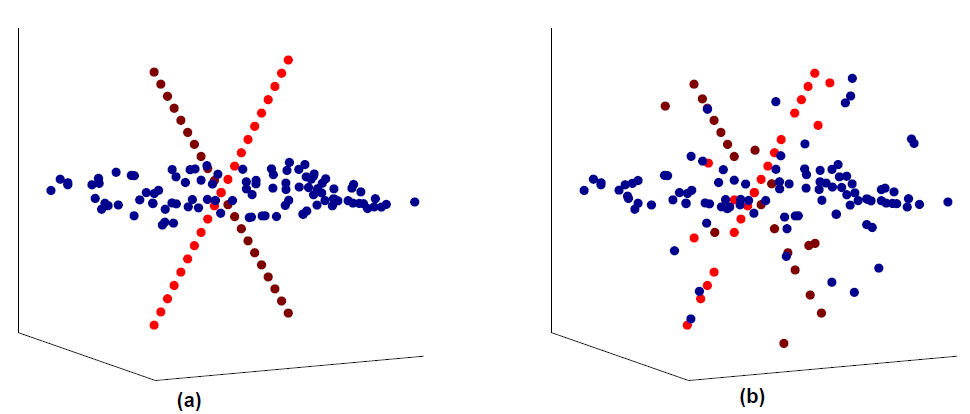}\\
  \caption{Exact (a) and noisy (b) data in union-of-subspace}\label{fig:Union_of_sub_model}
\end{figure}

Subspace clustering has drawn significant attention in the last decade and a great number of algorithms have been proposed, including Expectation-Maximization-like local optimization algorithms, e.g., K-plane~\citep{bradley2000k-plane} and Q-flat~\citep{tseng2000qflat}, algebraic methods, e.g., Generalized Principal Component Analysis (GPCA)~\citep{vidal2005gpca}, matrix factorization methods~\citep{costeira1998motion_seg,costeira2000multibody_factorization}, spectral clustering-based methods~\citep{lauer2009spectral,chen2009spectral}, bottom-up local sampling and affinity-based methods \citep[e.g.,][]{yan2006LSA,rao2008motion}, and the convex optimization-based methods: namely, Low Rank Representation (LRR)~\citep{liu2010lrr_icml,liu2013LRR} and Sparse Subspace Clustering (SSC)~\citep{elhamifar2009ssc,elhamifar2012ssc_journal}. For a comprehensive survey and comparisons, we refer the readers to the tutorial~\citep{vidal2011tutorial}. Among these algorithms, SSC is known to enjoy superb empirical performance, {\em even for noisy data}. For example, it
is the state-of-the-art algorithm for motion segmentation on Hopkins155 benchmark~\citep{tron2007benchmark,elhamifar2009ssc}, and has been shown to be more robust than LRR as the number of clusters increase~\citep{elhamifar2012ssc_journal}.

The key idea of SSC is to represent each data point by a sparse linear combination of the remaining data points using $\ell_1$ minimization. Without introducing the notations (which is deferred in Section~\ref{sec:prob_setup}), the noiseless and noisy version of SSC solve respectively
\begin{align*}
  \min_{c_i} \|c_i\|_1\quad s.t.\quad x_i=X_{-i}c_i, && \text{and}&& \min_{c_i} \|c_i\|_1+\frac{\lambda}{2}\|x_i-X_{-i}c_i\|^2,
\end{align*}
for each data column $x_i$, and the hope is that $c_i$ will be supported only on indices of the data points from the same subspace as $x_i$. While this formulation is for linear subspaces, affine subspaces can also be dealt with by augmenting data points with an offset variable $1$.

Effort has been made to explain the practical success of SSC by analyzing the noiseless version. \citet{elhamifar2010ssc_icassp} show that under certain conditions, \emph{disjoint} subspaces (i.e., they are not overlapping) can be exactly recovered.
A recent geometric analysis of SSC~\citep{soltanolkotabi2011geometric} broadens the scope of the results significantly to the case when subspaces can be overlapping. However, while these analyses advanced our understanding of SSC, one common drawback
is that data points are assumed to be lying {\em exactly} on the subspaces. This assumption can hardly be satisfied in practice. For example, motion trajectories data are only {\em approximately} of rank-4 due to perspective distortion of camera, tracking errors and pixel quantization~\citep{costeira1998motion_seg}; similary, face images are   not precisely of rank-9 since human faces are at best {\em approximated} by a convex body~\citep{basri2003lambertianface}.

In this paper, we address this problem and provide a theoretical analysis of SSC with noisy or corrupted data. Our main result shows that a modified version of SSC (see Eq. \eqref{eq:Lasso}) succeeds when the magnitude of noise does not exceed a threshold determined by a geometric gap between the \emph{inradius} and the \emph{subspace incoherence} (see below for precise definitions). This complements the result of \citet{soltanolkotabi2011geometric} that shows the same geometric gap determines whether SSC succeeds for the noiseless case. Indeed,  when the noise vanishes, our results reduce to the noiseless case results of \citeauthor{soltanolkotabi2011geometric}.

While our analysis is based upon the geometric analysis of \citet{soltanolkotabi2011geometric}, the analysis is more involved: In SSC, sample points are used as the dictionary for sparse recovery, and therefore noisy SSC requires analyzing a noisy dictionary.
We also remark that our results on noisy SSC are {\em exact}, i.e., as long as the noise magnitude is smaller than a threshold, the recovered subspace clusters are {\em correct}.
This is in sharp contrast to the majority of previous work on structure recovery for noisy data where stability/perturbation bounds are given~--~i.e., the obtained solution is {\em approximately} correct, and the approximation gap goes to zero only when the noise diminishes.

Lastly, we remark that an independently developed work \citep{soltanolkotabi2013robust} analyzed the same algorithm {\em under a statistical model} that generates the data. In contrast, our main results focus on the cases when the data are deterministic. Moreover, when we specialize our general result to the same statistical model, we show that we can handle a significantly larger amount of noise under certain regimes.


The paper is organized as follows. In Section~\ref{sec:RelatedWorks}, we review previous and ongoing works related to this paper. In Section~\ref{sec:prob_setup}, we formally define the notations, explain our method and the models of our analysis. Then we present our main theoretical results in Section~\ref{sec:main} with examples and remarks to explain the practical implications of each theorem. In Section~\ref{sec:proof_deterministic}~and~\ref{sec:proof_randomized}, proofs of the deterministic and randomized results are provided. We then evaluate our method experimentally in Section~\ref{sec:experiments} with both synthetic data and real-life data, which confirms the prediction of the theoretical results. Lastly, Section~\ref{sec:conclusion} summarizes the paper and discuss some open problems for future research in the task of subspace clustering.


\section{Related works}\label{sec:RelatedWorks}
In this section, we review  previous and ongoing theoretical studies on the problem of subspace clustering.

\subsection{Nominal performance guarantee for noiseless data}
Most previous analyses concern about the nominal performance of a particular subspace clustering algorithm with noiseless data. The focus is to relax the assumptions on the underlying subspaces and data generation.

A number of methods have been shown working under the \emph{independent subspace} assumption including the early factorization-based methods \citep{costeira1998motion_seg,kanatani2001motion}, LRR~\citep{liu2010lrr_icml} and the initial guarantee of SSC~\citep{elhamifar2009ssc}. Recall that the data points are drawn from a union of subspaces, the \emph{independent subspace } assumption requires each subspace to be linearly independent to the {\em span} of all other subspaces. Equivalently, this assumption requires the sum of each subspace's dimension to be equal to the dimension of the span of all subspaces. For example, in a two dimensional plane, one can only have 2 independent lines. If there are three lines intersecting at the origin, even if each pair of the lines are independent, they are not considered independent as a whole.

\emph{Disjoint subspace} assumption only requires pairwise linear independence, and hence is more meaningful in practice. To the best of our knowledge, only GPCA~\citep{vidal2005gpca} and SSC~\citep{elhamifar2010ssc_icassp,elhamifar2012ssc_journal} have been shown to provably handle the data under \emph{disjoint subspace} assumption. GPCA however is not a polynomial time algorithm. Its computational complexity increases exponentially with respect to the number and dimension of the subspaces.

\citet{soltanolkotabi2011geometric} developed a geometric analysis that further extends the performance guarantee of SSC, and in particular it covers some cases when the underlying subspaces are slightly \emph{overlapping}, meaning that two subspaces can even share a basis. The analysis reveals that the success of SSC relies on the difference of two geometric quantities (inradius $r$ and incoherence $\mu$) to be greater than $0$, which leads to by far the most general and strongest theoretical guarantee for noiseless SSC. A summary of these assumptions on the subspaces and their formal definition are given in Table~\ref{tab:subspaces}.
\begin{table}
  \centering
\begin{tabular}{|l|c|}
  \hline
  Independent Subspaces & $\dim\left[\cS_1\otimes...\otimes \cS_L\right] = \sum_{\ell=1}^L \dim\left[\cS_\ell\right]  $.  \\\hline
  Disjoint Subspaces &  $\cS_\ell\cap \cS_k =\mathbf{0}$ for all $\{(\ell,k)|\ell\neq k\}$.\\\hline
  Overlapping Subspaces & No points lies in $\cS_\ell\cap \cS_k$ for any $\{(\ell,k)|\ell\neq k\}$.\\
  \hline
\end{tabular}
\caption{Comparison of conditions on the underlying subspaces.}\label{tab:subspaces}
\end{table}

We remark that our robust analysis extends from \citet{soltanolkotabi2011geometric} and therefore is inherently capable of handling the same range of problems, namely disjoint and overlapping subspaces. This is formalized later in Section~\ref{sec:main}.

\subsection{Robust performance guarantee}

Previous studies of the subspace clustering under noise have been mostly empirical. For instance, factorization, spectral clustering and local affinity based approaches, which we mentioned above, are able to produce a (sometimes good) solution even for noisy real data. Convex optimization based approaches like LRR and SSC can be naturally reformulated as a robust method by relaxing the hard equality constraints to a penalty term in the objective function. In fact, the superior results of SSC and LRR on motion segmentation and face clustering data are produced using the robust extension in \citet{elhamifar2009ssc} and \citet{liu2010lrr_icml} instead of the well-studied noiseless version.

As of writing, there have been very few subspace clustering methods that is guaranteed to work when data are noisy.  Besides the conference version of the current paper~\citep{wang2013noisy}, an independent work \citep{soltanolkotabi2013robust} also analyzed SSC under noise. Subsequently, there has been noisy guarantees for other algorithms, e.g., thresholding based approach \citep{heckel2013noisy} and
orthogonal matching pursuit \citep{dyer2013greedy}.

The main difference between our work and \citep{soltanolkotabi2013robust} is that our guarantee works for a more general set of problems when the data and noise may not be random, whereas the key arguments in the proof in \citet{soltanolkotabi2013robust} relies on the assumption that data points are uniformly distributed on the unit sphere within each subspace, which corresponds to the ``semi-random model'' in our paper.
As illustrated in \citet[Figure~9~and~10]{elhamifar2012ssc_journal}, the semi-random model is not a good fit for both the motion segmentation and the face clustering datasets, as in these datasets there is a fast decay in the singular values of each subspace.  The uniform distribution assumption becomes even harder to justify as the dimension $d$ of each subspace gets larger --- a regime where the analysis in \citep{soltanolkotabi2013robust} focuses on.

Moreover, with a minor modification in our analysis that sharpens the bound of the tuning parameter that ensures the solution is non-trivial, we are able to get a result that is stronger than \citet{soltanolkotabi2013robust} in cases when the dimension of each subspace $d\leq O(\sqrt{n})$ \footnote{Admittedly, \citep{soltanolkotabi2013robust} obtained better noise-tolerance than the comparable result in our conference version \citep{wang2013noisy}. }. This result extends the provably guarantee of SSC to a setting where the signal to noise ratio (SnR) is allowed to go to $0$ as the ambient dimension gets large. In summary, we compare our results in terms of the level of noise that can be provably tolerated in Table~\ref{tab:comparison}. These comparisons are in the same setting modulo some slight differences in the noise model and successful criteria. It is worth noting that when $d>O(\sqrt{n})$, \citet{soltanolkotabi2013robust}'s bound is sharper. We will provide more details in the Appendix.



\begin{table}
\centering
\small{
\begin{tabular}{|p{2.2in}|p{1.05in}|p{1in}|p{1in}|p{1.2in}|}
  \hline
   & This paper & \citep{wang2013noisy} & \citet{soltanolkotabi2013robust} \\\hline 
  Fully deterministic & $O(r(r-\mu))$ & $O(r(r-\mu))$ & N.A.  \\\hline
  Deterministic + random noise & $O((n/d)^{\frac{1}{4}}(r-\mu))$ & $O(r-\mu)$ & N.A.   \\\hline
  Semi-random data + random noise & $O\left(\frac{n^{\frac{1}{4}}}{\sqrt{d}}(1-\frac{\text{aff}}{\sqrt{d}})\right)$ & $O\left(\frac{1}{\sqrt{d}}(1-\frac{\text{aff}}{\sqrt{d}})\right)$ & $O\left(1-\frac{\text{aff}}{\sqrt{d}}\right)$  \\\hline
    Fully-random data + random noise & $O\left(\frac{n^{\frac{1}{4}}}{\sqrt{d}}(1-\frac{\sqrt{d}}{\sqrt{n}})\right)$ & $O\left(\frac{1}{\sqrt{d}}(1-\frac{\sqrt{d}}{\sqrt{n}})\right)$ & $O\left(1-\frac{\sqrt{d}}{\sqrt{n}}\right)$  \\
  \hline
\end{tabular}
}
\caption{Comparison of the level of noise tolerable for noisy subspace clustering methods. Note that ``$\mathrm{aff}$'' is the ``unnormalized'' affinity defined in \citep{soltanolkotabi2011geometric}}.\label{tab:comparison}
\end{table}

Lastly, we note that the notion of robustness in this paper is confined to the noise/arbitrary corruptions added to the legitimate data. It is not the robustness against outliers in the data, unless otherwise specified. Handling outliers is a completely different problem. Solutions have been proposed for LRR in \citet{liu2012aistats} by decomposing a $\ell_{2,1}$ norm column-wise sparse components and for SSC in \citet{soltanolkotabi2011geometric} by objective value thresholding. However these results require non-outlier data points to be free of noise, therefore are not comparable to the study in this paper.

\section{Problem setup}\label{sec:prob_setup}
\paragraph{Notations: }
We denote the uncorrupted data matrix by $Y \in \mathbb{R}^{n\times N}$, where each column of $Y$ (normalized to unit vector \footnote{We assume the normalization condition for ease of presentation. Our results can be extened to the case when   each column of the noisy data points $X=Y+Z$ is normalized, as well as the case where no normalizing is performed at all, under simple modifications to the conditions.   }) belongs to a union of $L$ subspaces $$\mathcal{S}_1 \cup \mathcal{S}_2 \cup...\cup \mathcal{S}_L.$$

Each subspace $\mathcal{S}_{\ell}$ is of dimension $d_{\ell}$ and contains $N_{\ell}$ data samples with $N_1 +N_2+...+N_L=N$. We observe the noisy data matrix $X = Y+Z$, where $Z$ is some arbitrary noise matrix. Let $Y^{(\ell)}\in \mathbb{R}^{n\times N_{\ell}}$ denote the selection of columns in $Y$ that belongs to $\mathcal{S}_{\ell}$, and denote the corresponding columns in $X$ and $Z$  by $X^{(\ell)}$ and $Z^{(\ell)}$ respectively. Without loss of generality, let $X=[X^{(1)},X^{(2)},...,X^{(L)}]$ be ordered. In addition, we use subscript ``$-i$'' to represent a matrix that excludes column~$i$, e.g., $X^{(\ell)}_{-i}=[x^{(\ell)}_1,...,x^{(\ell)}_{i-1},x^{(\ell)}_{i+1},...,x^{(\ell)}_{N_{\ell}}].$ Calligraphic letters such as $\mathcal{X},\mathcal{Y_{\ell}}$ represent the set containing all columns of the corresponding matrix (e.g., $X$ and $Y^{(\ell)}$).

For any matrix $X$, $\mathcal{P}(X)$ represents the symmetrized convex hull of its columns, i.e., $\mathcal{P}(X) = \mathrm{conv}(\pm \mathcal{X})$. Also let $\mathcal{P}_{-i}^{(\ell)} := \mathcal{P}(X_{-i}^{(\ell)})$ and $\mathcal{Q}_{-i}^{(\ell)} := \mathcal{P}(Y_{-i}^{(\ell)})$ for short. $\mathbb{P}_{\mathcal{S}}$ and $\mathrm{Proj}_{\mathcal{S}}$ denote respectively the projection matrix and projection operator (acting on a set) to subspace $\mathcal{S}$. Throughout the paper, $\|\cdot\|$ represents $2$-norm for vectors and operator norm for matrices; other norms will be explicitly specified (e.g., $\|\cdot\|_1,\|\cdot\|_{\infty}$).

\paragraph{Method: }
Original SSC solves the linear program
    \begin{equation}\label{eq:SSC}
    \begin{aligned}
    \min_{c_i} \; \|c_i\|_1 \quad s.t. \quad &x_i=X_{-i}c_i,
    \end{aligned}
    \end{equation}
for each data point $x_i$. Solutions are arranged into matrix $C=[c_1,...,c_N]$, then spectral clustering techniques such as \citet{ng2002spectral} are applied on the affinity matrix $W=|C|+|C|^T$ ($|\cdot|$ represents entrywise absolute value). Note that when $Z\neq 0$, this method breaks down: indeed \eqref{eq:SSC} may even be infeasible.


To handle noisy $X$, a natural extension is to relax the equality constraint in \eqref{eq:SSC} and solve the following unconstrained minimization problem instead \citep{elhamifar2012ssc_journal}:
\begin{equation}\label{eq:Lasso}
\begin{aligned}
\min_{c_i} \; &\|c_i\|_1+\frac{\lambda}{2}\|x_i-X_{-i}c_i\|^2.
\end{aligned}
\end{equation}
We will focus on Formulation~\eqref{eq:Lasso} in this paper. Notice that \eqref{eq:Lasso} coincides with standard LASSO. Yet, since our task is subspace clustering, the analysis of LASSO (mainly for the task of support recovery) does not extend to SSC. In particular, existing literature for LASSO to succeed requires the dictionary $X_{-i}$ to satisfy the Restricted Isometry Property~\citep[RIP for short;][]{candes2008RIP} or the Null-space property~\citep{donoho2006BPDN},  but neither of them is satisfied in the subspace clustering setup.\footnote{As a simple illustrative example, suppose there exists two identical columns in $X_{-i}$, which violates RIP for 2-sparse signal and has maximum incoherence $\mu(X_{-i})=1$.}

In the subspace clustering task, there is no single ``ground-truth'' $C$ to compare the solution against. Instead, the algorithm succeeds if each sample is expressed as a linear combination of samples belonging to the same subspace, as the following definition states.
\begin{definition}[LASSO Subspace Detection Property]\label{def:lasso_detection}
We say the subspaces $\{\mathcal{S}_{\ell}\}_{\ell=1}^{k}$ and noisy sample points $X$ from these subspaces obey LASSO subspace detection property with parameter $\lambda$, if and only if it holds that for all $i$, the optimal solution $c_i$ to \eqref{eq:Lasso} with parameter $\lambda$ satisfies:\\
\indent (1) $c_i$ is not a zero vector, i.e., the solution is non-trivial,
\indent (2) Nonzero entries of $c_i$ correspond to only columns of $X$ sampled from the same subspace as $x_i$.
\end{definition}
This property ensures that the output matrix $C$ and (naturally) the affinity matrix $W$ are exactly block diagonal with each subspace cluster represented by a disjoint block.  The property is illustrated in Figure~\ref{fig:SEP}. For convenience, we will refer to the second requirement alone as ``\emph{Self-Expressiveness Property}''~(SEP), as defined in \citet{elhamifar2012ssc_journal}.

Note that the LASSO Subspace Detection Property is a strong condition. In practice, spectral clustering does not require the exact block diagonal structure for perfect segmentation (check Figure~\ref{fig:Exp1_acc_map} in our simulation section for details). A caveat is that it is also not sufficient for perfect segmentation, since it does not guarantee each diagonal block forms a connected component. This is a known problem for SSC \citep{nasihatkon2011graph}, although we observe that in practice graph connectivity is usually not a big issue. Proving the high-confidence connectivity (even under probabilistic models) remains an open problem, except for the almost trivial cases when the subspaces are independent \citep{liu2013LRR, wang2013provable}.

\begin{figure}
  \centering
  \includegraphics[width=0.35\linewidth]{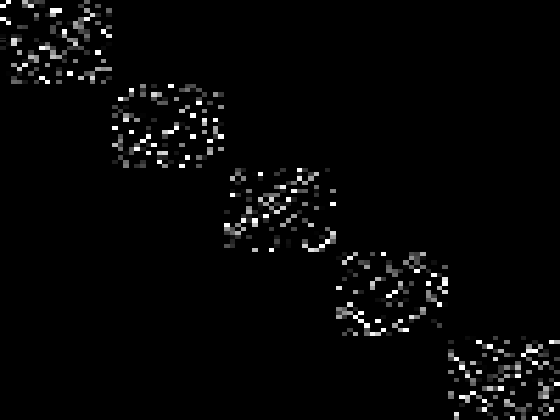}
  \includegraphics[width=0.35\linewidth]{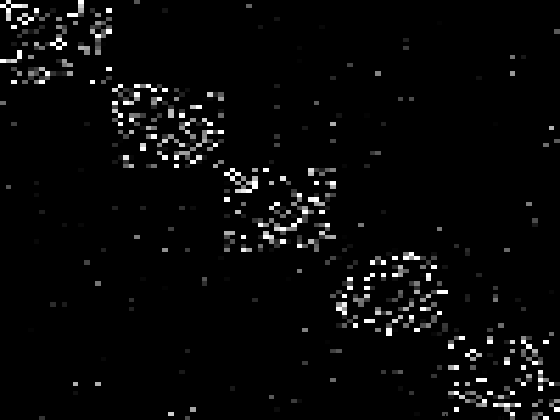}\\
  \caption{Illustration of LASSO-Subspace Detection Property/Self-Expressiveness Property. \textbf{Left:} SEP holds. \textbf{Right:} SEP is violated even though spectral clustering is likely to cluster this affinity graph perfectly into 5 blocks.}\label{fig:SEP}
\end{figure}

\paragraph{Models of analysis: }
Our objective here is to provide sufficient conditions upon which the LASSO subspace detection properties hold in the following four models. Precise definition of the noise models will be given in Section~\ref{sec:main}.

\begin{tabular}{ll}
  $\bullet$ fully deterministic model;\\
  $\bullet$ deterministic data with random noise;\\
  $\bullet$ semi-random data with random noise;\\
  $\bullet$ fully random model.
\end{tabular}

\section{Main results}\label{sec:main}
\subsection{Deterministic model}
We start by defining two concepts adapted from the original proposal of \citet{soltanolkotabi2011geometric}.
\begin{definition}[Projected Dual Direction]\label{def:proj_dual_direction}
Let $\nu$ be the optimal solution to the dual optimization program\footnote{This definition is related to \eqref{eq:Opt_original_dual}, the dual problem of \eqref{eq:Lasso}, which we will define in the proof.}
\begin{align*}
\quad \max_{\nu} \; \langle x,\nu \rangle - \frac{1}{2\lambda}\nu^T\nu,\quad\text{subject to:}\quad &\|X^T\nu\|_{\infty} \leq 1;
\end{align*}
and $\mathcal{S}$ is a low-dimensional subspace. The {\em projected dual direction} $v$ is  defined as
$$v(x,X,\mathcal{S},\lambda)\triangleq\frac{\mathbb{P}_{\mathcal{S}} \nu}{\|\mathbb{P}_{\mathcal{S}} \nu\|}.$$
\end{definition}

\begin{definition}[Projected Subspace Incoherence Property]\label{def:incoherence}
Compactly denote projected dual direction $v_i^{(\ell)}=v(x_i^{(\ell)},X_{-i}^{(\ell)},\mathcal{S}_{\ell},\lambda)$ and $V^{(\ell)}=[v_1^{(\ell)},...,v_{N_{\ell}}^{(\ell)}]$. We say that vector set $\mathcal{X}_{\ell}$ is $\mu$-incoherent to other points if
\begin{align*}
    \mu\geq \mu(\mathcal{X}_{\ell}) := &\max_{y\in \mathcal{Y}\setminus \mathcal{Y}_{\ell}}{\|{V^{(\ell)}}^Ty\|_{\infty}}.
\end{align*}
\end{definition}
%

Here, $\mu$ measures the incoherence between corrupted subspace samples $\mathcal{X}_{\ell}$ and clean data points in other subspaces (illustrated in Figure~\ref{fig:SubspaceIncoherence}). As $\|y\|=1$ by the normalization assumption, the range of $\mu$ is $[0,1]$. In case of random subspaces in high dimension, $\mu$ is close to zero. Moreover, as we will see later, for deterministic subspaces and random data points, $\mu$ is proportional to their expected angular distance (measured by cosine of canonical angles).

Definition~\ref{def:proj_dual_direction}~and~\ref{def:incoherence} differ from the \emph{dual direction} and \emph{subspace incoherence property} of \citet{soltanolkotabi2011geometric} in that we require a projection to a particular subspace to cater to the analysis of the noise case. Also, since they reduce to the original definitions when data are noiseless and $\lambda\rightarrow \infty$, these definitions can be considered as a generalization of their original version.

%
\begin{figure}
  \centering
      \includegraphics[width=0.7\linewidth]{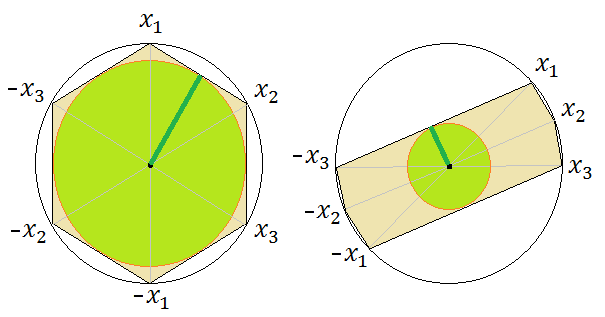}\\
  \caption{Illustration of inradius and data distribution. The inradius measures how well data points represent a subspace. }\label{fig:inradius}
  \includegraphics[width=0.9\linewidth]{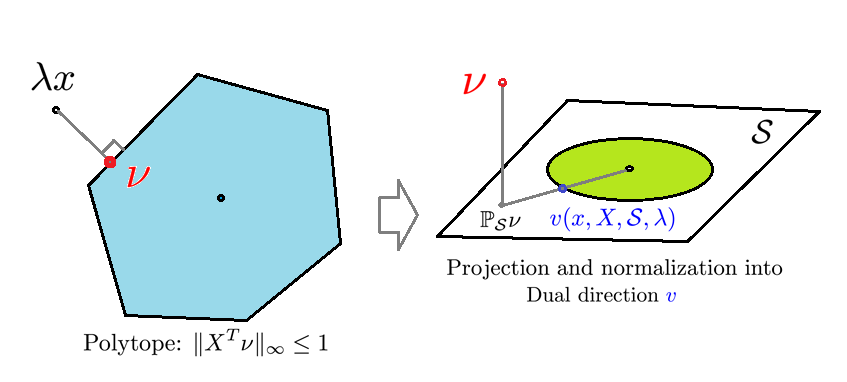}\\
  \includegraphics[width=0.9\linewidth]{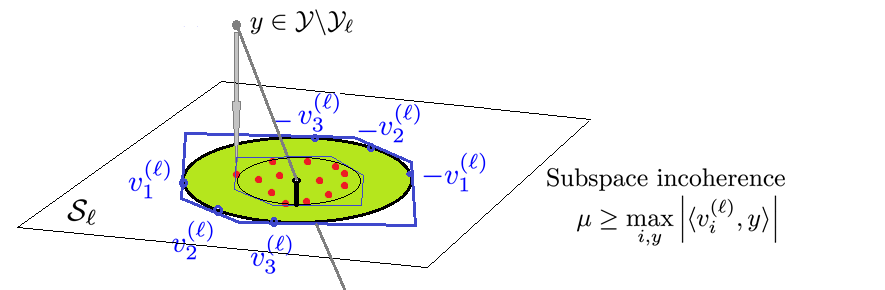}
  \caption{Illustrations of the projected dual direction and subspace incoherence property. The projected dual direction in Definition~\ref{def:proj_dual_direction} is essentially an Euclidean projection to the polytope, followed by a projection to the subspace and normalization. There is a dual direction associated with each data point in the subspace. Jointly, $\left\{x \middle| \max_{i}\big|\langle v_i^{(\ell)},x\rangle\big|\leq \mu\right\}$ defines a polygon in the subspace $\cS_\ell$, and subspace incoherence $\mu$ is given by the smallest such polytope that contains the projections of all external point $y$ into the this subspace.}\label{fig:SubspaceIncoherence}
\end{figure}


\begin{definition}[inradius]
The inradius of a convex body $\mathcal{P}$, denoted by $r(\mathcal{P})$, is defined as the radius of the largest Euclidean ball inscribed in $\mathcal{P}$.
\end{definition}
The inradius of a $\mathcal{Q}_{-i}^{(\ell)}$ describes the dispersion of the data points. Well-dispersed data lead to larger inradius and skewed/concentrated distribution of data have small inradius. An illustration is given in Figure~\ref{fig:inradius}.

\begin{definition}[Deterministic noise model]
Consider arbitrary additive noise $Z$ to $Y$, each column $z_i$ is bounded by the two quantities below:
\begin{align*}
  \delta:= \max_i\|z_i\|, && \delta_1:=\max_{i,\ell}\|\mathbb{P}_\mathcal{S_{\ell}}z_i\|,
\end{align*}
\end{definition}
As we assume the uncorrupted data point $y$ has unit norm, $\delta$ essentially describes the amount of allowable relative error.

\begin{theorem}\label{thm:thm_general}
Under the deterministic noise model, compactly denote
\begin{align*}
\mu_{\ell}:=\mu(\mathcal{X}_{\ell}),&& r_{\ell}:=\min_{\{i: x_i\in \mathcal{X}_{\ell}\}}r(\mathcal{Q}^{(\ell)}_{-i}),&&
   r:=\min_{\ell=1,...,L} r_{\ell}.
\end{align*}
If $\mu_{\ell}< r_{\ell}$ for each $\ell = 1,...,L$, furthermore
$$ \delta\leq \min_{\ell=1,...,L}\frac{r(r_{\ell}-\mu_{\ell})}{2+7r_{\ell}} $$
then LASSO subspace detection property holds for all weighting parameter $\lambda$ in the range
\begin{equation*}
\frac{1}{r - 2\delta-\delta^2}<
        \lambda<\min_{\ell=1,..,L}\left\{\frac{r_{\ell}-\mu_{\ell}-2\delta_1}{\delta(1+\delta)(2+r_{\ell}-\delta_1)}\right\}
\end{equation*}
which is guaranteed to be non-empty.
\end{theorem}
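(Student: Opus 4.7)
I would follow the dual-certificate strategy of Soltanolkotabi and Cand\`es, augmented with a fictitious restricted primal, to accommodate the noisy dictionary. For a fixed $x_i\in\cX_\ell$, the dual of \eqref{eq:Lasso} reads $\max_\nu \langle x_i,\nu\rangle-(1/(2\lambda))\|\nu\|^2$ subject to $\|X_{-i}^T\nu\|_\infty\le 1$, and by KKT any primal optimum $c_i$ has its support inside the active set $\{j:|\langle x_j,\nu^\star\rangle|=1\}$. My plan is: (i) solve the fictitious LASSO on the restricted dictionary $X_{-i}^{(\ell)}$; (ii) call its dual optimum $\tilde\nu$; (iii) certify that $|\langle x_j,\tilde\nu\rangle|<1$ for every $x_j\notin\cX_\ell$. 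That strict inequality, together with strong concavity of the dual in $\nu$, forces $\tilde\nu$ to also be the unique optimum of the unrestricted dual, which pins the support of every primal optimum of \eqref{eq:Lasso} inside $\cX_\ell$ and hence yields the Self-Expressiveness Property.

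The core estimate is the bound on $|\langle x_j,\tilde\nu\rangle|$ for $x_j=y_j+z_j$ with $y_j\in\cS_k$, $k\neq\ell$. I would decompose $\tilde\nu=\P_{\cS_\ell}\tilde\nu+\P_{\cS_\ell^\perp}\tilde\nu$; the normalized first piece is exactly the projected dual direction $v_i^{(\ell)}$ of Definition~\ref{def:proj_dual_direction}, so Definition~\ref{def:incoherence} gives directly $|\langle y_j,\P_{\cS_\ell}\tilde\nu\rangle|\le \mu_\ell\,\|\P_{\cS_\ell}\tilde\nu\|$. The orthogonal component is controlled by noise through the stationarity $\tilde\nu=\lambda(x_i-X_{-i}^{(\ell)}\tilde c_i)$: since both $x_i$ and the columns of $X_{-i}^{(\ell)}$ are within $\delta$ of $\cS_\ell$, one obtains $\|\P_{\cS_\ell^\perp}\tilde\nu\|\le \lambda\delta(1+\|\tilde c_i\|_1)$, while the crude bound $\|\tilde\nu\|\le\lambda(1+\delta)$ follows from comparing the primal value at $\tilde c_i$ against that at $c=0$. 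For $\|\P_{\cS_\ell}\tilde\nu\|$ from above I would use the polarity between the inscribed Euclidean ball of $\mathcal{Q}_{-i}^{(\ell)}\subset\cS_\ell$ and the support function of its polar: the feasibility $\|(X_{-i}^{(\ell)})^T\tilde\nu\|_\infty\le 1$, combined with $Y=X-Z$ and $\|\P_{\cS_\ell}z_i\|\le\delta_1$, yields $\|(Y_{-i}^{(\ell)})^T\P_{\cS_\ell}\tilde\nu\|_\infty\le 1+\delta_1\|\tilde\nu\|$, hence $\|\P_{\cS_\ell}\tilde\nu\|\le(1+\delta_1\|\tilde\nu\|)/r_\ell$. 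Combining these with $|\langle z_j,\tilde\nu\rangle|\le\delta\|\tilde\nu\|$ reduces $|\langle x_j,\tilde\nu\rangle|<1$ to the algebraic inequality $\lambda\delta(1+\delta)(2+r_\ell-\delta_1)<r_\ell-\mu_\ell-2\delta_1$, which is exactly the claimed upper bound on $\lambda$.

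For non-triviality of $c_i$, KKT says $c_i=0$ is optimal iff $\lambda x_i$ is dual-feasible, i.e.\ $\lambda\|X_{-i}^T x_i\|_\infty\le 1$. To force a violation I would use the inradius of $\mathcal{Q}_{-i}^{(\ell)}$ to exhibit some $y_j\in\mathcal{Y}_{-i}^{(\ell)}$ with $\langle y_j,\P_{\cS_\ell}x_i/\|\P_{\cS_\ell}x_i\|\rangle\ge r_\ell$, then subtract the noise cross terms (two of order $\delta$ and one of order $\delta^2$) to obtain $|\langle x_j,x_i\rangle|\ge r-2\delta-\delta^2$, so any $\lambda>1/(r-2\delta-\delta^2)$ does the job. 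The main obstacle, and the most delicate step, is reconciling the two endpoints of the $\lambda$-range: the lower endpoint needs $\lambda$ large enough to produce a non-trivial active set, while the upper endpoint needs $\lambda$ small enough to keep the noise-driven leakage below the margin $r_\ell-\mu_\ell-2\delta_1$. Verifying non-emptiness of the interval reduces to short algebra showing that the stated noise bound $\delta\le\min_\ell r(r_\ell-\mu_\ell)/(2+7r_\ell)$ is precisely what is needed to make the two bounds consistent.
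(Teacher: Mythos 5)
Your overall strategy is the same as the paper's: a dual certificate built from the fictitious problem restricted to $X_{-i}^{(\ell)}$, the decomposition $\tilde\nu=\nu_1+\nu_2$ with the incoherence controlling $\langle y_j,\nu_1\rangle$, the polar-set/inradius bound on $\|\nu_1\|$, the stationarity relation $\tilde\nu=\lambda e$ for $\|\nu_2\|$, the inradius lower bound on $\|X_{-i}^Tx_i\|_\infty$ for non-triviality, and the final consistency check on the two endpoints of the $\lambda$-interval. Two points need repair. First, and most substantively, your bound $\|\mathbb{P}_{\mathcal{S}_\ell^\perp}\tilde\nu\|\le\lambda\delta(1+\|\tilde c_i\|_1)$ is left unusable because you never bound $\|\tilde c_i\|_1$; the crude estimate $\|\tilde\nu\|\le\lambda(1+\delta)$ cannot substitute, since feeding $\|\nu_2\|\le\lambda(1+\delta)$ into $|\langle x_j,\tilde\nu\rangle|\le\mu_\ell\|\nu_1\|+(1+\delta)\|\nu_2\|$ produces a term of order $\lambda$, which exceeds $1$ for every $\lambda$ in the admissible range $\lambda>1/r$. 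What is needed is $\|\tilde c_i\|_1\le 1/r(\mathcal{Q}_{-i}^{(\ell)})$ up to corrections; the paper obtains this by comparing the optimal objective value against the feasible pair built from the minimum-$\ell_1$ noiseless representation $y_i^{(\ell)}=Y_{-i}^{(\ell)}\tilde c$ (whose $\ell_1$ norm is at most $1/r(\mathcal{Q}_{-i}^{(\ell)})$ by the inradius--circumradius polarity of Lemma~\ref{lemma:circum_inradius}), and then absorbing the resulting quadratic correction through the monotonicity of $\alpha\mapsto\alpha+\frac{\delta}{2}\alpha^2$. This step is where the factor $(2+r_{\ell}-\delta_1)$ in the denominator of the upper endpoint actually comes from, so it cannot be waved away.

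Second, a smaller slip: in bounding $\|\nu_1\|$ you write $\|(Y_{-i}^{(\ell)})^T\nu_1\|_\infty\le 1+\delta_1\|\tilde\nu\|$ and then divide by $r_\ell$, but $z_j^T\tilde\nu=(\mathbb{P}_{\mathcal{S}_\ell}z_j)^T\nu_1+(\mathbb{P}_{\mathcal{S}_\ell^\perp}z_j)^T\nu_2$ and the second piece is controlled only by $\delta$, not by $\delta_1$. The correct split keeps $(\mathbb{P}_{\mathcal{S}_\ell}z_j)^T\nu_1$ on the left-hand side, yielding $\|\nu_1\|\le(1+\delta\|\nu_2\|)/(r_\ell-\delta_1)$; with your version the constants do not reproduce the stated numerator $r_\ell-\mu_\ell-2\delta_1$. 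Everything else, including the non-triviality argument via $\|X_{-i}^Tx_i\|_\infty\ge r-2\delta-\delta^2$ and the cubic inequality in $\delta$ verifying non-emptiness of the $\lambda$-range under $\delta\le r(r_\ell-\mu_\ell)/(2+7r_\ell)$, matches the paper's proof.
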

We now offer some discussions of the theorem and the proof will be given in  Section~\ref{sec:proof_deterministic}.

\paragraph{Noiseless case.} When $\delta=0$,  i.e., there is no noise, the condition reduces to $\mu_{\ell}< r_{\ell}$, which coincides with the result in \citet{soltanolkotabi2011geometric}. The exact LP formulation~\eqref{eq:SSC} is equivalent to $\lambda \rightarrow \infty$. Our result implies that unconstrained LASSO formulation \eqref{eq:Lasso} works for any $\lambda>\frac{1}{r}$. 

\paragraph{Signal-to-Noise Ratio.} Condition $\delta\leq\frac{r(r-\mu)}{2+7r}$ can be interpreted as the breaking point under increasing magnitude of attack. This suggests that SSC by \eqref{eq:Lasso} is provably robust to arbitrary noise having signal-to-noise ratio~(SNR) greater than $\Theta\big(\frac{1}{r(r-\mu)}\big)$. (Notice that $0<r<1$, and hence $7r+2 =\Theta(1)$.)

\paragraph{Tuning parameter $\lambda$.} The range of the parameter $\lambda$ in the theorem depends on unknown parameters $\mu$, $r$ and $\delta$, and therefore cannot be used in practice to choose the parameter in practice. It does however justify that when $\delta$ is small, the range of $\lambda$ that Lasso-SSC works is large, therefore not hard to tune. In practice, we do not need to know $\lambda$ in prior. One approach is to trace the Lasso path \citep{tibshirani2013lasso} until we have about $k$ non-zero entries in the coefficient vector. If we would like to use a single $\lambda$ for all columns, a good point to start is to take $\lambda$ to be in the order of $O\big(\frac{1}{\min_j \max_{i\neq j}|x_i^Tx_j|}\big)$, this ensures the solution to be at least non-trivial.

\paragraph{Agnostic subspace clustering.}
The robustness to deterministic error is important, since in practice the union-of-subspace structures are usually only good approximations. If each subspace has decaying singular values (e.g., motion segmentation, face clustering \citep{elhamifar2012ssc_journal} and hybrid system identification\citep{vidal2003algebraic}), the deterministic guarantee allows for the flexibility in choosing the cut-off points, e.g., take 90\% of the energy as signal and treat the remaining spectrum as noise. If one keeps a smaller number of singular values ( a smaller subspace dimension), the inradius will likely to be larger \footnote{A formal relationship between inradius and smallest singular value is described in \citep{wang2013provable}.}, although the noise level also increases. It is  possible that the conditions in Theorem~\ref{thm:thm_general} are satisfied for some decomposition (e.g., those with a large spectral gap) but not others. The nice thing is that this is not a tuning parameter, but rather a theoretical property that remains agnostic to the users. In fact, the algorithm will be provably effective as long as the conditions are satisfied for any signal noise decomposition (not restricted to rank-projection). None of these is possible if distributional assumptions are made to either the data or the noise.

\subsection{Randomized models}
We  further analyze three randomized models with increasing level of randomness.
\begin{description}
  \item[$\bullet$ Determinitic+Random Noise.] Subspaces and samples in subspace are arbitrary; the noise obeys the Random Noise model (Definition~\ref{def:Random_noise_model}).
  \item[$\bullet$ Semi-random+Random Noise.] Subspace is deterministic, but samples in each subspace are drawn iid uniformly from the intersection of the unit sphere and the subspace; the noise obeys the Random Noise model.
  \item[$\bullet$ Fully random.] Both subspace and samples are drawn uniformly at random from their respective domains; the noise is iid Gaussian.
\end{description}
In each of these models, we improve the performance guarantee over our conference version \citep{wang2013noisy}. In the most well-studied semi-random model, we are able to handle cases where the noise level is much larger than the signal, and hence improves upon the best known result for SSC \citet{soltanolkotabi2013robust}. A detailed comparison of the noise tolerance of these methods is given in Table~\ref{tab:comparison}.

\begin{definition}[Random noise model]\label{def:Random_noise_model}
Our random noise model is defined to be any additive $Z$ that is (1) columnwise iid; (2) spherical symmetric;  and (3) $\|z_i\|\leq \delta$ for all $i=1,...,N$ with probability at least $1-1/N$.
\end{definition}
A good example of our random noise model is iid Gaussian noise. Let each entry $Z_{ij} \sim N(0,\sigma^2/n)$. It is known that (see Lemma~\ref{lemma:random_gaussian}) for some constant $C$
$$\P\left(\delta:=\max_i\|z_i\| > \sqrt{1+\frac{6\log N}{n}}\sigma\right) \leq C/N^2.$$

\begin{theorem}[Deterministic+Random Noise]\label{thm:thm_random_noise}
 Under random noise model, compactly denote $r_{\ell}$, $r$ and $\mu_{\ell}$ as in Theorem~\ref{thm:thm_general}, furthermore let
$$\epsilon := \sqrt{\frac{6\log N}{n-\max_{\ell}{d_{\ell}}}}\leq \sqrt{\frac{C\log(N)}{n}}.$$
 If $\mu_{\ell}<r_{\ell}$ for all $\ell = 1,...,k$,
 \begin{align*}
 \epsilon\delta<\min_{\ell=1,...,L}\frac{r_{\ell}-\mu_{\ell}}{2\sqrt{d_{\ell}}+2}, &&\text{and}&& \epsilon\delta(1+\delta) < \min_{\ell=1,...,L}\frac{r(r_\ell-\mu_\ell)}{4r_\ell+6},
\end{align*}
then with probability at least $1-9/N$, LASSO subspace detection property holds for all weighting parameter $\lambda$ in the range
\begin{equation}\label{eq:thm_rand_noise_lambda_range}
\frac{1}{r- 2\epsilon \delta-\epsilon\delta^2}<
        \lambda<\min_{\ell=1,...,L}\left\{\frac{r_{\ell}-\mu_{\ell}-\delta\epsilon - \delta \sqrt{d_{\ell}} \epsilon}{\epsilon\delta(1+\delta)(3+r_{\ell}-\delta\sqrt{d_{\ell}}\epsilon)}\right\}
\end{equation}
which is guaranteed to be non-empty.
\end{theorem}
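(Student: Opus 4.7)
The plan is to reduce to the deterministic Theorem~\ref{thm:thm_general} and sharpen the noise-dependent quantities using high-probability bounds that exploit spherical symmetry. In that proof, noise enters through two kinds of quantities: (i) projections of the form $\|\mathbb{P}_{\mathcal{S}_\ell} z_i\|$, controlled in the worst case by $\delta_1 \le \delta$; and (ii) inner products $\langle z_i, u\rangle$ between a noise vector and a unit direction $u$, which arise whenever one expands $x_i^T\nu = y_i^T\nu + z_i^T\nu$ in the primal--dual calculations and in the bound defining the projected subspace incoherence $\mu_\ell$. Under the random noise model, spherical symmetry implies both of these concentrate: $\|\mathbb{P}_{\mathcal{S}_\ell} z_i\|$ concentrates near $\delta\sqrt{d_\ell/n}$ and $\langle z_i,u\rangle$ near $\delta/\sqrt{n}$, improving each bound by a factor of order $\epsilon$.

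First I would establish two concentration lemmas by combining standard tail bounds for uniform vectors on the sphere in $\mathbb{R}^n$ with a union bound over the $N$ columns and $L$ subspaces, obtaining, with probability at least $1-O(1/N)$,
\begin{align*}
  \max_{i,\ell}\; \|\mathbb{P}_{\mathcal{S}_\ell} z_i\| \;\le\; \delta\sqrt{d_\ell}\,\epsilon, && \max_{i,j}\; |\langle z_i, u_j\rangle| \;\le\; \delta\,\epsilon,
\end{align*}
where each $u_j$ is any deterministic (or conditionally independent) unit vector and $\epsilon = \sqrt{6\log N/(n-\max_\ell d_\ell)}$. These are the random-noise analogues of the worst-case bounds ``$\delta_1\le\delta$'' and ``$\delta\le\delta$'' used in the proof of Theorem~\ref{thm:thm_general}. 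Substituting them into the feasibility certificate produces the replacements $\delta_1 \to \delta\sqrt{d_\ell}\,\epsilon$ in the numerator and denominator of the upper bound on $\lambda$, and $\delta \to \delta\epsilon$ in the lower bound on $\lambda$, which delivers exactly the $\lambda$-range in \eqref{eq:thm_rand_noise_lambda_range} together with the two scalar conditions in the hypothesis.

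The main obstacle is the coupling between the dual certificate and the noise: the dual variable $\nu_i$ constructed for the $i$-th column of subspace $\mathcal{S}_\ell$ depends on the entire matrix $X_{-i}^{(\ell)}$, so for $x_j$ in a different subspace, $\nu_i$ is not independent of $z_j$, and the concentration bound on $\langle z_j,\nu_i\rangle$ cannot be applied directly. I would handle this via a standard leave-one-out construction: introduce a fictitious dual $\tilde{\nu}_i$ defined from a restricted program that uses only the clean in-subspace samples $Y^{(\ell)}_{-i}$ and the single noise column $z_i$, so that $\tilde{\nu}_i$ is independent of the noise in all other subspaces. The concentration bounds apply cleanly to $\tilde{\nu}_i$, and a deterministic perturbation argument (using $\|X - Y\|_{\mathrm{op}} \le \delta$) controls $\|\nu_i - \tilde{\nu}_i\|$; this perturbation is precisely what turns the ``$2 + r_\ell - \delta_1$'' in Theorem~\ref{thm:thm_general} into ``$3 + r_\ell - \delta\sqrt{d_\ell}\epsilon$'' in the randomized statement. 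After these substitutions the remainder is mechanical: one verifies that the hypotheses imply the sharpened deterministic conditions, checks non-emptiness of the $\lambda$-interval by comparing the two bounds (this is where the second scalar condition $\epsilon\delta(1+\delta) < \min_\ell r(r_\ell-\mu_\ell)/(4r_\ell+6)$ is used), and collects all high-probability events into a single union bound of total failure probability $9/N$.
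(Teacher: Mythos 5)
Your high-level strategy is the paper's: replace the worst-case bounds $\delta_1\le\delta$ and $|\langle z,\cdot\rangle|\le\delta\|\cdot\|$ by spherical-cap concentration (giving $\delta_1\le\delta\sqrt{d_\ell}\,\epsilon$ and $|\langle z,u\rangle|\le\delta\epsilon\|u\|$), push these through the deterministic argument to get the sharpened dual-separation and non-triviality conditions, and close with a union bound and a non-emptiness check. However, the device you build the proof around --- a leave-one-out dual $\tilde\nu_i$ computed from the \emph{clean} in-subspace samples plus a perturbation bound on $\|\nu_i-\tilde\nu_i\|$ --- is aimed at a dependence problem that does not exist. The certificate is the optimizer of the fictitious dual $\mathbf{D}_1$ in \eqref{eq:dual_fictitious2}, whose data are $x_i^{(\ell)}$ and $X_{-i}^{(\ell)}$ only; under the columnwise-iid noise model it is therefore already independent of the noise $z_j$ of any point $x_j$ from another subspace, and Lemma~\ref{lemma:spherical_cap} applies directly to $\langle z_j,\nu_i\rangle$. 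The dependence that actually needs care is elsewhere: $\nu_2=\lambda\,\mathbb{P}_{\mathcal{S}_\ell^\perp}(z_i-Z_{-i}^{(\ell)}c)$ has weights $c$ that themselves depend on the in-subspace noise, so the uniformity of its direction in $\mathcal{S}_\ell^\perp$ (needed for $\cos\angle(y,\nu_2)$ and for the refined constraint \eqref{eq:rand_relax_constraint}) must come from a rotational-invariance argument, not from your lemma, which assumes one of the two vectors is deterministic or independent.

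The perturbation step is also a genuine gap on its own terms. First, $\|X-Y\|_{\mathrm{op}}\le\delta$ is false --- only the columns of $Z$ are bounded by $\delta$, and $\|Z\|_{\mathrm{op}}$ can be of order $\delta\sqrt{N/n}$ or worse. Second, even with a correct perturbation of the feasible polytope $\{\nu:\|A^T\nu\|_\infty\le1\}$, the displacement of the dual optimizer scales like $\delta$ divided by powers of the inradius, not like $\delta$, which would not reproduce the constants in \eqref{eq:thm_rand_noise_lambda_range}; and your attribution of the change from ``$2+r_\ell-\delta_1$'' to ``$3+r_\ell-\delta\sqrt{d_\ell}\epsilon$'' to this perturbation is incorrect --- in the actual derivation that ``$3$'' comes from collecting the $\lambda\epsilon\delta^2$-type terms when bounding $|\langle x,\nu\rangle|$ directly with the noisy certificate (the step yielding $\mu+\delta\epsilon+3\rho\epsilon$ in Lemma~\ref{lemma:dual_sep_random}). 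Finally, your outline never supplies the two bounds that do the real work and must be re-derived in the random setting, namely $\|\nu_2\|\le\lambda\delta(1/r(\mathcal{Q}_{-i}^{\ell})+1)$ from optimality of $\mathbf{D}_1$ and $\|\nu_1\|\le(1+\delta\epsilon\|\nu_2\|)/(r(\mathcal{Q}_{-i}^{\ell})-\delta_1)$ from the circumradius--inradius duality; ``substituting into the feasibility certificate'' presupposes them. The skeleton is right, but as written the argument would not compile into the stated theorem.
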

\paragraph{Low SnR paradigm.} Compared to Theorem~\ref{thm:thm_general}, Theorem~\ref{thm:thm_random_noise} considers a more benign noise which leads to a stronger result. In particular, without assuming any statistical model on how data are generated, we show that Lasso-SSC is able to tolerate noise of level $O\left((\frac{n}{\log N})^{1/4}(r(r_\ell-\mu_\ell))^{1/2}\right)$ or $O\left((\frac{n}{d\log N})^{1/2}(r_\ell-\mu_\ell)\right)$ (whichever is smaller). This extends SSC's guarantee with deterministic data to cases where the noise can be significantly larger than the signal. In fact, the SnR can go to $0$ as the ambient dimension gets large.

On the other hand, Theorem~\ref{thm:thm_random_noise} shows that Lasso-SSC is able to tolerate a constant level of noise when the geometric gap $r_\ell-\mu_\ell$ is as small as $O(\sqrt{d/n})$. This is arguably near-optimal (when $d$ is small) as the projection of a constant-level random noise into a $d$-dimensional subspace has an expected magnitude of the same order, which could easily close up the small geometric gap for some non-trivial probability if the noise is much larger.


\paragraph{Margin of error.}
Since the bound depends critically on $(r_\ell-\mu_\ell)$~--~the difference of inradius and incoherence~--~which is the geometric gap that appears in the noiseless guarantee of \citet{soltanolkotabi2011geometric}. We will henceforth call this gap the \emph{margin of error}.

We now analyze this margin of error under different generative models. We
start from the semi-random model, where the distance between two subspaces is measured as follows.
\begin{definition}\label{def:subspace_affinity}
The {\em affinity} between two subspaces is defined by:
$$ \mathrm{aff}(\mathcal{S}_k,\mathcal{S}_{\ell}) = \sqrt{\cos^2{\theta^{(1)}_{k\ell }}+...+\cos^2{\theta^{(\min(d_k,d_{\ell}))}_{k\ell}}},$$
where $\theta_{k\ell}^{(i)}$ is the $i^{th}$ canonical angle between the two subspaces. Let $U_{k}$ and $U_{\ell}$ be a set of orthonormal bases of each subspace, then $\mathrm{aff}(\mathcal{S}_k,\mathcal{S}_{\ell})=\|U_{k}^TU_{\ell}\|_F$.
\end{definition}
When data points are randomly sampled from each subspace, the geometric entity $\mu(\mathcal{X}_{\ell})$ can be expressed using this (more intuitive) subspace affinity, which leads to the following theorem.

\begin{theorem}[Semi-random model+random noise]\label{thm:semirandom}
Under the semi-random model with random noise, there exists a non-empty range of $\lambda$ such that LASSO subspace detection property holds with probability $1- \frac{9}{N} - \frac{1}{L^2}\sum_{\ell\neq \ell^\prime}\frac{1}{(N_{\ell}+1)N_{\ell^\prime}} e^{-\frac{t}{4}} -6\sum_{\ell=1}^L (e^{\gamma_1 (n-d_\ell)}+e^{\gamma_2 d_\ell}+e^{-\sqrt{N_{\ell}d_{\ell}}})$ as long as the noise level obeys
\begin{align*}
 \delta(1+\delta) \leq& \max_{\ell,\ell'}\sqrt{\frac{n-d}{6\log N}} \frac{\sqrt{\log \kappa}}{40K_2\sqrt{dd_\ell}}\left(1- \frac{K_1K_2 \mathrm{aff}(\mathcal{S}_{\ell},\mathcal{S}_{\ell^\prime})}{\sqrt{ d_{\ell^\prime}}} \right),
\end{align*}
where $K_1:= (t \log  [(N_{\ell}+1)N_{\ell^\prime}] + \log L)$, $K_2 := 4\sqrt{\frac{1}{\log\kappa_\ell}}$, $\kappa_\ell := N_{\ell}/d_{\ell}$, $\frac{\log\kappa}{d} :=\min_{\ell} \frac{\log\kappa_\ell}{d_\ell}$, and
$\gamma_1,\gamma_2$ are absolute constants.
\end{theorem}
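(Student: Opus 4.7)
The plan is to derive this theorem as a consequence of Theorem~\ref{thm:thm_random_noise} by controlling the two geometric quantities $r_\ell$ and $\mu_\ell$ under the extra randomness of the semi-random model. Specifically, I will produce high-probability lower bounds on the inradius and upper bounds on the projected subspace incoherence, then substitute them into the noise condition of Theorem~\ref{thm:thm_random_noise} and simplify to the form stated.

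First, for the inradius: since the clean points $y_1^{(\ell)},\dots,y_{N_\ell}^{(\ell)}$ are i.i.d.\ uniform on $\cS_\ell\cap S^{n-1}$, I would invoke the classical bound on the inradius of the symmetric convex hull of uniform points on the sphere (a lemma of the type used by \citet{soltanolkotabi2011geometric}, originally due to Alonso-Gutierrez), which yields
$$r(\cQ^{(\ell)}_{-i})\;\geq\;c\,\sqrt{\frac{\log\kappa_\ell}{d_\ell}}$$
with probability at least $1-e^{-\sqrt{N_\ell d_\ell}}$, and then union-bound over $i$. The same probabilistic step also requires two auxiliary concentration inequalities that explain the $e^{\gamma_1(n-d_\ell)}$ and $e^{\gamma_2 d_\ell}$ failure terms: one controls $\mathbb{P}_{\cS_\ell}z_i$ for the random noise (so that $\delta_1$ is on the order $\delta\sqrt{d_\ell/n}$), and one controls the ambient noise magnitude (matching the $9/N$ term from the random-noise model applied within Theorem~\ref{thm:thm_random_noise}).

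The main work is the upper bound on $\mu_\ell$. Fix $\ell\neq \ell'$, write any $y\in\cY_{\ell'}$ as $y=U_{\ell'}a$ with $a$ uniform on $S^{d_{\ell'}-1}$, and write the projected dual direction as $v_i^{(\ell)}=U_\ell w$ for some unit $w\in\R^{d_\ell}$. The critical observation is that $v_i^{(\ell)}$ is defined through the dual program in Definition~\ref{def:proj_dual_direction} from the data and noise within $\cX^{(\ell)}$ only, and is therefore independent of $a$. Conditioning on $v_i^{(\ell)}$,
$$\langle v_i^{(\ell)},y\rangle\;=\;w^TU_\ell^TU_{\ell'}a,$$
where the fixed vector $U_\ell^TU_{\ell'}w$ satisfies $\|U_\ell^TU_{\ell'}w\|_2\leq \|U_\ell^TU_{\ell'}\|_F=\mathrm{aff}(\cS_\ell,\cS_{\ell'})$. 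Standard spherical concentration for the inner product of a fixed vector with a uniformly random direction on $S^{d_{\ell'}-1}$ then yields
$$|\langle v_i^{(\ell)},y\rangle|\;\leq\; \frac{K_1K_2\,\mathrm{aff}(\cS_\ell,\cS_{\ell'})}{\sqrt{d_{\ell'}}}$$
with the failure probability $L^{-2}((N_\ell+1)N_{\ell'})^{-1}e^{-t/4}$ stated in the theorem; a union bound over $i\in[N_\ell]$, $y\in\cY_{\ell'}$, and pairs $(\ell,\ell')$ provides the bound on $\mu_\ell$ with exactly the constants that appear in the statement.

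Finally, plugging $r_\ell\gtrsim \sqrt{\log\kappa_\ell/d_\ell}$ and $\mu_\ell\leq K_1K_2\,\mathrm{aff}/\sqrt{d_{\ell'}}$ into the noise condition $\epsilon\delta(1+\delta)\leq r(r_\ell-\mu_\ell)/(4r_\ell+6)$ of Theorem~\ref{thm:thm_random_noise}, together with $\epsilon\leq\sqrt{6\log N/(n-d)}$, rearranges algebraically into the bound stated for $\delta(1+\delta)$; non-emptiness of the $\lambda$-range then follows from Theorem~\ref{thm:thm_random_noise}. The hard part is the incoherence step: a priori one must rule out any alignment between the noisy, data-dependent dual direction $v_i^{(\ell)}$ and the random external points $y\in\cY_{\ell'}$. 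The clean resolution is the independence argument above---$v_i^{(\ell)}$ is a function of $\cX^{(\ell)}$ alone, so the spherical symmetry of $a$ is preserved under conditioning, and the Frobenius-norm bound $\|U_\ell^TU_{\ell'}w\|_2\leq\mathrm{aff}(\cS_\ell,\cS_{\ell'})$ translates via spherical concentration into the $1/\sqrt{d_{\ell'}}$ improvement that drives the theorem.
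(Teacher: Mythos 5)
Your overall architecture is the right one and matches the paper: bound the inradius via the uniform-sampling lemma, bound the projected subspace incoherence, and substitute both into the noise condition of Theorem~\ref{thm:thm_random_noise}. Your observation that $v_i^{(\ell)}$ is a function of $\mathcal{X}^{(\ell)}$ alone and hence independent of the external point $y$ is also correct and is used in the paper.

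However, there is a genuine gap in your incoherence step, and it is precisely the step that carries the technical content of the theorem. By conditioning on $v_i^{(\ell)}=U_\ell w$ and using only the worst-case bound $\|U_{\ell'}^TU_\ell w\|\leq\|U_\ell^TU_{\ell'}\|_F=\mathrm{aff}(\mathcal{S}_\ell,\mathcal{S}_{\ell'})$, spherical concentration over $a$ alone can only give $\mu_\ell\lesssim \sqrt{\log(\cdot)}\,\mathrm{aff}/\sqrt{d_{\ell'}}$. What the theorem actually requires (Lemma~\ref{lemma:deterministic_incoherence}) is $\mu_\ell\leq K_1\,\mathrm{aff}/(\sqrt{d_\ell}\sqrt{d_{\ell'}})$ — an extra factor of $1/\sqrt{d_\ell}$. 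This matters: since $r_\ell\approx 1/(K_2\sqrt{d_\ell})$, the stated factor $\bigl(1-K_1K_2\,\mathrm{aff}/\sqrt{d_{\ell'}}\bigr)$ is exactly $(r_\ell-\mu_\ell)/r_\ell$ under the sharper bound. With your bound you would instead obtain $\bigl(1-K_1K_2^2\sqrt{d_\ell}\,\mathrm{aff}/\sqrt{d_{\ell'}}\bigr)$, i.e., a theorem that tolerates affinities smaller by a factor of order $\sqrt{d_\ell/\log\kappa_\ell}$ and loses the claim about overlapping subspaces. (You appear to have read the quantity $K_1K_2\,\mathrm{aff}/\sqrt{d_{\ell'}}$ off the theorem statement as the bound on $\mu_\ell$, but that quantity is $\mu_\ell/r_\ell$, not $\mu_\ell$.)

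The missing idea is to exploit the distribution of $w$ itself, not just of $a$. The paper proves that under the semi-random model with spherically symmetric noise, the projected dual direction $v_i^{(\ell)}$ is \emph{uniformly distributed} on the unit sphere of $\mathcal{S}_\ell$. This is not automatic — $v_i^{(\ell)}$ is a complicated function of noisy data — and is established by a rotational equivariance argument for the dual program $\mathbf{D}_1$: rotating the in-subspace components of all inputs by $URU^T$ maps the optimal dual to $URU^T\nu_1+\nu_2$ (check that objective and constraints are preserved), while the input distribution is invariant under such rotations, so $v_i^{(\ell)}(R)\sim URU^Tv_i^{(\ell)}$ for every $R$, forcing uniformity. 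Once $w$ is uniform on $S^{d_\ell-1}$ and independent of $a$, $\|U_{\ell'}^TU_\ell w\|$ concentrates around $\|U_\ell^TU_{\ell'}\|_F/\sqrt{d_\ell}$ (Borell's inequality, as in Lemma~7.5 of \citealp{soltanolkotabi2011geometric}), which supplies the missing $1/\sqrt{d_\ell}$ and yields the theorem as stated.
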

The proof is essentially substituting the incoherence and inradius parameters in Theorem~\ref{thm:thm_random_noise} with meaningful bounds, so Thereom~\ref{thm:semirandom} can be regarded as a corollary of Theorem~\ref{thm:thm_random_noise}.

\paragraph{Overlapping subspaces.}
Similar to the results in \citet{soltanolkotabi2011geometric}, Theorem~\ref{thm:semirandom} demonstrates that LASSO-SSC can handle overlapping subspaces with noisy samples. By Definition~\ref{def:subspace_affinity}, $\mathrm{aff}(\mathcal{S}_k,\mathcal{S}_{\ell})$ can be small even if $\mathcal{S}_k$ and $\mathcal{S}_{\ell}$ share a basis.

\paragraph{Comparison to \citet{soltanolkotabi2013robust}.}
In the high dimensional setting when $n\gg d$, our result is able to handle the low SnR regime when $\delta = \Theta(n^{1/4}/d^{1/2})$, while \citet{soltanolkotabi2013robust} needs $\delta$ to be bounded by a small constant.

In the case when $d$ is a constant fraction of $n$, however, our bound is worse by a factor of $\sqrt{d}$. \citet{soltanolkotabi2013robust} is still able to handle a small constant noise while we needs $\delta < O(\frac{1}{\sqrt{d}})$. The suboptimal bound might be due to the fact that we are simply developing the theorem for the semirandom model as a corollary of Theorem~\ref{thm:thm_random_noise} and haven not fully exploit the structure of the semi-random model in the proof.


We now turn to the fully random case.
\begin{theorem}[Fully random model]\label{thm:fullrandom}
Suppose there are $L$ subspaces each with dimension $d$, chosen independently and uniformly at random. For each subspace, $\kappa d+1$ points are chosen independently and uniformly from the unit sphere inside each subspace. Each measurement is corrupted by iid Gaussian noise $\sim N(0,\sigma^2/n)$. Furthermore, if
\begin{align*}
  d < \frac{c(\kappa)^2\log\kappa}{24\log N} n, &&\text{and} && \sigma(1+\sigma) < \frac{c(\kappa)^2\log \kappa  }{20}\frac{\sqrt{n} }{d},
\end{align*}
then with probability at least $1-\frac{10}{N}-Ne^{-\sqrt{\kappa}d}$, the LASSO subspace detection property holds for any $\lambda$ in the range
\begin{equation}\label{eq:thm_rand_lambda_range}
  \frac{C_1\sqrt{d}}{c(\kappa)\sqrt{\log \kappa}}<\lambda <  \frac{C_2c(\kappa)\sqrt{n\log\kappa}}{\sigma\sqrt{d\log N}},
\end{equation}
which is guaranteed to be non-empty. Here, $C_1,C_2$ are absolute constants.
\end{theorem}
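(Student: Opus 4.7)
The plan is to derive Theorem~\ref{thm:fullrandom} as a corollary of either Theorem~\ref{thm:thm_random_noise} or Theorem~\ref{thm:semirandom}, specializing each of the deterministic quantities that appears in their hypotheses---the minimum inradius $r$, the projected incoherence $\mu_\ell$, the noise column-norm bound $\delta$, and the subspace affinity---to the high-probability value it takes in the fully random model. Since the Gaussian noise with entries $N(0,\sigma^2/n)$ satisfies Definition~\ref{def:Random_noise_model} (columnwise iid and spherically symmetric), the semi-random bounds of Theorem~\ref{thm:semirandom} apply once the random subspace structure is plugged in.

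First I would handle the subspace affinity: for two independent uniformly-random $d$-dimensional subspaces, $\mathrm{aff}(\cS_\ell,\cS_{\ell'})^2 = \|U_\ell^T U_{\ell'}\|_F^2$ has mean $d^2/n$ by rotation invariance, and a Gaussian/$\chi^2$-type concentration argument (e.g., realize $\cS_{\ell'}$ as the column span of an $n\times d$ Gaussian frame and apply Hanson--Wright, or use standard bounds on principal submatrices of a uniform orthogonal matrix) yields $\mathrm{aff}(\cS_\ell,\cS_{\ell'}) \leq C d/\sqrt{n}$ for a fixed pair with probability at least $1-e^{-c\kappa d}$. A union bound over the at most $\binom{L}{2}$ pairs is absorbed into the $N e^{-\sqrt{\kappa}d}$ term of the final probability statement.

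Second, I would use the inradius/incoherence concentration already built into Theorem~\ref{thm:semirandom}: when $\kappa d + 1$ samples are drawn uniformly on the unit sphere of $\cS_\ell$, the inradius concentrates at $r_\ell \geq c(\kappa)\sqrt{\log\kappa/d}$ and the projected incoherence satisfies $\mu_\ell \leq K_1 K_2\,\mathrm{aff}(\cS_\ell,\cS_{\ell'})/\sqrt{d_{\ell'}}$, both with probability at least $1 - e^{\gamma_1(n-d)} - e^{\gamma_2 d} - e^{-\sqrt{\kappa}d}$ per subspace. Substituting the affinity bound from the previous step gives $\mu_\ell \leq (c(\kappa)/2)\sqrt{\log\kappa/d}$, so that the margin of error $r_\ell-\mu_\ell$ is bounded below by $(c(\kappa)/2)\sqrt{\log\kappa/d}$ precisely when $d < c(\kappa)^2\log\kappa\cdot n/(24\log N)$---exactly the first hypothesis of the theorem. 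Translating the Gaussian noise tail via Lemma~\ref{lemma:random_gaussian} gives $\delta = \max_i\|z_i\| \leq \sigma\sqrt{1+6\log N/n} \leq 2\sigma$ with probability at least $1-C/N^2$ in the regime $\log N \ll n$. Plugging the specialized $r$, $\mu_\ell$ and $\delta$ into the sufficient noise condition of Theorem~\ref{thm:thm_random_noise} recovers the stated $\sigma(1+\sigma) < c(\kappa)^2\log\kappa\sqrt{n}/(20 d)$, and plugging them into \eqref{eq:thm_rand_noise_lambda_range} produces \eqref{eq:thm_rand_lambda_range}. A final union bound over the inradius/incoherence failures for each of the $L\leq N$ subspaces, the $\binom{L}{2}$ affinity failures (absorbed into $N e^{-\sqrt{\kappa}d}$), the Gaussian noise tail, and the $9/N$ term inherited from Theorem~\ref{thm:thm_random_noise} yields the claimed failure probability $10/N + N e^{-\sqrt{\kappa}d}$.

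The main technical obstacle is the subspace-affinity concentration: after a union bound over the $\binom{L}{2}$ pairs the bound must remain sharp enough for $r_\ell-\mu_\ell$ to retain a constant fraction of its nominal value $c(\kappa)\sqrt{\log\kappa/d}$ simultaneously across all pairs, and this balancing act is what fixes the precise form of the scalar $c(\kappa)$ and the cutoff $d < c(\kappa)^2\log\kappa\cdot n/(24\log N)$. Everything else---verifying non-emptiness of the $\lambda$ range \eqref{eq:thm_rand_lambda_range} under the noise hypothesis, invoking the Gaussian tail, and consolidating the several probability terms into the clean form $10/N + N e^{-\sqrt{\kappa}d}$---is routine but arithmetically involved.
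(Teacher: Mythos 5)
Your overall skeleton --- specialize Theorem~\ref{thm:thm_random_noise} by substituting high-probability values for the inradius, the incoherence and $\delta$, then union-bound --- is exactly the paper's strategy, and your treatment of the inradius (Lemma~\ref{lemma:random_inradius} with $\beta=1/2$, giving $r_\ell\gtrsim c(\kappa)\sqrt{\log\kappa/d}$) and of $\delta$ via Lemma~\ref{lemma:random_gaussian} matches the paper. The gap is in how you bound $\mu_\ell$. The paper does \emph{not} route the fully random case through the subspace affinity and Theorem~\ref{thm:semirandom}; it uses a direct argument (Lemma~\ref{lemma:random_incoherence}): because the projected dual directions $v_i^{(\ell)}$ are uniformly distributed on the unit sphere of $\cS_\ell$ and the subspaces themselves are uniformly random, each pair $(v_i^{(\ell)},y)$ with $y$ external behaves like a pair of independent uniform directions in $\R^n$, so Lemma~\ref{lemma:spherical_cap} plus a union bound over $N^2$ pairs gives $\mu_\ell\leq\sqrt{6\log N/n}$ directly. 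Comparing this with $r/2=(c(\kappa)/2)\sqrt{\log\kappa/d}$ yields exactly the stated cutoff $d<\frac{c(\kappa)^2\log\kappa}{24\log N}n$.

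Your route instead passes through Lemma~\ref{lemma:deterministic_incoherence}, which bounds $\mu_\ell\leq t\left(\log[(N_\ell+1)N_{\ell'}]+\log L\right)\frac{\mathrm{aff}(\cS_\ell,\cS_{\ell'})}{\sqrt{d_\ell d_{\ell'}}}$. Even granting your (correct) concentration claim $\mathrm{aff}(\cS_\ell,\cS_{\ell'})\lesssim d/\sqrt{n}$, this gives $\mu_\ell\lesssim\frac{\log N}{\sqrt{n}}$ rather than $\sqrt{\frac{\log N}{n}}$: the affinity-based bound carries an unavoidable multiplicative $K_1=\Theta(\log N)$ factor coming from the union bound inside that lemma. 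Requiring $\frac{\log N}{\sqrt{n}}\leq\frac{c(\kappa)}{2}\sqrt{\frac{\log\kappa}{d}}$ forces $d\lesssim\frac{c(\kappa)^2\log\kappa}{(\log N)^2}\,n$, which is stronger than the theorem's hypothesis by a factor of $\log N$. So your assertion that the substitution yields $\mu_\ell\leq(c(\kappa)/2)\sqrt{\log\kappa/d}$ ``precisely when'' $d<\frac{c(\kappa)^2\log\kappa}{24\log N}n$ does not hold; as written, your argument proves the conclusion only under a more restrictive dimension condition, i.e., a weaker theorem. The fix is to abandon the affinity detour in the fully random case and bound the inner products $\langle v_i^{(\ell)},y\rangle$ directly via the spherical-cap lemma, which also removes the extra $e^{-t/4}$ and affinity-concentration failure terms from your probability accounting.
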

The results under this simple model are very interpretable. It
 provides intuitive guideline in how robustness of Lasso-SSC change with respect to the various parameters of the data.
 One one hand, it is sensitive to the dimension of each subspace $d$, since the $\sigma \leq \tilde{\Theta}(\frac{n^{1/4}}{\sqrt{d}})$. This dependence on subspace dimension $d$ is not a critical limitation as most interesting applications indeed have very low subspace-dimension, as summarized in Table~\ref{tab:low_rank}.
On the other hand, the dependence on the number of subspaces $L$ (in both $\log\kappa$ and $\log N$ since $N=L(\kappa d+1)$) is only logarithmic.  This suggests that SSC is robust even when there are many clusters, and $Ld\gg n$.


%

\begin{table}
  \centering
 \begin{tabular}{|p{0.6\linewidth}|c|}
   \hline
   \textbf{Application} & \textbf{Cluster rank}\\
   \hline
   3D motion segmentation \citep{costeira1998motion_seg} & $\mathrm{rank}=4$ \\\hline
   Face clustering (with shadow) \citep{basri2003lambertianface} & $\mathrm{rank}=9$ \\\hline
   Diffuse photometric face \citep{zhou2007PhotometricFace}& $\mathrm{rank}=3$ \\\hline
   Network topology discovery \citep{eriksson2011high_rankMC} & $\mathrm{rank}=2$ \\\hline
   Hand writing digits \citep{hastie1998MNIST}&  $\mathrm{rank}=12$\\\hline
   Social graph clustering \citep{xu2011graphclustering}& $\mathrm{rank}=1$ \\
   \hline
 \end{tabular}
  \caption{Rank of real subspace clustering problems}\label{tab:low_rank}
\end{table}


\subsection{Geometric interpretations}
A geometric illustration of the condition in Theorem~\ref{thm:thm_general} is given in Figure~\ref{fig:geom_interpretation} in comparison to the geometric separation condition in the noiseless case.

\begin{figure}
        \centering
        \begin{subfigure}[t]{0.4\textwidth}
          \centering
              \includegraphics[width=\textwidth]{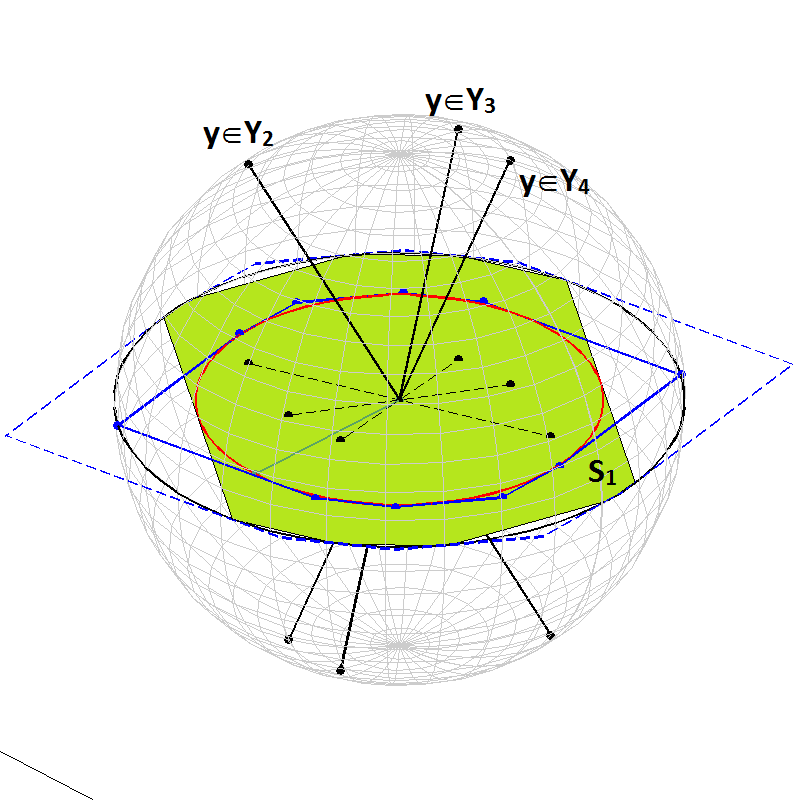}\\
               \caption{Noiseless SSC}
               \label{fig.noiseless_guarantee}
        \end{subfigure}%
        ~ 
        \begin{subfigure}[t]{0.4\textwidth}
              \centering
              \includegraphics[width=\textwidth]{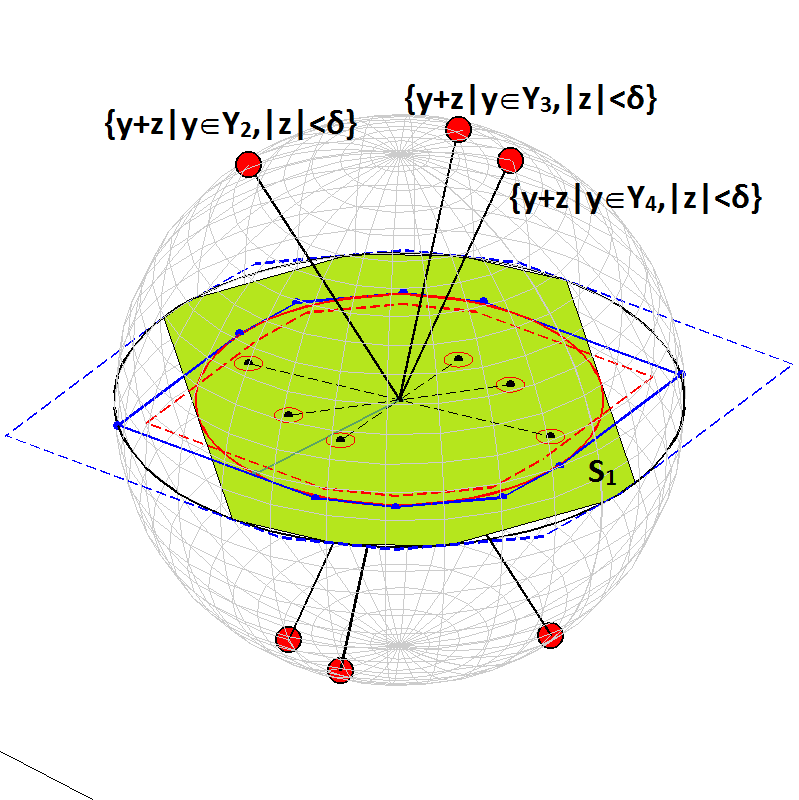}\\
              \caption{Noisy LASSO-SSC} \label{fig.noisy_guarantee}
        \end{subfigure}
   \caption{Geometric interpretation and comparison of the noiseless SSC (\textbf{Left}) and noisy LASSO-SSC (\textbf{Right}).
   }
   \label{fig:geom_interpretation}
\end{figure}

The left pane depicts the separation condition $\mu_\ell \leq r_\ell$ in Theorem~2.5 of \citet{soltanolkotabi2011geometric}. The blue polygon represents the the intersection of halfspaces defined with dual directions that are also the tangent to the red inscribing sphere. More precisely, this is $\left\{x\in \cS_\ell \middle| \big|\langle v_i^{\ell}, x\rangle\big| \leq r_\ell\right\}$. From our illustration of $\mu$ in Figure~\ref{fig:SubspaceIncoherence}, we can easily tell that $\mu_\ell\leq r_\ell$ if and only if the projection of external data points fall inside this solid blue polygon. We call this solid blue polygon the successful region.

The right pane illustrates our guarantee of Theorem~\ref{thm:thm_general} under bounded deterministic noise. The successful condition requires that the whole red ball (analogous to uncertainty set in Robust Optimization \citep{ben1998robust,bertsimas2004price}) around each external data point to fall inside the dashed red polygon, which is smaller than the blue polygon by a factor related to the noise level and the inradius.

The successful region is affected by the noise because the design matrix is also arbitrarily perturbed and the dual solution is no longer within each subspace $\cS_\ell$. Specifically, as will become clear in the proof, the key of showing SEP boils down to proving
$
\langle \nu_i^{(\ell)}, x_j\rangle < 1
$
for all pairs of $(\nu_i^{(\ell)}, x_j)$ where
$$ \nu_i^{(\ell)}=\arg\max_{\nu} \langle \nu, x_i^{(\ell)}\rangle  -\frac{1}{2\lambda}\|\nu\|^2 \text{ s.t. }  \|\nu^TX_{-i}^{(\ell)} \|_\infty \leq 1,$$
and $x_j$ is any point from another subspace. In the noiseless case we can always take $\nu_i^{(\ell)}\in\cS_\ell$ and $\langle \nu_i^{(\ell)}, x_j\rangle \leq \frac{\mu_\ell}{r_\ell}$. For noisy data and Lasso-SSC, we can no longer do that. In fact, for any fixed $\lambda$, the dual solution will be uniquely determined by a projection of $\lambda x_i^{(\ell)}$ on to the feasible region $\|\nu^TX_{-i}^{(\ell)} \|_\infty \leq 1$ (see the first pane of Figure~\ref{fig:SubspaceIncoherence}). The absolute value of the inner product $\langle \nu_i^{(\ell)}, x_j\rangle$ will depend on the magnitude of the dual solution, especially its component perpendicular to the current subspace. Indeed by carefully choosing the error, we can make $\mathbb{P}_{\cS_\ell^{\perp}}\nu$ very correlated with some external data point $x_j$.


To illustrate this further, we plot the shape of this feasible region in 3D (see Figure~\ref{fig.convex_hull_n_polar}(b)). From the feasible region alone, it seems that the magnitude of dual variable can potentially be quite large. Luckily, the quadratic penalty in the objective function allows us to exploit the optimality of the solution $\nu$ and bound the ``out-of-subspace'' component of $\nu$, which results in a much smaller region where the solution can potentially be (given in Figure~\ref{fig.convex_hull_n_polar}(c)). The region for the ``in-subspace'' component is also smaller as is shown in Figure~\ref{fig.ProjPolar}. A more detailed argument of this is given in Section~\ref{sec:dual_separation} of the proof.

\begin{figure}
  \centering
    \includegraphics[width=0.7\linewidth]{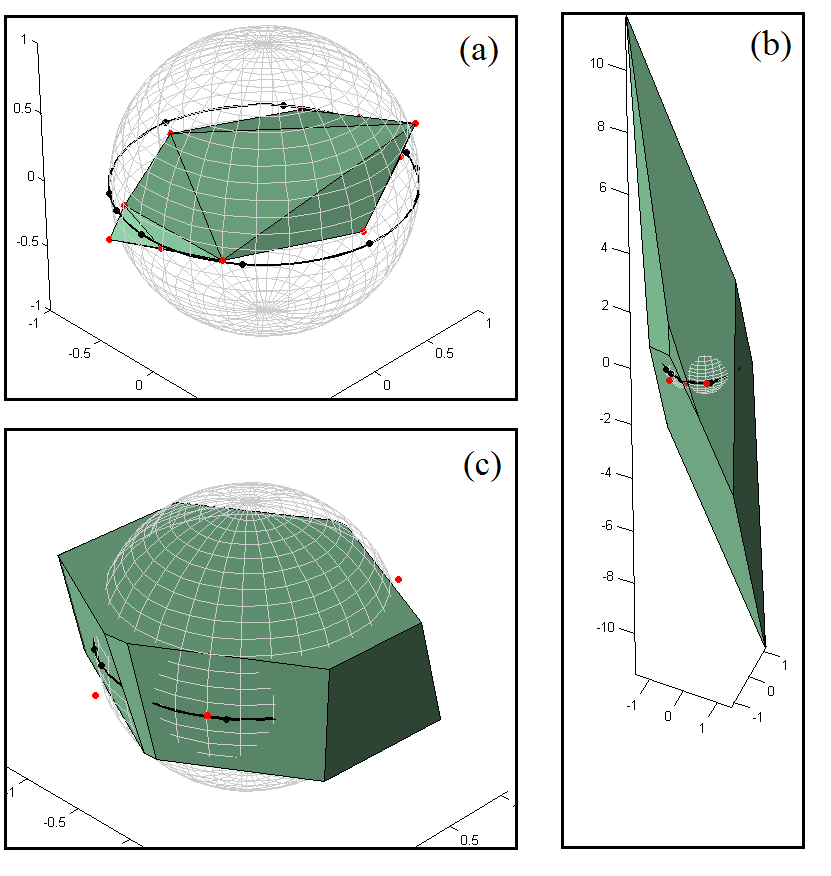}\\
  \caption{Illustration of \textbf{(a)} the convex hull of noisy data points, \textbf{(b)} its polar set and \textbf{(c)} the intersection of polar set and $\|\nu_2\|$ bound. The polar set (b) defines the feasible region of \eqref{eq:dual_fictitious2}. It is clear that $\nu_2$ can take very large value in (b) if we only consider feasibility. By considering optimality, we know the optimal $\nu$ must be inside the region in (c).} \label{fig.convex_hull_n_polar}
    \includegraphics[width=0.6\linewidth]{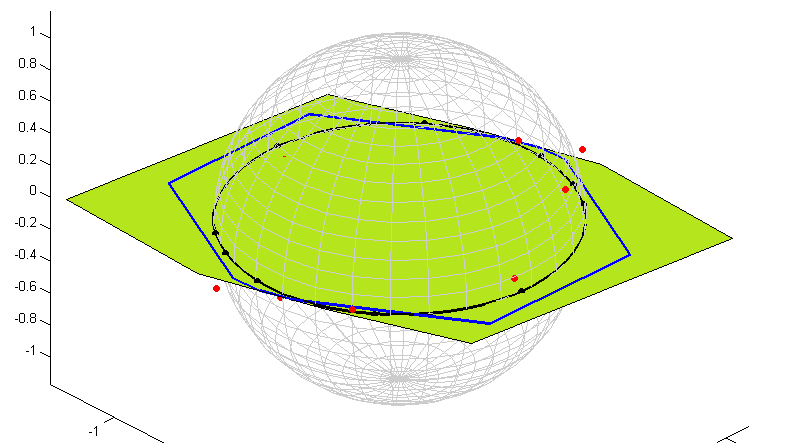}\\
  \caption{The projection of the polar set (the green area) in comparison to the projection of the polar set with $\|\nu_2\|$ bound (the blue polygon). It is clear that the latter is much smaller.}\label{fig.ProjPolar}
\end{figure}

Admittedly, the geometric interpretation under noise is slightly messier than the noiseless case, but it is clear that the largest deterministic noise Lasso-SSC can tolerate must be smaller than geometric gap $r_\ell-\mu_\ell$. Theorem~\ref{thm:thm_general} show that a sufficient condition is $\delta \leq O(r(r_\ell-\mu_\ell))$. It remains unclear whether this gap can be closed without additional assumptions.


Finally, we note that for the random noise model in Theorem~\ref{thm:thm_random_noise}, the geometric interpretation is similar, except that the impact of the noise is weakened. Thanks to the randomness and the corresponding concentration of measure, we may bound the reduction of the successful region with a much smaller value comparing to the adversarial noise case.


\section{Proof of the Deterministic Result}\label{sec:proof_deterministic}
In this section, we provide the proof for Theorem~\ref{thm:thm_general}.

Instead of analyzing \eqref{eq:Lasso} directly, we consider an equivalent constrained version by introducing slack variable $e_i$:
\begin{equation}\label{eq:Opt_original}
\begin{aligned}
\mathbf{P}_0:\quad \min_{c_i, e_i} \; \|c_i\|_1+\frac{\lambda}{2}\|e_i\|^2 \quad
s.t. \quad x^{(\ell)}_i=X_{-i}c_i+e_i.
\end{aligned}
\end{equation}
The constraint can be rewritten as
\begin{equation}\label{eq:Opt_original_equi}
y^{(\ell)}_i+z^{(\ell)}_i=(Y_{-i}+Z_{-i})c_i+e_i.
\end{equation}
The dual program of \eqref{eq:Opt_original} is:
\begin{equation}\label{eq:Opt_original_dual}
\begin{aligned}
\mathbf{D}_0:\quad \max_{\nu} \; \langle x_i,\nu \rangle - \frac{1}{2\lambda}\nu^T\nu \quad
s.t. \quad \|(X_{-i})^T\nu\|_{\infty} \leq 1.
\end{aligned}
\end{equation}
Recall that we want to establish the conditions on noise magnitude $\delta$, structure of the data ($\mu$ and $r$ in the deterministic model and affinity in the semi-random model), and ranges of valid $\lambda$ such that by Definition~\ref{def:lasso_detection}, the solution $c_i$ is \emph{non-trivial} and has support indices inside the column set $X^{(\ell)}_{-i}$ (i.e., satisfies SEP).



The proof  is hence organized into three main steps:
\begin{enumerate}
  \item[(1)] Proving SEP by duality. First we establish a set of conditions on the optimal dual variable of $D_0$ corresponding to all primal solutions satisfying SEP. Then  we construct such a dual variable $\nu$ as a certificate of proof. This is presented in Section~\ref{sec:optimality},~\ref{sec:construct_nu}~and~\ref{sec:dual_separation}.
  \item[(2)] Proving non-trivialness by showing that the optimal value is smaller than the value of the trivial solution (i.e., $c^*=0$ and $e^*=x_i^{(\ell)}$). This step is given in Section~\ref{sec:avoid_trivial}.
  \item[(3)] Showing the existence of a proper $\lambda$. As it will be made clear later, conditions for (1) include $\lambda<A$ and (2) requires $\lambda>B$ for some expression $A$ and $B$. Then it is natural to request $B<A$, so that a valid $\lambda$ exists. It turns out that this condition boils down to $\delta<C$ for some expression $C$. This argument is carried over in Section~\ref{sec:exist_lambda}.
\end{enumerate}

\subsection{Optimality Condition}\label{sec:optimality}
Consider a general convex optimization problem:
\begin{equation}\label{eq:Opt_A_general}
\begin{aligned}
\quad \min_{c, e} \; &\|c\|_1+\frac{\lambda}{2}\|e\|^2 \quad &s.t. \quad x=Ac+e.
\end{aligned}
\end{equation}
We state Lemma~\ref{lemma:OptimalCondition}, which extends Lemma~7.1 in \citet{soltanolkotabi2011geometric}.
\begin{lemma}\label{lemma:OptimalCondition}
Consider a vector $y\in \mathbb{R}^d$ and a matrix $A \in \mathbb{R}^{d\times N}$. If there exists a triplet $(c,e,\nu)$ obeying $y=Ac+e$ and $c$ has support $S\subseteq T$, furthermore the dual certificate vector $\nu$ satisfies
\begin{equation*}
\begin{array}{cccc}
  A_s^T\nu=sgn(c_S), & \nu=\lambda e,  &
  \|A^T_{T\cap S^{c}}\nu\|_{\infty} \leq 1, & \|A^T_{T^{c}}\nu\|_{\infty}<1,
\end{array}
\end{equation*}
then any optimal solution $(c^{*},e^{*})$  to \eqref{eq:Opt_A_general} obeys $c^{*}_{T^{c}}=0$.
\end{lemma}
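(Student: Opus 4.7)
The plan is to treat $\nu$ as a KKT/dual certificate for the convex program~\eqref{eq:Opt_A_general} and to use strict dual feasibility on $T^c$ to rule out any optimal solution with a nonzero entry outside $T$. The argument mirrors the standard noiseless dual-certificate proof (Lemma~7.1 in \citet{soltanolkotabi2011geometric}), but I must carry along the extra quadratic term $\tfrac{\lambda}{2}\|e\|^2$, which is handled by the relation $\nu = \lambda e$.

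Step 1: Package the hypotheses as a subgradient statement at $(c,e)$. The conditions $A_S^T\nu=\mathrm{sgn}(c_S)$, $\|A_{T\cap S^c}^T\nu\|_\infty\le 1$, $\|A_{T^c}^T\nu\|_\infty<1$ together say that the vector $A^T\nu$ belongs to $\partial\|\cdot\|_1$ at $c$ (consistent signs on $S$ and magnitude $\le 1$ off $S$, where $c$ is zero). Similarly $\nu=\lambda e$ is the gradient of $\tfrac{\lambda}{2}\|\cdot\|^2$ at $e$. So $(A^T\nu,\nu)$ lies in the subdifferential of the objective at $(c,e)$.

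Step 2: Let $(c^*,e^*)$ be any optimal solution. Apply the subgradient inequality for each convex piece:
\begin{align*}
\|c^*\|_1-\|c\|_1 &\ge \langle A^T\nu,\,c^*-c\rangle,\\
\tfrac{\lambda}{2}\|e^*\|^2-\tfrac{\lambda}{2}\|e\|^2 &\ge \langle \nu,\,e^*-e\rangle.
\end{align*}
Summing and rewriting the right-hand side as $\langle \nu,\,A(c^*-c)+(e^*-e)\rangle$, primal feasibility of both candidates gives $A(c^*-c)+(e^*-e)=y-y=0$. Hence
\[
\|c^*\|_1+\tfrac{\lambda}{2}\|e^*\|^2 \ge \|c\|_1+\tfrac{\lambda}{2}\|e\|^2,
\]
so $(c,e)$ is also optimal and, crucially, equality must hold in each of the two subgradient inequalities above.

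Step 3: Equality in the subgradient inequality $\|c^*\|_1\ge\|c\|_1+\langle A^T\nu,c^*-c\rangle$ is exactly the condition that $A^T\nu\in\partial\|\cdot\|_1$ at $c^*$ as well (this is the standard Fenchel-type characterization: a subgradient at one point supports the graph at a second point iff it is also a subgradient there). In particular, for every $i\in T^c$ we get $A_i^T\nu\in\partial|c^*_i|$. But $|A_i^T\nu|<1$ by strict dual feasibility on $T^c$, while any subgradient of $|\cdot|$ at a nonzero point has magnitude exactly $1$. This forces $c^*_i=0$ for all $i\in T^c$, completing the proof. I do not foresee a real obstacle: the only subtlety worth double-checking is the upgrade ``common subgradient'' step in Step~3, which follows immediately from the two matching subgradient inequalities (the one at $c$ used as an equality, and the generic one at $c^*$).
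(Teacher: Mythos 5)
Your proof is correct and follows essentially the same route as the paper's: a dual-certificate argument in which the subgradient inequalities for $\|\cdot\|_1$ and $\tfrac{\lambda}{2}\|\cdot\|^2$ are summed, primal feasibility cancels the cross term $\langle\nu,A(c^*-c)+(e^*-e)\rangle$, and strict dual feasibility on $T^c$ forces $c^*_{T^c}=0$. The only difference is cosmetic: the paper keeps the slack term $(1-\|A_{T^c}^T\nu\|_\infty)\|c^*_{T^c}\|_1$ explicit and concludes directly, whereas you pass through the equality case of the subgradient inequality to upgrade $A^T\nu$ to an element of $\partial\|c^*\|_1$; both steps are valid.
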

\begin{proof}
For optimal solution $(c^{*},e^{*})$, we have:
\begin{align}
    &\|c^*\|_1+\frac{\lambda}{2}\|e^*\|^2 \nonumber\\
    =& \|c^*_S\|_1+\|c^*_{T\cap S^c}\|_1+\|c^*_{T^c}\|_1 + \frac{\lambda}{2} \|e^*\|^2\nonumber\\
    \geq&\|c_S\|_1+\langle sgn(c_S),c^*_S-c_S\rangle+\|c^*_{T\cap S^c}\|_1+\|c^*_{T^c}\|_1
    +\frac{\lambda}{2} \|e\|^2 +\langle \lambda e,e^*-e\rangle\nonumber\\
    =&\|c_S\|_1+\langle \nu,A_S(c^*_S-c_S)\rangle+\|c^*_{T\cap S^c}\|_1+\|c^*_{T^c}\|_1
    +\frac{\lambda}{2} \|e\|^2 +\langle \nu,e^*-e\rangle\nonumber\\
    =&\|c_S\|_1+\frac{\lambda}{2} \|e\|^2+ \|c^*_{T\cap S^c}\|_1-\langle \nu,A_{T\cap S^c}(c^*_{T\cap S^c})\rangle
    +\|c^*_{T^c}\|_1-\langle \nu,A_{T^c}(c^*_{T^c})\rangle. \label{eq:lemma_tmp1}
\end{align}
To see $\frac{\lambda}{2} \|e^*\|^2 \geq \frac{\lambda}{2} \|e\|^2 +\langle \lambda e,e^*-e\rangle$, note that the right hand side equals to $\lambda\left(-\frac{1}{2}e^Te +(e^*)^Te\right)$, which takes a maximal value of $\frac{\lambda}{2} \|e^*\|^2$ when $e=e^*$.
 The last equation holds because both $(c,e)$ and $(c^*,e^*)$ are feasible solution, such that $\langle\nu,A(c^*-c)\rangle+\langle\nu,e^*-e\rangle = \langle\nu,Ac^*+e^*-(Ac+e)\rangle=0$. Also, note that $\|c_S\|_1+\frac{\lambda}{2} \|e\|^2=\|c\|_1+\frac{\lambda}{2} \|e\|^2$.

With the inequality constraints of $\nu$ given in the lemma statement, we have
\begin{align*}
    \langle \nu,A_{T\cap S^c}(c^*_{T\cap S^c})\rangle=&\langle A_{T\cap S^c}^T\nu,(c^*_{T\cap S^c})\rangle
    \leq \|A^T_{T\cap S^{c}}\nu\|_{\infty}\|c^*_{T\cap S^c}\|_1\leq\|c^*_{T\cap S^c}\|_1.
\end{align*}
Substitute into \eqref{eq:lemma_tmp1}, we get:
\begin{equation*}
    \|c^*\|_1+\frac{\lambda}{2} \|e^*\|^2 \geq \|c\|_1+\frac{\lambda}{2} \|e\|^2 +(1-\|A^T_{T^{c}}\nu\|_{\infty})\|c^*_{T^c}\|_1,
\end{equation*}
where $(1-\|A^T_{T^{c}}\nu\|_{\infty})$ is strictly greater than $0$.

Using the fact that $(c^*,e^*)$ is an optimal solution, $\|c^*\|_1+\frac{\lambda}{2} \|e^*\|^2\leq \|c\|_1+\frac{\lambda}{2} \|e\|^2$. Therefore, $\|c^*_{T^c}\|_1=0$ and $(c,e)$ is also an optimal solution. This concludes the proof.
\end{proof}

The next step is to apply Lemma~\ref{lemma:OptimalCondition} with $x=x_i^{(\ell)}$ and $A=X_{-i}$ and then construct a triplet $(c,e,\nu)$ such that dual certificate $\nu$ satisfying all conditions and $c$ satisfies SEP. Then we can conclude that all optimal solutions of \eqref{eq:Opt_original} satisfy SEP.

\subsection{Construction of Dual Certificate}\label{sec:construct_nu}
To construct the dual certificate, we consider the following {\em fictitious} optimization problem (and its dual) that explicitly requires that all feasible solutions satisfy SEP\footnote{To be precise, it is the corresponding $c_i=[0,...,0,(c^{(\ell)}_i)^T,0,...,0]^T$ that satisfies SEP.} (note that one can not solve such problem in practice without knowing the subspace clusters, and hence the name ``fictitious'').
\begin{equation}\label{eq:Opt_fictitious2}
\begin{aligned}
\mathbf{P}_1: \quad \min_{c^{(\ell)}_i, e_i} \; &\|c^{(\ell)}_i\|_1+\frac{\lambda}{2}\|e_i\|^2 \quad
s.t. \quad y^{(\ell)}_i+z_i=(Y^{(\ell)}_{-i}+Z^{(\ell)}_{-i})c^{(\ell)}_i+e_i;
\end{aligned}
\end{equation}
\begin{equation}\label{eq:dual_fictitious2}
\begin{aligned}
\mathbf{D}_1: \quad \max_{\nu} \; &\langle x_i^{(\ell)},\nu \rangle - \frac{1}{2\lambda}\nu^T\nu\quad
s.t. \quad \|(X^{(\ell)}_{-i})^T\nu\|_{\infty} \leq 1.\quad\quad\quad\quad\quad\quad
\end{aligned}
\end{equation}

This optimization problem is feasible because $y^{(\ell)}_i \in span(Y^{(\ell)}_{-i})=\mathcal{S}_{\ell}$ so any $c^{(\ell)}_i$ obeying $y^{(\ell)}_i=Y^{(\ell)}_{-i}c^{(\ell)}_i$ and corresponding $e_i = z_i - Z^{(\ell)}_{-i}c^{(\ell)}_i$ is a pair of feasible solution. Then by strong duality, the dual program is also feasible, which implies that for every optimal solution $(c, e)$ of \eqref{eq:Opt_fictitious2} with $c$ supported on $S$, there exist $\nu$ satisfying:
\begin{equation*}
\left\{
\begin{aligned}
    &\|((Y^{(\ell)}_{-i})_{S^{c}}^T +(Z^{(\ell)}_{-i})_{S^{c}}^T)\nu\|_{\infty}\leq 1, \quad \nu=\lambda e, \\
    &\quad((Y^{(\ell)}_{-i})_{S}^T +(Z^{(\ell)}_{-i})_{S}^T)\nu =sgn(c_S).
\end{aligned}
\right\}
\end{equation*}
This construction of $\nu$ satisfies all conditions in Lemma~\ref{lemma:OptimalCondition} with respect to
\begin{equation}\label{eq:candidate_sol}
\begin{cases}
    c_i= [0,...,0,c_i^{(\ell)},0,...,0]\text{ with }c_i^{(\ell)}=c,\\
    e_i= e,
\end{cases}
\end{equation}
except
\begin{equation*}
    \left\|[X_1,...,X_{\ell-1},X_{\ell+1},...,X_L]^T\nu\right\|_{\infty}<1,
\end{equation*}
i.e., we must check for all data point $x \in \mathcal{X}\setminus \mathcal{X}^{\ell}$,
\begin{equation}\label{eq:dual_separation_condition}
    |\langle x, \nu \rangle|< 1.
\end{equation}
Thus, if we show that the solution of \eqref{eq:dual_fictitious2} $\nu$ also satisfies \eqref{eq:dual_separation_condition}, we can conclude that $\nu$ is a dual certificate required in Lemma~\ref{lemma:OptimalCondition}, which implies that the candidate solution \eqref{eq:candidate_sol} associated with optimal $(c,e)$ of \eqref{eq:Opt_fictitious2} is indeed the optimal solution of \eqref{eq:Opt_original} and therefore SEP holds.




\subsection{Dual separation condition}\label{sec:dual_separation}

Our strategy to show \eqref{eq:dual_separation_condition} is to provide an upper bound of $|\langle x, \nu \rangle|$ then impose the inequality on the upper bound.

First, we find it appropriate to project $\nu$ to the subspace $\mathcal{S}_{\ell}$ and its orthogonal complement subspace then analyze separately. For convenience, denote $\nu_1 := \mathbb{P}_{S_\ell}(\nu)$, $\nu_2 := \mathbb{P}_{\mathcal{S}_{\ell}^\perp}(\nu)$. Then
\begin{equation}\label{eq:showing_dual_sep_cond}
\begin{aligned}
    |\langle x, \nu \rangle| =& |\langle y+z, \nu\rangle|\leq |\langle y, \nu_1\rangle|+|\langle y,\nu_2\rangle|+|\langle z, \nu\rangle|\\
    \leq& \mu(\mathcal{X}_{\ell}) \|\nu_1\| + \|y\|\|\nu_2\||\cos(\angle (y,\nu_2))|
     + \|z\|\|\nu\||\cos(\angle (z,\nu))|.
\end{aligned}
\end{equation}
To see the last inequality, check that by Definition~\ref{def:incoherence}, $|\langle y,\frac{\nu_1}{\|\nu_1\|}\rangle| \leq\mu(\mathcal{X}_{\ell})$.

Since we are considering general (possibly adversarial) noise, we will use the relaxation $|\cos(\theta)|\leq 1$ for all cosine terms (a better bound under random noise will be given later). Thus, what left is to bound $\|\nu_1\|$ and $\|\nu_2\|$ (note $\|\nu\|=\sqrt{\|\nu_1\|^2+\|\nu_2\|^2} \leq \|\nu_1\|+\|\nu_2\|$).

\subsubsection{Bounding \texorpdfstring{$\|\nu_1\|$}{||nu1||}}
We first bound $\|\nu_1\|$ by exploiting the feasible region of $\nu_1$ in \eqref{eq:dual_fictitious2}:
$$\left\{\nu \middle| \|(X^{(\ell)}_{-i})^T\nu\|_{\infty} \leq 1\right\},$$
which is equivalent to
$$\left\{\nu \middle| x_j^T\nu\leq 1 \quad\text{for every column $x_j$ of $X^{(\ell)}_{-i}$}\right\}.$$
Decompose the condition into
$$y_j^T\nu_1+(\mathbb{P}_{\mathcal{S}_{\ell}} z_j)^T\nu_1+ z_j^T\nu_2\leq 1.$$
and relax the expression into
\begin{equation}\label{eq:relax_constraint}
  y_j^T\nu_1+(\mathbb{P}_{\mathcal{S}_{\ell}} z_j)^T\nu_1 \leq 1-z_j^T\nu_2\leq 1+\delta\|\nu_2\|.
\end{equation}
The relaxed condition contains the feasible region of $\nu_1$ in \eqref{eq:dual_fictitious2}.
It turns out that the geometric properties of this relaxed feasible region provides an upper bound of $\|\nu_1\|$.
\begin{definition}[polar set]
The polar set $\mathcal{K}^o$ of set $\mathcal{K} \in \mathbb{R}^d$ is defined as
\begin{equation*}
    \mathcal{K}^o = \left\{y\in \mathbb{R}^d: \langle x,y\rangle \leq 1\text{ for all } x\in \mathcal{K}\right\}.
\end{equation*}
\end{definition}
By the polytope geometry, we have
\begin{equation}\label{eq:Geometric_dual}
\begin{aligned}
  \|(Y_{-i}^{(\ell)}+\mathbb{P}_{\mathcal{S}_{\ell}}(Z_{-i}^{(\ell)}))^T\nu_1\|_{\infty} \leq  1+\delta\|\nu_2\|
  \;\Leftrightarrow \; \nu_1 \in \left[\mathcal{P}\left(\frac{Y_{-i}^{(\ell)}+\mathbb{P}_{\mathcal{S}_{\ell}}(Z_{-i}^{(\ell)})}{1+\delta\|\nu_2\|}\right)\right]^o := \mathcal{T}^o.
\end{aligned}
\end{equation}
Now we introduce the concept of circumradius.
\begin{definition}[circumradius]
The circumradius of a convex body $\mathcal{P}$, denoted by $R(\mathcal{P})$, is defined as the radius of the smallest Euclidean ball containing $\mathcal{P}$.
\end{definition}
The magnitude $\|\nu_1\|$ is bounded by $R(\mathcal{T}^o )$. Moreover, by the the following lemma we may find the circumradius by analyzing the polar set of $\mathcal{T}^o$ instead. By the property of polar operator, polar of a polar set gives the tightest convex envelope of the original set, i.e., $(\mathcal{K}^o)^o = conv(\mathcal{K})$. Since $\mathcal{T}=\mathrm{conv}\left(\pm \frac{Y_{-i}^{(\ell)}+\mathbb{P}_{\mathcal{S}_{\ell}}(Z_{-i}^{(\ell)})}{1+\delta\|\nu_2\|}\right)$ is convex in the first place, the polar set of $\mathcal{T}^o$ is  $\mathcal{T}$.
\begin{lemma}[Page 448 in \citet{brandenberg2004isoradial}]\label{lemma:circum_inradius}
For a symmetric convex body $\mathcal{P}$, i.e. $\mathcal{P}=-\mathcal{P}$, inradius of $\mathcal{P}$ and circumradius of polar set of $\mathcal{P}$ satisfy:
\begin{equation*}
    r(\mathcal{P})R(\mathcal{P}^o)=1.
\end{equation*}
\end{lemma}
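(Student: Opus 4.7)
The plan is to establish the identity in two halves, using the inclusion–reversing nature of polarity together with the bipolar theorem. The key observation is that for the Euclidean ball $B_\rho$ of radius $\rho$ centered at the origin, one has $(B_\rho)^o = B_{1/\rho}$, and polarity flips inclusions: $A\subseteq C$ implies $C^o\subseteq A^o$. Because $\mathcal{P}$ is symmetric, the largest inscribed ball and the smallest circumscribing ball (of $\mathcal{P}^o$, which is also symmetric) must both be centered at the origin — otherwise averaging the ball and its reflection would produce a strictly better one by convexity — so $r(\mathcal{P})$ and $R(\mathcal{P}^o)$ are radii of concentric origin-centered balls.

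For the direction $r(\mathcal{P})\,R(\mathcal{P}^o)\le 1$, I would start from $B_{r(\mathcal{P})}\subseteq\mathcal{P}$. Applying polarity reverses the inclusion to give $\mathcal{P}^o\subseteq (B_{r(\mathcal{P})})^o=B_{1/r(\mathcal{P})}$, so the smallest ball containing $\mathcal{P}^o$ has radius at most $1/r(\mathcal{P})$, i.e.\ $R(\mathcal{P}^o)\le 1/r(\mathcal{P})$.

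For the reverse direction $r(\mathcal{P})\,R(\mathcal{P}^o)\ge 1$, I would start from the defining inclusion $\mathcal{P}^o\subseteq B_{R(\mathcal{P}^o)}$. Polarizing once more yields $B_{1/R(\mathcal{P}^o)}=(B_{R(\mathcal{P}^o)})^o\subseteq\mathcal{P}^{oo}$. By the bipolar theorem, since $\mathcal{P}$ is a closed convex set containing the origin (it is a symmetric convex body), $\mathcal{P}^{oo}=\mathcal{P}$, so $B_{1/R(\mathcal{P}^o)}\subseteq\mathcal{P}$ and therefore $r(\mathcal{P})\ge 1/R(\mathcal{P}^o)$. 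Combining the two inequalities gives the claimed equality.

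The only subtle point — and the one I would double–check carefully — is the implicit use of the bipolar theorem, which requires $\mathcal{P}$ to be closed and convex and to contain the origin. All three follow from the ``symmetric convex body'' hypothesis (symmetry forces $0\in\mathcal{P}$, and a convex body is by convention compact, hence closed), so no extra assumption is needed. Everything else reduces to routine inclusion chasing, and no computation beyond $(B_\rho)^o=B_{1/\rho}$ is required.
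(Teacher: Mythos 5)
Your proof is correct. The paper does not prove this lemma at all --- it is simply cited from \citet{brandenberg2004isoradial} --- so your argument supplies a self-contained derivation that the paper omits. The two inclusion-reversal steps and the appeal to the bipolar theorem are all valid, and you correctly identified and handled the one genuinely delicate point: that both the optimal inscribed ball of $\mathcal{P}$ and the optimal circumscribing ball of $\mathcal{P}^o$ may be taken centered at the origin (by averaging a ball with its reflection and using convexity plus symmetry), which is exactly what is needed for the identity $(B_\rho)^o=B_{1/\rho}$ to be applicable in both directions. Nothing is missing.
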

\begin{lemma}\label{lemma:Y_containing_set}
Given $X=Y+Z$,  denote $\rho:=\max_{i}\|\mathbb{P}_\mathcal{S}z_i\|$, furthermore $Y\in \mathcal{S}$ where $\mathcal{S}$ is a linear subspace, then we have:
\begin{equation*}
    r(\mathrm{Proj}_\mathcal{S} (\mathcal{P}(X))) \geq r(\mathcal{P}(Y)) - \rho
\end{equation*}
\end{lemma}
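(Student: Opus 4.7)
The plan is to reduce the projected symmetrized convex hull to a standard one and then use the dual (support-function) characterization of the inradius. The reduction step is immediate: since projection is a linear map and $\mathcal{Y}\subset\mathcal{S}$ implies $\mathrm{Proj}_{\mathcal{S}}y_i = y_i$, we have
\begin{equation*}
\mathrm{Proj}_{\mathcal{S}}(\mathcal{P}(X)) = \mathrm{Proj}_{\mathcal{S}}(\mathrm{conv}(\pm\mathcal{X})) = \mathrm{conv}\bigl(\pm\{y_i+\mathbb{P}_{\mathcal{S}}z_i\}\bigr) = \mathcal{P}(Y+\mathbb{P}_{\mathcal{S}}Z).
\end{equation*}
Thus it suffices to show $r(\mathcal{P}(Y+\mathbb{P}_{\mathcal{S}}Z))\geq r(\mathcal{P}(Y))-\rho$. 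Note both bodies live inside $\mathcal{S}$, so the inscribed ball should be interpreted relative to $\mathcal{S}$.

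Next I would invoke the standard support-function formula for the inradius of a centrally symmetric polytope. For any symmetric polytope $K=\mathrm{conv}(\pm w_1,\dots,\pm w_m)\subset\mathcal{S}$,
\begin{equation*}
r(K) = \min_{u\in\mathcal{S},\,\|u\|=1}\, h_K(u) = \min_{u\in\mathcal{S},\,\|u\|=1}\, \max_i |\langle w_i,u\rangle|,
\end{equation*}
where $h_K$ is the support function. This is the key fact I would either cite or derive in one or two lines from the duality between inradius of a symmetric body and circumradius of its polar (Lemma~\ref{lemma:circum_inradius}).

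The main (and only real) computation is then a triangle-inequality argument. Fix an arbitrary unit $u\in\mathcal{S}$ and let $i^\star$ be an index achieving $|\langle y_{i^\star},u\rangle|\geq r(\mathcal{P}(Y))$; WLOG (by symmetrization) the inner product itself is nonnegative. Then
\begin{equation*}
\max_i \bigl|\langle y_i+\mathbb{P}_{\mathcal{S}}z_i,\,u\rangle\bigr| \;\geq\; \langle y_{i^\star},u\rangle - \bigl|\langle \mathbb{P}_{\mathcal{S}}z_{i^\star},u\rangle\bigr| \;\geq\; r(\mathcal{P}(Y)) - \|\mathbb{P}_{\mathcal{S}}z_{i^\star}\|\cdot\|u\| \;\geq\; r(\mathcal{P}(Y))-\rho,
\end{equation*}
where the second inequality is Cauchy--Schwarz (using $u\in\mathcal{S}$ so $\langle z_{i^\star},u\rangle=\langle\mathbb{P}_{\mathcal{S}}z_{i^\star},u\rangle$) and the last uses the definition of $\rho$. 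Taking the minimum over all unit $u\in\mathcal{S}$ yields the claim.

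I do not expect a genuine obstacle here; the only subtlety is to be careful that inradius is measured within the subspace $\mathcal{S}$ (otherwise any polytope inside a proper subspace has inradius zero), and to justify the support-function formula for a symmetric polytope via Lemma~\ref{lemma:circum_inradius}. Everything else is a one-line triangle-inequality plus Cauchy--Schwarz computation.
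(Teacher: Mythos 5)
Your proof is correct, and it takes a recognizably different route from the paper's. The paper also begins with the reduction $\mathrm{Proj}_\mathcal{S}(\mathcal{P}(X))=\mathcal{P}(\mathbb{P}_\mathcal{S}X)$, but then works with the \emph{primal} characterization of the inradius of a symmetrized hull, $r(\mathcal{P}(A))=\min_{\|c\|_1=1}\|Ac\|$, and bounds $\|Yc+\mathbb{P}_\mathcal{S}Zc\|\geq \|Yc\|-\sum_j|c_j|\,\|\mathbb{P}_\mathcal{S}z_j\|\geq r(\mathcal{P}(Y))-\rho$. You instead use the \emph{dual} (support-function) characterization $r(K)=\min_{u\in\mathcal{S},\|u\|=1}\max_i|\langle w_i,u\rangle|$, which is exactly the polar-duality content of Lemma~\ref{lemma:circum_inradius}, and pay the noise only at the single maximizing index $i^\star$ rather than as an $\ell_1$-weighted sum over all columns; both versions collapse to the same worst-case bound $\rho$. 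A small advantage of your route is robustness: the identity $r(\mathcal{P}(A))=\min_{\|c\|_1=1}\|Ac\|$ used in the paper is delicate when columns are linearly dependent (e.g., with two identical columns the right-hand side is $0$ while the inradius relative to the span is not), whereas the support-function formula holds without qualification for symmetric bodies, provided (as you correctly flag) the inradius is measured relative to $\mathcal{S}$. Your one point left to the reader --- that $\max_i|\langle y_i,u\rangle|\geq r(\mathcal{P}(Y))$ for every unit $u\in\mathcal{S}$ --- is just the same support-function identity applied to $\mathcal{P}(Y)$, so nothing is missing.
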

\begin{proof}
First note that projection to a subspace is a linear operator. Hence $\mathrm{Proj}_\mathcal{S}(\mathcal{P}(X))=\mathcal{P}(\mathbb{P}_\mathcal{S} X)$. Then by definition, the boundary set of $\mathcal{P}(\mathbb{P}_\mathcal{S} X)$ is $\mathcal{B}:=\left\{y\text{ }|\text{ }y=\mathbb{P}_\mathcal{S} X c; \|c\|_1=1\right\}$. Inradius by definition is the largest ball containing in the convex body, hence $r(\mathcal{P}(\mathbb{P}_\mathcal{S} X)) = \min_{y\in \mathcal{B}} \|y\|$. Now we provide a lower bound of it:
\begin{align*}
\|y\| \geq& \|Yc\|-\|\mathbb{P}_\mathcal{S}Z c\|\geq r(\mathcal{P}(Y)) - {\sum}_j{\|\mathbb{P}_\mathcal{S}z_j}\||c_j|
\geq r(\mathcal{P}(Y)) - \rho\|c\|_1.
\end{align*}
This concludes the proof.
\end{proof}
A bound of $\|\nu_1\|$ follows directly from Lemma~\ref{lemma:circum_inradius} and Lemma~\ref{lemma:Y_containing_set}:
\begin{align}
\|\nu_1\| \leq& (1+\delta\|\nu_2\|)R(\mathcal{P}(Y_{-i}^{(\ell)}+\mathbb{P}_{\mathcal{S}_{\ell}}(Z_{-i}^{(\ell)})))\nonumber\\
=& \frac{1+\delta\|\nu_2\|}{r(\mathcal{P}(Y_{-i}^{(\ell)}+\mathbb{P}_{\mathcal{S}_{\ell}}(Z_{-i}^{(\ell)}))}
=\frac{1+\delta\|\nu_2\|}{r(\mathrm{Proj}_\mathcal{S_{\ell}} (\mathcal{P}(X_{-i}^{(\ell)})))}
\leq \frac{1+\delta\|\nu_2\|}{r{\left( \mathcal{Q}_{-i}^{\ell}\right)}-\delta_1}.\label{eq:nu1_bound}
\end{align}
This bound depends on $\|\nu_2\|$, which we analyze below.

\subsubsection{Bounding \texorpdfstring{$\|\nu_2\|$}{||nu2||}}

Since $\nu$ is the optimal solution to $\mathbf{D}_1$, it obeys the second optimality condition in Lemma~\ref{lemma:OptimalCondition}:
$$ \nu=\lambda e_i=\lambda(x_i-X^{(\ell)}_{-i}c). $$
By projecting $\nu$ to $\mathcal{S}^{\perp}_{\ell}$, we get
$\nu_2=\lambda \mathbb{P}_{\mathcal{S}_{\ell}^{\perp}}(x_i-X^{(\ell)}_{-i}c) = \lambda \mathbb{P}_{\mathcal{S}_{\ell}^{\perp}}(z_i-Z^{(\ell)}_{-i}c)$.
It follows that
\begin{align}
  \|\nu_2\|&\leq\lambda \left(\|\mathbb{P}_{\mathcal{S}_{\ell}^{\perp}}z_i\|+\|\mathbb{P}_{\mathcal{S}_{\ell}^{\perp}}Z^{(\ell)}_{-i}c\|\right)\nonumber\\
  & \leq \lambda\left( \|\mathbb{P}_{\mathcal{S}_{\ell}^{\perp}}z_i\| + \sum_{j} |c_j|\|\mathbb{P}_{\mathcal{S}_{\ell}^{\perp}}z_j\|\right) \nonumber\\
  & \leq \lambda (\|c\|_1+1)\delta_2 \leq \lambda (\|c\|_1+1)\delta. \label{eq:bounding_nu2}
\end{align}

Now we bound $\|c\|_1$.
Since $(c,e)$ is the optimal solution, $\|c\|_1+\frac{\lambda}{2}\|e\|^2 \leq \|\tilde{c}\|_1 + \frac{\lambda}{2}\|\tilde{e}\|^2$ for any feasible solution $(\tilde{c},\tilde{e})$. Let $\tilde{c}$ be the solution of
\begin{equation}\label{eq:Opt_y only}
\begin{aligned}
\min_{c} \; \|c\|_1 \quad
s.t. \quad y^{(\ell)}_i=Y^{(\ell)}_{-i}c,
\end{aligned}
\end{equation}
then by strong duality,
$$\|\tilde{c}\|_1 = \max_{\nu}\left\{\langle\nu,y^{(\ell)}_i\rangle \text{ }|\text{ } \|[Y^{(\ell)}_{-i}]^T\nu\|_{\infty}\leq 1\right\}.$$
By Lemma~\ref{lemma:circum_inradius}, the optimal dual solution $\tilde{\nu}$ satisfies $\|\tilde{\nu}\|\leq \frac{1}{r(\mathcal{Q}_{-i}^{\ell})}$. It follows that
$$\|\tilde{c}\|_1= \langle\tilde{\nu},y^{(\ell)}_i\rangle = \|\tilde{\nu}\|\|y^{(\ell)}_i\|\leq \frac{1}{r(\mathcal{Q}_{-i}^{\ell})}.$$

On the other hand, $\tilde{e} = z_i - Z_{-i}^{(\ell)}\tilde{c}$, so $\|\tilde{e}\|^2 \leq (\|z_i\|+\sum_j \|z_j\||\tilde{c}_j|)^2\leq (\delta + \|\tilde{c}\|_1\delta)^2$, thus
$$\|c\|_1 \leq \|\tilde{c}\|_1 + \frac{\lambda}{2}\|\tilde{e}\|^2 - \frac{\lambda}{2}\|e\|^2\leq \frac{1}{r{(\mathcal{Q}_{-i}^{\ell})}}+\frac{\lambda}{2}\delta^2\left[1+\frac{1}{r{(\mathcal{Q}_{-i}^{\ell})}}\right]^2-\frac{1}{2\lambda}\|\nu_2\|^2.$$
Note that we used the property $\frac{\lambda}{2}\|e\|^2=\frac{1}{2\lambda}\|\nu\|^2\geq\frac{1}{2\lambda}\|\nu_2\|^2$. Substitute the bound of $\|c_1\|_1$ into \eqref{eq:bounding_nu2} we get
\begin{align*}
  &&\;&\|\nu_2\|\leq \lambda \left(\frac{1}{r(\mathcal{Q}_{-i}^{\ell})}+\frac{\lambda}{2}\delta^2\left[1+\frac{1}{r(\mathcal{Q}_{-i}^{\ell})}\right]^2+1\right) \delta-\frac{\delta}{2}\|\nu_2\|^2\\
  \Leftrightarrow&&\;
   &\|\nu_2\|+\frac{\delta}{2}\|\nu_2\|^2\leq \lambda\delta\left(\frac{1}{r(\mathcal{Q}_{-i}^{\ell})}+1\right)+ \frac{\delta}{2} \left[\lambda\delta\left(\frac{1}{r(\mathcal{Q}_{-i}^{\ell})}+1\right)\right]^2 .
\end{align*}
Since function $f(\alpha)=\alpha+\frac{\delta}{2}\alpha^2$ monotonically increases when $\alpha>0$, the above inequality implies
\begin{equation}\label{eq:nu2_bound}
  \|\nu_2\| \leq \lambda\delta\left(\frac{1}{r(\mathcal{Q}_{-i}^{\ell})}+1\right),
\end{equation}
which gives the desired bound for $\|\nu_2\|$.

\subsubsection{Conditions for  \texorpdfstring{$|\langle x, \nu \rangle|<1$}{|<x,nu>|}}
Putting together \eqref{eq:showing_dual_sep_cond}, \eqref{eq:nu1_bound} and \eqref{eq:nu2_bound}, we have the upper bound of $|\langle x, \nu \rangle|$:
\begin{equation*}
\begin{aligned}
    &|\langle x, \nu \rangle| \leq (\mu(\mathcal{X}_{\ell})+\|\mathbb{P}_{\mathcal{S}_{\ell}}z\|) \|\nu_1\| + (\|y\|+\|\mathbb{P}_{\mathcal{S}_{\ell}^{\perp}}z\|)\|\nu_2\|\\
    \leq&\frac{\mu(\mathcal{X}_{\ell})+\delta_1}{r{\left( \mathcal{Q}_{-i}^{\ell}\right)}-\delta_1} + \left(\frac{(\mu(\mathcal{X}_{\ell})+\delta_1)\delta}{r{\left( \mathcal{Q}_{-i}^{\ell}\right)}-\delta_1}+1+\delta\right)\|\nu_2\|\\
    \leq& \frac{\mu(\mathcal{X}_{\ell})+\delta_1}{r{\left( \mathcal{Q}_{-i}^{\ell}\right)}-\delta_1} + \lambda\delta(1+\delta) \left(\frac{1}{r(\mathcal{Q}_{-i}^{\ell})}+1\right)
    + \frac{\lambda\delta^2(\mu(\mathcal{X}_{\ell})+\delta_1)}{r{\left( \mathcal{Q}_{-i}^{\ell}\right)}-\delta_1}\left(\frac{1}{r(\mathcal{Q}_{-i}^{\ell})}+1\right).
\end{aligned}
\end{equation*}
For convenience, we further relax the second $r(\mathcal{Q}_{-i}^{\ell})$ into $r(\mathcal{Q}_{-i}^{\ell})-\delta_1$. The dual separation condition is thus guaranteed with
\begin{align*}
    \frac{\mu(\mathcal{X}_{\ell})+\delta_1 +\lambda\delta(1+\delta)+\lambda\delta^2(\mu(\mathcal{X}_{\ell})+\delta_1)}{r{\left( \mathcal{Q}_{-i}^{\ell}\right)}-\delta_1}
    + \lambda\delta(1+\delta)+\frac{\lambda\delta^2(\mu(\mathcal{X}_{\ell})+\delta_1)}{r{\left( \mathcal{Q}_{-i}^{\ell}\right)}(r{\left( \mathcal{Q}_{-i}^{\ell}\right)}-\delta_1)}  < 1.
\end{align*}
Denote $\rho:=\lambda\delta(1+\delta)$, assume $\delta<r{\left( \mathcal{Q}_{-i}^{\ell}\right)}$, $(\mu(\mathcal{X}_{\ell})+\delta_1)<1$ and simplify the form with
\begin{align*}
\frac{\lambda\delta^2(\mu(\mathcal{X}_{\ell})+\delta_1)}{r{\left( \mathcal{Q}_{-i}^{\ell}\right)}-\delta_1}+\frac{\lambda\delta^2(\mu(\mathcal{X}_{\ell})+\delta_1)}{r{\left( \mathcal{Q}_{-i}^{\ell}\right)}(r{\left( \mathcal{Q}_{-i}^{\ell}\right)}-\delta_1)}
< \frac{\rho}{r{\left( \mathcal{Q}_{-i}^{\ell}\right)}-\delta_1},
\end{align*}
we get a sufficient condition
\begin{equation}\label{eq:dual_cond_simp}
    \mu(\mathcal{X}_{\ell}) +2\rho +\delta_1< \left(1-\rho\right)(r(\mathcal{Q}_{-i}^{\ell})-\delta_1).
\end{equation}
To generalize \eqref{eq:dual_cond_simp} to all data of all subspaces, the following must hold for each $\ell = 1,...,k$:
\begin{equation}\label{eq:Thm1_all}
    \mu(\mathcal{X}_{\ell}) +2\rho+\delta_1 < \left(1-\rho\right)\left(\min_{\{i: x_i\in X^{(\ell)}\}}r(\mathcal{Q}^{(\ell)}_{-i})-\delta_1\right).
\end{equation}
This gives a first condition on $\delta$ and $\lambda$ (wihtin $\rho$), which we call it ``\textbf{dual separation condition}'' under noise. Note that this reduces to exactly the geometric condition in \citet{soltanolkotabi2011geometric}'s Theorem~2.5 when $\delta=0$.

\subsection{Avoid trivial solutions}\label{sec:avoid_trivial}
Besides SEP, we also need to show the solution is non-trivial. The idea is that when $\lambda$ is large enough, the trivial solution $c^* = 0$, $e^*=x_i^{(\ell)}$ can never be optimal.

As we trace along the regularization path by increasing $\lambda$ from $0$, one column of the design matrix $X_{-i}$ will enter the support set. This column will be the one that attains $\|X_{-i}^Tx_i\|_{\infty}$, and $\lambda = \frac{1}{\|X_{-i}^Tx_i\|_{\infty}}$ when it happens. Therefore, as long as $\lambda> \frac{1}{\|X_{-i}^Tx_i\|_{\infty}}$, the solution will not be trivial.

Note that under the dual separation condition, we only need to consider points in the same subspace. So $\|X_{-i}^Tx_i\|_{\infty} = \left\|[X_{-i}^{(\ell)}]^Tx_i\right\|_{\infty}$. Let $x_j\in X_{-i}^{(\ell)}$ be the column that attains the maximum in $\left\|[X_{-i}^{(\ell)}]^Tx_i\right\|_{\infty}$ and $y_k \in Y_{-i}^{(\ell)}$ be the column that attains the maximum in $\left\|[Y_{-i}^{(\ell)}]^Ty_i\right\|_{\infty}$ (if there are more than one maximizers, pick any one), we can write
\begin{align}
\left\|[X_{-i}^{(\ell)}]^Tx_i\right\|_{\infty}&=\left|\langle x_j, x_i\rangle\right|  \geq \left|\langle x_k, x_i\rangle\right|\nonumber\\
 &= \left|\langle y_k, y_i\rangle  + \langle y_k, z_i\rangle + \langle z_k, y_i\rangle + \langle z_k, z_i\rangle\right|\nonumber\\
&\geq \left|\langle y_k, y_i\rangle\right| - \left|\langle y_k, z_i\rangle + \langle z_k, y_i\rangle + \langle z_k, z_i\rangle\right|\nonumber\\
& = \left\|[Y_{-i}^{(\ell)}]^Ty_i\right\|_{\infty} - \left|\langle y_k, z_i\rangle + \langle z_k, y_i\rangle + \langle z_k, z_i\rangle\right|\nonumber\\
&\geq r(\mathcal{Q}^{(\ell)}_{-i}) - 2\delta - \delta^2. \label{eq:lowerbounding_max_affinity}
\end{align}
The last inequality follows from the upper bound of noise magnitude and the observation that the inradius of $\mathcal{Q}^{(\ell)}_{-i}$ defines a uniform lower bound of $\left\| [ Y_{-i}^{(\ell)}]^T w\right\|_\infty$ for any unit vector $w\in \mathcal{S}_{\ell}$. Therefore, as long as
\begin{equation}\label{eq:lambda_low}
\lambda \geq \frac{1}{ r(\mathcal{Q}^{(\ell)}_{-i}) - 2\delta - \delta^2},
\end{equation}
the solution $c_i$ for $i$th column is not trivial. This bound is strictly better than what we obtain in the conference version \citep{wang2013noisy} and is the key for improving the rate for noise tolerance over the previous version. Also, check that
\begin{equation}\label{eq:deterministic_delta}
\delta<\frac{r(r_\ell - \mu_\ell)}{2+7r_\ell}
\end{equation}
under bound of $\delta$ in the theorem statement,  $r(\mathcal{Q}^{(\ell)}_{-i}) - 2\delta - \delta^2>0$ for any $i,\ell$.

A side remark is that the Lasso regularization path is formally described in \citet{tibshirani2013lasso} and it is unique whenever the data points are in general position. As a result, we can potentially calculate the entry point of $k$th non-zero coefficient for any $0<k<d$, any $x_i$ and $X_{-i}$. This would however complicate the results unnecessarily, as Lasso path is not monotone (some coefficient may leave the support set as $\lambda$ increases). We therefore stick to the simpler requirement of $c_i$ being non-trivial.

\subsection{Existence of a proper  \texorpdfstring{$\lambda$}{lambda}}\label{sec:exist_lambda}

Basically, \eqref{eq:Thm1_all} and \eqref{eq:lambda_low} must be satisfied simultaneously for all $\ell=1,...,L$. Essentially \eqref{eq:lambda_low} gives a condition of $\lambda$ from below, and \eqref{eq:Thm1_all} gives a condition from above. Recall that the denotations $r_{\ell}:=\min_{\{i: x_i\in X^{(\ell)}\}}r(\mathcal{Q}^{(\ell)}_{-i})$, $\mu_{\ell}:=\mu(\mathcal{X}_{\ell})$ and $r=\min_{\ell} r_{\ell}$, the condition on $\lambda$ is:
\begin{align*}
  \max_{\ell}\frac{1}{r_{\ell} - 2\delta-\delta^2}<\lambda < \min_{\ell}\frac{r_{\ell}-\mu_{\ell}-2\delta_1}{\delta(1+\delta)(2+r_{\ell}-\delta_1)}.
\end{align*}
With the observation that
\begin{align*}
\max_{\ell}\frac{1}{r_{\ell} - 2\delta-\delta^2}
= \frac{1}{\min r_{\ell} - 2\delta-\delta^2},
\end{align*}
it suffices to require $\lambda$ to obey for each $\ell$:
\begin{equation}\label{eq:lambda_range}
\frac{1}{r - 2\delta-\delta^2}<
        \lambda<\frac{r_{\ell}-\mu_{\ell}-2\delta_1}{\delta(1+\delta)(2+r_{\ell}-\delta_1)}.
\end{equation}



We will now show that under condition \eqref{eq:deterministic_delta}, the range \eqref{eq:lambda_range} is not an empty set. Again, we relax $\delta_1$ to $\delta$ in \eqref{eq:lambda_range} and get
\begin{equation}\label{eq:nonempty_cond}
  \frac{1}{r - 2\delta-\delta^2}< \frac{r_{\ell}-\mu_{\ell}-2\delta}{\delta(1+\delta)(2+r_{\ell}-\delta)}.
\end{equation}
Since all denominators are positive, we obtain the standard form of the inequality
$$ A\delta^3+B\delta^2+C\delta+D<0 $$ with
$$
\begin{cases}
A=-3\leq 0\\
B=-3+2(r_\ell-\mu_\ell) + r_\ell \leq 0\\
C=2+4(r_{\ell}-\mu_{\ell})+r_\ell+2r \leq 2+7r_{\ell}\\
D=-r(r_{\ell}-\mu_{\ell})
\end{cases}
$$
Check that \eqref{eq:deterministic_delta} is sufficient for the above $3$rd order inequality to hold. Therefore,
$$\eqref{eq:deterministic_delta}\Rightarrow A\delta^3+B\delta^2+C\delta+D<0 \Leftrightarrow \eqref{eq:nonempty_cond}
\Rightarrow \text{\eqref{eq:lambda_range} is not an empty set.}$$
This completes the proof of Theorem~\ref{thm:thm_general}.

\section{Proof of  Results for Randomized Cases}\label{sec:proof_randomized}
In this section, we provide proofs to the theorems of the three randomized models:
\begin{description}
  \item[$\bullet$ Deterministic data+random noise;]
  \item[$\bullet$ Semi-random data+random noise;]
  \item[$\bullet$ Fully random.]
\end{description}

To do this, we need to bound $\delta_1$, $\cos(\angle(z,\nu))$ and $\cos(\angle(y,\nu_2))$ when $Z$ follows the \emph{Random Noise Model}, such that a better dual separation condition can be obtained. Moreover, for the \emph{Semi-random} and the \emph{Random data model}, we need to bound $r(\mathcal{Q}^{(\ell)}_{-i})$ when data samples from each subspace are drawn uniformly and bound $\mu(\mathcal{X}_{\ell})$  when subspaces are randomly generated.
These require the following lemmas.

\begin{lemma}[Upper bound on the area of spherical cap]\label{lemma:spherical_cap}
  Let $a \in \mathbb{R}^{n}$ be a random vector sampled from a unit sphere and $z$ is a fixed vector. Then we have:
  \begin{equation*}
    Pr\left(|a^Tz|>\epsilon \|z\|\right) \leq 2e^{\frac{-n\epsilon^2}{2}}
  \end{equation*}
\end{lemma}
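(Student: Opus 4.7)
The plan is to reduce this to a one-dimensional tail estimate by exploiting rotational symmetry, and then bound the remaining probability through the standard Gaussian representation of the uniform measure on the sphere.

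First I would observe that both sides of the inequality scale linearly in $\|z\|$, so without loss of generality I may take $\|z\| = 1$. Since $a$ is distributed uniformly on $S^{n-1}$, its distribution is invariant under orthogonal transformations, so I may further assume $z = e_1$. The task then reduces to bounding
\[
\Pr(|a_1| > \epsilon) \leq 2 e^{-n\epsilon^2/2},
\]
where $a_1$ is the first coordinate of a uniform point on the unit sphere, and the factor of $2$ comes from the union bound over the two sides.

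Next, I would use the Gaussian representation: if $g = (g_1,\dots,g_n) \sim \mathcal{N}(0, I_n)$, then $a := g/\|g\|$ is uniform on $S^{n-1}$. Hence
\[
\Pr(a_1 > \epsilon) = \Pr\!\left(g_1 > \epsilon \|g\|\right) = \Pr\!\left((1-\epsilon^2)g_1^2 > \epsilon^2 \sum_{j=2}^n g_j^2\right).
\]
This is a comparison between two independent $\chi^2$ random variables, and a standard Chernoff bound (or equivalently, recognizing $a_1^2 \sim \mathrm{Beta}(1/2, (n-1)/2)$ and integrating its density) yields a tail of order $(1-\epsilon^2)^{(n-1)/2}$, which after using $1-x \leq e^{-x}$ gives at most $e^{-n\epsilon^2/2}$. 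Doubling this for the symmetric two-sided tail produces the claimed bound.

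I do not anticipate any real obstacle here: this is a classical concentration fact (essentially Lévy's lemma for the 1-Lipschitz linear functional $a \mapsto a^T z$ on the sphere), and the only mildly delicate step is handling the constant in the exponent cleanly. If the Chernoff-on-chi-squares route produces an unwanted $(n-1)$ instead of $n$, I would instead appeal to Lévy's concentration inequality directly, using that $a \mapsto a^T z$ is $1$-Lipschitz and has median $0$ by the antipodal symmetry of the uniform measure on $S^{n-1}$, which gives the stated constant $n/2$ in the exponent without fuss.
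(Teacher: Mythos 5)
The paper does not actually prove this lemma: it is imported from page~29 of \citet{soltanolkotabi2011geometric}, which in turn adapts the spherical-cap estimate of \citet{ball1997intro_convex_geometry}. So there is no in-paper argument to compare against; your proposal supplies a proof where the paper supplies a citation. Your reductions are fine --- homogeneity in $\|z\|$, rotation invariance to take $z=e_1$, and the representation $a=g/\|g\|$ with $g\sim\mathcal{N}(0,I_n)$ are all correct and standard, so the route is sound. The one genuinely delicate point is exactly the one you wave at: the constant $n$ versus $n-1$ in the exponent. The density of $a_1$ is $\frac{\Gamma(n/2)}{\sqrt{\pi}\,\Gamma((n-1)/2)}(1-t^2)^{(n-3)/2}$, and the crude chain $(1-\epsilon^2)^{(n-1)/2}\le e^{-(n-1)\epsilon^2/2}$ falls short of $e^{-n\epsilon^2/2}$ by a factor $e^{\epsilon^2/2}$; moreover your proposed fallback does not rescue this, since the textbook forms of L\'evy's inequality for $1$-Lipschitz functions on $S^{n-1}$ carry $(n-1)$ or $(n-2)$ in the exponent, not $n$, so they do not deliver the stated constant ``without fuss.''

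Two honest ways to close the gap. (i) Keep the normalization constant: using $\int_\epsilon^1(1-t^2)^{(n-3)/2}\,dt\le\frac{1}{\epsilon}\int_\epsilon^1 t\,(1-t^2)^{(n-3)/2}\,dt=\frac{(1-\epsilon^2)^{(n-1)/2}}{\epsilon(n-1)}$ together with $\Gamma(n/2)/\Gamma((n-1)/2)\le\sqrt{n/2}$ gives $\Pr(a_1>\epsilon)\le\frac{c}{\epsilon\sqrt{n}}\,e^{\epsilon^2/2}\,e^{-n\epsilon^2/2}$ with $c<1$; the prefactor is at most $1$ once $\epsilon\gtrsim 1/\sqrt{n}$, and for $\epsilon\le\sqrt{2\log 2/n}$ the claimed bound $2e^{-n\epsilon^2/2}\ge 1$ is vacuous, and these two regimes overlap. (ii) Use Ball's purely geometric argument: the cone over the cap $\{a:a_1\ge\epsilon\}$ is contained in a Euclidean ball of radius $\sqrt{1-\epsilon^2}$, so the cap has measure at most $(1-\epsilon^2)^{n/2}\le e^{-n\epsilon^2/2}$ with the correct $n$ appearing for free. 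Either patch turns your sketch into a complete proof; as written, the final constant is asserted rather than established.
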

This Lemma is extracted from an equation in page~29 of \citet{soltanolkotabi2011geometric}, which is in turn adapted from the upper bound on the area of spherical cap in \citet{ball1997intro_convex_geometry}.
By definition of the Random Noise Model, $z_i$ is spherical symmetric, which implies that the direction of $z_i$ is distributed uniformly on the $n$-dimensional unit sphere. Hence Lemma~\ref{lemma:spherical_cap} applies whenever an inner product involves $z$.
As an example, we write the following lemma.
\begin{lemma}[Properties of Gaussian noise]\label{lemma:random_gaussian}
  For Gaussian random matrix $Z\in \mathbb{R}^{n\times N}$, if each entry $Z_{i,j} \sim N(0,\frac{\sigma}{\sqrt{n}})$, then each column $z_i$ satisfies:
  \begin{align*}
      \text{1. }&Pr(\|z_i\|^2 > (1+t)\sigma^2) \leq e^{\frac{n}{2}(\log(t+1)-t)}\\
      \text{2. }&Pr(|\langle z_i,z \rangle|>\epsilon\|z_i\|\|z\|) \leq 2e^{\frac{-n\epsilon^2}{2}}
  \end{align*}
  where $z$ is any fixed vector, or a random vector that is independent to $z_i$.
\end{lemma}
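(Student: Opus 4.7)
The plan is to prove the two claims independently, each via a standard concentration argument; nothing deep is needed beyond recognizing the distribution of the relevant quantities.

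For part~(1), I would first observe that since the entries $Z_{ji}$ are iid $N(0,\sigma^2/n)$, the random variable $n\|z_i\|^2/\sigma^2$ is a sum of $n$ iid squared standard normals, hence distributed as $\chi^2_n$. Consequently
\[
\Pr\bigl(\|z_i\|^2 > (1+t)\sigma^2\bigr) \;=\; \Pr\bigl(\chi^2_n > n(1+t)\bigr).
\]
I would then apply a Chernoff argument with the moment generating function $\mathbb{E}[e^{sX}] = (1-2s)^{-n/2}$ (valid for $s<1/2$), giving $\Pr(\chi^2_n > n(1+t)) \le e^{-sn(1+t)}(1-2s)^{-n/2}$. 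Optimizing in $s$ with the choice $s = t/(2(1+t))$ yields exactly $e^{\frac{n}{2}(\log(1+t) - t)}$, as claimed. This is a textbook calculation, so I would just state the optimal $s$ and the resulting exponent rather than grinding through it.

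For part~(2), the key observation is that a Gaussian vector with iid $N(0,\sigma^2/n)$ entries is spherically symmetric: $z_i/\|z_i\|$ is uniformly distributed on the unit sphere $S^{n-1}$ and is independent of $\|z_i\|$. Therefore
\[
\frac{|\langle z_i,z\rangle|}{\|z_i\|\,\|z\|} \;=\; \Bigl|\Bigl\langle \tfrac{z_i}{\|z_i\|},\tfrac{z}{\|z\|}\Bigr\rangle\Bigr|
\]
is the absolute value of the inner product of a uniformly random point on $S^{n-1}$ with a fixed unit vector. Applying the spherical-cap estimate of Lemma~\ref{lemma:spherical_cap} with $\epsilon$ in place of $\epsilon$ and $z/\|z\|$ as the fixed direction, I obtain the bound $2e^{-n\epsilon^2/2}$ directly. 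When $z$ is itself random but independent of $z_i$, I condition on $z$ first: the cap bound applies uniformly over the realization of $z$ because $z_i/\|z_i\|$ remains uniform on $S^{n-1}$ and independent of $z$, so taking expectations over $z$ preserves the bound.

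There is no real obstacle here — both facts are standard. The only mild point to flag is the independence claim in part~(2): one must verify that $z_i/\|z_i\|$ is not only marginally uniform but also independent of $\|z_i\|$ and (when $z$ is random) of $z$; this follows from the rotational invariance of iid Gaussian coordinates together with the assumed independence of $z$ and $z_i$. Once these observations are in place, the proof reduces to two one-line applications of well-known bounds.
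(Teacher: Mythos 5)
Your proposal is correct and follows essentially the same route as the paper: part (1) via the $\chi^2_n$ distribution of $n\|z_i\|^2/\sigma^2$ and the standard tail bound $\Pr(\chi^2_n>n\alpha)\le(\alpha e^{1-\alpha})^{n/2}$ (which you derive by a Chernoff argument where the paper simply cites it), and part (2) via rotational invariance plus the spherical-cap lemma. Your explicit conditioning step for random independent $z$ is a small but worthwhile addition that the paper's one-line proof glosses over.
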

\begin{proof}
The second property follows directly from Lemma~\ref{lemma:spherical_cap} as Gaussian vector has a uniformly random direction.

To show the first property, we observe that the sum of $n$ independent square Gaussian random variables follows $\chi^2$ distribution with degree of freedom $n$. In other words, we have
$$\|z_i\|^2 = |Z_{1i}|^2+...+|Z_{ni}|^2 \sim \frac{\sigma^2}{n}\chi^2(n).$$
By Hoeffding's inequality, we have an approximation of its CDF~\citep{dasgupta2002xi_square_concentration}, which gives us
$$Pr(\|z_i\|^2>\alpha\sigma^2)=1-\mathrm{CDF}_{\chi^2_n}(\alpha)\leq (\alpha e^{1-\alpha})^{\frac{n}{2}}.$$
Substitute $\alpha=1+t$, we obtain the concentration statement in the lemma.
\end{proof}

By Lemma~\ref{lemma:random_gaussian}, $\delta = \max_i \|z_i\|$ is bounded with high probability. $\delta_1$ can be bounded even more tightly because each $\mathcal{S}_{\ell}$ is low-rank. Likewise, $\cos(\angle(z,\nu))$ is bounded by a small value with high probability. Moreover, since $\nu=\lambda e=\lambda(x_i-X_{-i}c)$, $\nu_2 = \lambda\mathbb{P}_{\mathcal{S}^{\perp}_{\ell}}(z_i - Z_{-i}c)$. Thus $\nu_2$ is indeed a weighted sum of random noise in a $(n-d_{\ell})$-dimensional subspace. Consider $y$ a fixed vector, $\cos(\angle(y,\nu_2))$ is also bounded with high probability.

Replace these observations into \eqref{eq:dual_separation_condition} and the corresponding bound of $\|\nu_1\|$ and $\|\nu_2\|$, we obtain the equivalent \emph{dual separation condition} under the random noise model (equivalent to \eqref{eq:dual_cond_simp} in the proof of the deterministic case). This is formalized in the following lemma.

\begin{lemma}[Dual separation condition under random noise]\label{lemma:dual_sep_random}
Let $\rho:=\lambda\delta(1+\delta)$ and
$$\epsilon := \sqrt{\frac{6\log N}{n-\max_{\ell}{d_{\ell}}}}\leq \sqrt{\frac{C\log(N)}{n}}$$ for some constant $C$. Under random noise model, if for each $\ell=1,...,L$
\begin{equation}\label{eq:dual_sep_random}
  \mu(\mathcal{X}_{\ell})+\delta\epsilon +3\rho\epsilon  \leq (1-\rho\epsilon)(\max_{i}r(\mathcal{Q}_{-i}^{(\ell)})-\delta\sqrt{d_{\ell}}\epsilon),
\end{equation}
then dual separation condition \eqref{eq:dual_separation_condition} holds for all data points with probability at least $1-8/N$.
\end{lemma}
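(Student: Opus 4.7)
The plan is to redo the deterministic argument but sharpen three quantities that had to be worst-cased before: $\delta_1=\max_{i,\ell}\|\mathbb{P}_{\mathcal{S}_\ell}z_i\|$, $|\cos(\angle(z,\nu))|$, and $|\cos(\angle(y,\nu_2))|$. The structural bounds \eqref{eq:nu1_bound} on $\|\nu_1\|$ and \eqref{eq:nu2_bound} on $\|\nu_2\|$ transfer verbatim, since they only used feasibility/optimality of the fictitious dual $\mathbf{D}_1$ together with the magnitudes (not the directions) of the noise.

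First, $\delta_1$: for each $\ell$, fix an orthonormal basis $\{u_k\}_{k=1}^{d_\ell}$ of $\mathcal{S}_\ell$ and apply Lemma~\ref{lemma:spherical_cap} to each inner product $\langle u_k,z_i\rangle$ (legitimate by spherical symmetry of $z_i$), then take a union bound over $k,i,\ell$. This yields $\|\mathbb{P}_{\mathcal{S}_\ell}z_i\|\le \delta\sqrt{d_\ell}\,\epsilon$ uniformly, with failure probability at most $2LNn\,e^{-n\epsilon^2/2}$, which is $O(1/N)$ for $\epsilon^2=6\log N/(n-\max_\ell d_\ell)$. Second, for any external column $z$ (with $z=z_i$ for an $i$ outside cluster $\ell$), the fictitious dual $\nu$ is constructed only from $X^{(\ell)}_{-j}$ and $x_j^{(\ell)}$, hence is independent of $z$. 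Conditioning on $\nu$ and invoking Lemma~\ref{lemma:spherical_cap} gives $|\cos(\angle(z,\nu))|\le \epsilon$ with probability $\ge 1-2e^{-n\epsilon^2/2}$; union-bound over all cross pairs finishes this step.

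Third, and this is the subtle step, I need $|\cos(\angle(y,\nu_2))|\le \epsilon$ for any deterministic $y$ in another subspace. The key is a rotation-invariance argument. Let $R$ be an orthogonal transformation that is the identity on $\mathcal{S}_\ell$ and an arbitrary orthogonal map on $\mathcal{S}_\ell^\perp$. Because each column of $Z^{(\ell)}$ is spherically symmetric, applying $R$ to its $\mathcal{S}_\ell^\perp$-components leaves the joint distribution unchanged; because $Y^{(\ell)}\subset\mathcal{S}_\ell$, $RY^{(\ell)}=Y^{(\ell)}$ and hence $RX^{(\ell)}$ has the same distribution as $X^{(\ell)}$. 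Substituting $\tilde\nu=R^T\nu$ in $\mathbf{D}_1$ shows that if $\nu$ solves the program for noise realization $Z$, then $R\nu$ solves it for the realization with rotated $\mathcal{S}_\ell^\perp$-components. Therefore $\nu_2=\mathbb{P}_{\mathcal{S}_\ell^\perp}\nu$ is rotation-invariant in distribution inside $\mathcal{S}_\ell^\perp$, so $\nu_2/\|\nu_2\|$ is uniform on the unit sphere of that $(n-d_\ell)$-dimensional subspace. Applying Lemma~\ref{lemma:spherical_cap} within $\mathcal{S}_\ell^\perp$ to the fixed vector $\mathbb{P}_{\mathcal{S}_\ell^\perp}y$ gives $|\langle y,\nu_2\rangle|=|\langle \mathbb{P}_{\mathcal{S}_\ell^\perp}y,\nu_2\rangle|\le \epsilon\|y\|\|\nu_2\|$ with probability $\ge 1-2e^{-(n-d_\ell)\epsilon^2/2}=1-2/N^3$, precisely matching the chosen $\epsilon$.

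With these three sharpened bounds in hand, I substitute $\delta_1\le\delta\sqrt{d_\ell}\epsilon$ and the two cosine bounds into \eqref{eq:showing_dual_sep_cond}, combine with \eqref{eq:nu1_bound} and \eqref{eq:nu2_bound}, and simplify in exact parallel to the derivation producing \eqref{eq:dual_cond_simp}. Every worst-case factor of $1$ that appeared in the deterministic bound is now replaced by $\epsilon$ or $\sqrt{d_\ell}\,\epsilon$, and the resulting sufficient condition is exactly \eqref{eq:dual_sep_random}. A final union bound over the $O(N^2)$ cross pairs, combined with the $O(1/N)$ failure probability from the $\delta_1$ step, gives an overall failure probability $\le 8/N$ after keeping track of constants across the three concentration events. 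The main obstacle is the rotation-invariance step for $\nu_2$: because $\nu_2$ is an implicit function of the noise through a constrained quadratic program, one cannot read off its distribution directly and must instead argue equivariance of the program under the $R$-action and invoke uniqueness of the dual optimum.
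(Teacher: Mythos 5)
Your treatment of $\delta_1$, of $\cos(\angle(z,\nu))$ for external noise columns, and of $\cos(\angle(y,\nu_2))$ (via the rotation-equivariance of $\mathbf{D}_1$ on $\cS_\ell^\perp$, which is in fact a cleaner justification than the paper's informal ``weighted sum of random noise'' remark) all match the intended argument. But there is a genuine gap: you assert that the bound \eqref{eq:nu1_bound} on $\|\nu_1\|$ ``transfers verbatim,'' and with that bound you cannot reach the stated condition \eqref{eq:dual_sep_random}. The paper sharpens a \emph{fourth} quantity that you leave worst-cased: in the relaxation \eqref{eq:relax_constraint} of the dual feasibility constraint $x_j^T\nu\le 1$ for the \emph{in-cluster} columns $x_j$ of $X^{(\ell)}_{-i}$, the term $z_j^T\nu_2$ is bounded under random noise by $\delta\|\nu_2\|\,|\cos(\angle(z_j,\nu_2))|\le \delta\epsilon\|\nu_2\|$ rather than by $\delta\|\nu_2\|$. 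This yields the improved bound
\begin{equation*}
\|\nu_1\|\;\le\;\frac{1+\delta\epsilon\|\nu_2\|}{r(\mathcal{Q}_{-i}^{(\ell)})-\delta\sqrt{d_\ell}\,\epsilon},
\end{equation*}
i.e.\ the paper's \eqref{eq:rand_relax_constraint}, with an extra factor of $\epsilon$ multiplying $\delta\|\nu_2\|$ in the numerator.

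The loss is not cosmetic. Tracing your version through: the cross term $(\mu+\delta\epsilon)\cdot\delta\|\nu_2\|/(r-\delta\sqrt{d_\ell}\epsilon)$ with $\|\nu_2\|\le\lambda\delta(1/r+1)$ contributes roughly $2\lambda\delta^2/(r-\delta\sqrt{d_\ell}\epsilon)\le 2\rho/(r-\delta\sqrt{d_\ell}\epsilon)$, so your sufficient condition becomes $\mu+\delta\epsilon+2\rho+\rho\epsilon\le(1-\rho\epsilon)(r-\delta\sqrt{d_\ell}\epsilon)$ --- a $2\rho$ where the lemma requires $2\rho\epsilon$. Since $\epsilon\sim\sqrt{\log N/n}$, this is weaker by a factor of order $\sqrt{n/\log N}$ in exactly the term that matters: it forces $\lambda\delta(1+\delta)\lesssim r_\ell-\mu_\ell$ instead of $\lambda\delta(1+\delta)\epsilon\lesssim r_\ell-\mu_\ell$, which collapses the noise tolerance back to the deterministic rate $\delta(1+\delta)=O(r(r_\ell-\mu_\ell))$ and loses the $n^{1/4}$ improvement that is the entire point of Theorem~\ref{thm:thm_random_noise}. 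To close the gap you need to add one more concentration event, $|\cos(\angle(z_j,\nu_2))|\le\epsilon$ for the in-cluster noise columns appearing in the constraints of $\mathbf{D}_1$, and propagate it into the $\|\nu_1\|$ bound before substituting into \eqref{eq:showing_dual_sep_cond}.
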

\begin{proof}
Recall that we want to find an upper bound of $|\langle x, \nu \rangle|$.
\begin{equation}\label{eq:showing_dual_sep_cond_rand}
\begin{aligned}
    |\langle x, \nu \rangle| \leq& \mu \|\nu_1\|+ \|y\|\|\nu_2\||\cos(\angle (y,\nu_2))|
    + \|z\|\|\nu\||\cos(\angle (z,\nu))|
\end{aligned}
\end{equation}
Here we will bound the two cosine terms and $\delta_1$ under the random noise model.

  As discussed above, directions of $z$ and $\nu_2$ are independently and uniformly distributed on the $n$-dimension unit sphere. Then by Lemma~\ref{lemma:spherical_cap},
  $$\begin{cases}
    Pr\left(\cos(\angle(z,\nu)) > \sqrt{\frac{6\log N}{n}}\right) \leq \frac{2}{N^3};\\
    Pr\left(\cos(\angle(y,\nu_2)) > \sqrt{\frac{6\log N}{n-d_{\ell}}}\right) \leq \frac{2}{N^3};\\
    Pr\left(\cos(\angle(z,\nu_2)) > \sqrt{\frac{6\log N}{n}}\right) \leq \frac{2}{N^3}.
  \end{cases}$$
  Using the same technique, we derive a bound for $\delta_1$. Given an orthonormal basis $U$ of $S_{\ell}$, $\mathbb{P}_{S_{\ell}}z = UU^Tz$, then $$\|UU^Tz\|=\|U^Tz\| =\sqrt{ \sum_{i=1,...,d_{\ell}}|U_{:,i}^Tz|^2}.$$
    Apply Lemma~\ref{lemma:spherical_cap} for each $i$,
     then by union bound, we get:
  $$Pr\left(\|\mathbb{P}_{S_{\ell}}z\| > \sqrt{\frac{ 6d_{\ell}\log N}{n}}\delta\right) \leq \frac{2d_{\ell}}{N^3}.$$
  Since $\delta_1$ is the worse case bound for all $L$ subspace and all $N$ noise vector, then a union bound gives:
  $$Pr\left(\delta_1 > \sqrt{\frac{ 6d_{\ell}\log N}{n}}\delta\right) \leq \frac{2\sum_{\ell}d_{\ell}}{N^2}$$
  Moreover, we can find a probabilistic bound for $\|\nu_1\|$ too by a variation of \eqref{eq:relax_constraint} for the random case, which now becomes
\begin{equation}\label{eq:rand_relax_constraint}
  y_i^T\nu_1+(\mathbb{P}_{\mathcal{S}_{\ell}} z_i)^T\nu_1 \leq 1-z_i^T\nu_2\leq 1+\delta_2\|\nu_2\||\cos{\angle(z_i,\nu_2)}|.
\end{equation}
  Substituting the upper bound of the cosines to \eqref{eq:showing_dual_sep_cond_rand} and \eqref{eq:rand_relax_constraint}, we get respectively
  \begin{equation*}
    |\langle x, \nu \rangle| \leq \mu\|\nu_1\| + \|y\|\|\nu_2\|\sqrt{\frac{6\log N}{n-d_{\ell}}} + \|z\|\|\nu\|\sqrt{\frac{6\log N}{n}},
  \end{equation*}
and
$$
\|\nu_1\|\leq \frac{1+\delta\|\nu_2\|\sqrt{\frac{6\log N}{n}}}{r(\mathcal{Q}_{-i}^{\ell})-\delta_1}.
$$
This new bound of $\|\nu_1\|$ follows from \eqref{eq:rand_relax_constraint}, Lemma~\ref{lemma:circum_inradius}~and~\ref{lemma:Y_containing_set}. For the bound of $\|\nu_2\|$ we simply use \eqref{eq:nu2_bound}:
$$\|\nu_2\|\leq \lambda  \delta \left(\frac{1}{r(\mathcal{Q}_{-i}^{\ell})}+1\right).$$

To lighten notations in this proof, denote $$r:=r(\mathcal{Q}_{-i}^{\ell}), \quad\epsilon := \sqrt{\frac{6\log N}{n-\max_{\ell}{d_{\ell}}}},\quad \mu:=\mu(\mathcal{X}_{\ell}).$$
Substitute them in the bound, we get
  \begin{align}
    |\langle x, \nu \rangle| \leq& \frac{\mu+\delta\epsilon}{r-\epsilon \sqrt{d_{\ell}}\delta}+\frac{\lambda\delta^2(\mu+\delta\epsilon)\epsilon}{r-\epsilon \sqrt{d_{\ell}}\delta}\left(\frac{1}{r}+1\right)
    +\lambda \delta \epsilon \left(\frac{1}{r}+1\right)
    + \lambda \delta^2\epsilon\left(\frac{1}{r}+1\right)\nonumber\\
    =& \frac{\mu+\delta\epsilon}{r-\epsilon \sqrt{d_{\ell}}\delta}+
    \frac{\lambda\epsilon\delta^2(\frac{\mu+\delta\epsilon}{r}) + \lambda\epsilon\delta^2(\mu+\delta\epsilon)}{r-\epsilon \sqrt{d_{\ell}}\delta}+\frac{\lambda\delta(\delta+1)\epsilon}{r}+\lambda\delta(\delta+1)\epsilon\nonumber\\
    \explain{\leq}{*}&\frac{\mu+\delta\epsilon}{r-\epsilon \sqrt{d_{\ell}}\delta}+
    \frac{\lambda \epsilon \delta^2}{r-\epsilon \sqrt{d_{\ell}}\delta} + \frac{\lambda \epsilon\delta^2}{r-\epsilon \sqrt{d_{\ell}}\delta} + \frac{\lambda \epsilon(\delta+\delta^2)}{r-\epsilon \sqrt{d_{\ell}}\delta} + \lambda\epsilon (\delta+\delta^2)
     \nonumber\\
     =& \frac{\mu+\delta\epsilon + \lambda \epsilon(\delta+3\delta^2)}{r-\epsilon \sqrt{d_{\ell}}\delta}+ \lambda \epsilon (\delta+\delta^2) \explain{\leq}{**} \frac{\mu+\delta\epsilon + 3\rho \epsilon}{r-\epsilon \sqrt{d_{\ell}}\delta}+ \rho \epsilon.\label{eq:derive_dual_sep_rand}
  \end{align}
In the inequality ``$*$'', we used $(\mu+\epsilon\delta)/r <1$ and $\mu + \epsilon\delta <1$; and in the inequality ``$**$'', we used $\lambda \delta^2 \leq \lambda\epsilon (\delta +\delta^2)$ and replaced all such expression with $\rho\epsilon$ that we defined earlier.


  Now impose the dual detection constraint on the upper bound, we get:
  $$ \rho\epsilon + \frac{\mu +\delta\epsilon + 3\rho\epsilon}{r-\delta\sqrt{d_\ell}\epsilon} < 1.$$
  Reorganized the inequality, we reach the desired condition:
  $$ \mu  +\delta\epsilon< (1-\rho\epsilon)(r-\delta\sqrt{d_{\ell}}\epsilon)  -3\rho\epsilon. $$
  There are $N^2$ instances for each of the three events related to the consine value, apply union bound we get the failure probability $\frac{6}{N}+\frac{2\sum_{\ell}{d_{\ell}}}{N^2} \leq \frac{8}{N}$. Note $\sum_{\ell} d_{\ell} \leq N$ because one needs at least $d_{\ell}$ data points to span an $d_{\ell}$ dimensional subspace. This concludes the proof.
\end{proof}


\begin{lemma}[Avoid trivial solution under random noises]\label{lemma:avoid_trivial_random}
Let $\epsilon = \sqrt{\frac{6\log N}{n}}$ and assume $\min_\ell r_\ell - 2\epsilon\delta-\epsilon\delta^2>0$.  If we take
\begin{equation}\label{eq:avoid_trivial_random}
\lambda > (r_\ell - 2\epsilon\delta-\epsilon\delta^2)^{-1}
 \end{equation}
 for every $\ell=1,...,n$, then the solution $c_i\neq 0$ for all $i$ with probability at least $1-6/N^2$.
\end{lemma}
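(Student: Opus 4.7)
The plan is to mirror the deterministic argument of Section~\ref{sec:avoid_trivial} almost verbatim, replacing only the crude Cauchy--Schwarz bounds on the three noise cross terms by sharper spherical-cap bounds that exploit Definition~\ref{def:Random_noise_model}. As in the deterministic case, tracing the LASSO regularization path shows that a sufficient condition for the $i$-th column to produce a non-trivial $c_i$ is $\lambda>1/\|[X_{-i}^{(\ell)}]^T x_i\|_\infty$, so the task reduces to lower-bounding this infinity norm simultaneously over all $i$ with probability at least $1-6/N^2$.

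Fix any $i$ belonging to subspace $\cS_\ell$ and let $k$ be the index of the column of $Y_{-i}^{(\ell)}$ attaining $\|[Y_{-i}^{(\ell)}]^T y_i\|_\infty \geq r(\mathcal{Q}^{(\ell)}_{-i}) \geq r_\ell$; note that $k$ depends only on the noiseless data and is therefore independent of $Z$. Expanding as in \eqref{eq:lowerbounding_max_affinity},
\begin{equation*}
\langle x_k, x_i\rangle \;=\; \langle y_k, y_i\rangle + \langle y_k, z_i\rangle + \langle z_k, y_i\rangle + \langle z_k, z_i\rangle,
\end{equation*}
I would apply Lemma~\ref{lemma:spherical_cap} three times. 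Since each $z_j$ is spherically symmetric with $\|z_j\|\leq\delta$, the lemma with the prescribed $\epsilon=\sqrt{6\log N/n}$ yields $|\langle y_k, z_i\rangle|\leq\epsilon\delta$ and $|\langle z_k, y_i\rangle|\leq\epsilon\delta$ each with failure probability at most $2e^{-n\epsilon^2/2}=2/N^3$. For the bilinear term $\langle z_k, z_i\rangle$, I would condition on $z_i$ so it becomes a fixed vector of norm at most $\delta$, then apply the lemma to the independent spherical variable $z_k$ (independence follows from columnwise i.i.d.\ in Definition~\ref{def:Random_noise_model}), obtaining $|\langle z_k, z_i\rangle|\leq\epsilon\delta^2$ with failure probability at most $2/N^3$.

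Combining the bounds on the good event yields
\begin{equation*}
\|[X_{-i}^{(\ell)}]^T x_i\|_\infty \;\geq\; |\langle x_k, x_i\rangle| \;\geq\; r_\ell - 2\epsilon\delta - \epsilon\delta^2.
\end{equation*}
A union bound over $i=1,\dots,N$ and the three cross-term events per $i$ (at most $3N$ events in total) gives an overall failure probability of at most $6N/N^3=6/N^2$. Under the stated positivity assumption $\min_\ell r_\ell - 2\epsilon\delta - \epsilon\delta^2 > 0$, the hypothesis $\lambda>(r_\ell - 2\epsilon\delta - \epsilon\delta^2)^{-1}$ for each $\ell$ then forces $c_i\neq 0$ for every $i$, completing the proof.

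There is essentially no obstacle here: the only subtle point is the $\langle z_k, z_i\rangle$ term where both vectors are random, which is cleanly handled by the conditioning trick above. The payoff is an $\epsilon$-factor improvement over the deterministic bound $1/(r-2\delta-\delta^2)$ of Section~\ref{sec:avoid_trivial}, and it is precisely this tightening that enables the sharper noise-tolerance rates in Theorem~\ref{thm:thm_random_noise}.
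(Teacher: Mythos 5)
Your proposal is correct and follows essentially the same route as the paper's proof: reuse the deterministic regularization-path argument of Section~\ref{sec:avoid_trivial}, replace the Cauchy--Schwarz bounds on the three cross terms in \eqref{eq:lowerbounding_max_affinity} by Lemma~\ref{lemma:spherical_cap} (each failing with probability at most $2/N^3$), and union bound over the $3N$ events to get $6/N^2$. Your explicit remarks that the maximizing index $k$ is determined by the noiseless data (hence independent of $Z$) and that the bilinear term is handled by conditioning on $z_i$ are details the paper leaves implicit, but the argument is the same.
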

\begin{proof}
We use the same argument as in Section~\ref{sec:avoid_trivial}, except that we now have a tighter probabilistic bound for \eqref{eq:lowerbounding_max_affinity}. For any $i$, $k$ in the equation, $z_i$ and $z_k$ are independent to each other and to $y_k$, $y_i$ respectively. Therefore, we can invoke Lemma~\ref{lemma:spherical_cap} and obtain
$$\left|\langle y_k, z_i\rangle + \langle z_k, y_i\rangle + \langle z_k, z_i\rangle\right| \leq 2\epsilon \delta + \epsilon \delta^2,$$
with probability greater than $2/N^3$.
The proof is complete by taking the union bound over all $3 \sum_i N_\ell = 3N$ instances.
\end{proof}

\subsection{Proof of Theorem~\ref{thm:thm_random_noise} for Deterministic Data and Random Noise}
We now prove Theorem~\ref{thm:thm_random_noise}. Lemma~\ref{lemma:dual_sep_random} has already provided the separation condition. The things left are to find the range of $\lambda$ and update the condition of $\delta$.

\noindent\textbf{The range of $\lambda$:}
The range of valid $\lambda$ for the random noise case can be obtained by substituting $\delta_1<\delta\sqrt{d_{\ell}}\epsilon$ in \eqref{eq:avoid_trivial_random} and rewriting \eqref{eq:dual_sep_random} with respect to $\lambda$. This gives us
\begin{equation}\label{eq:lambda_range_rand}
\frac{1}{r- 2\epsilon \delta-\epsilon\delta^2}<
        \lambda<\frac{r_{\ell}-\mu_{\ell}-\delta\epsilon - \delta \sqrt{d_{\ell}} \epsilon}{\epsilon\delta(1+\delta)(3+r_{\ell}-\delta\sqrt{d_{\ell}}\epsilon)}.
\end{equation}

  We remark that acritical difference from the deterministic noise model is that now there is a small $\epsilon$ in the denominator of the upper endpoint of the interval. Assume small $\mu$, the valid range of $\lambda$ expands to an order of $\Theta(1/r)\leq\lambda\leq \Theta(r/(\epsilon\max\{\delta^2,\delta\}))$.


\noindent\textbf{The condition of $\delta$:}
Now we will show that the two conditions
\begin{align*}
 \epsilon\delta<\min_{\ell}\frac{r_{\ell}-\mu_{\ell}}{2\sqrt{d_{\ell}}+2}, &&\text{and}&& \epsilon\delta(1+\delta) < \min_{\ell}\frac{r(r_\ell-\mu_\ell)}{4r_\ell+6},
\end{align*}
stated in the Theorem~\ref{thm:thm_random_noise} are sufficient for the three inequalities
\begin{empheq}[left=\empheqlbrace]{align}
    &r_\ell-\mu_\ell - \delta\epsilon >0;\label{eq:rand_ineq1}\\
  &r-2\delta\epsilon -\epsilon\delta^2 >0;\label{eq:rand_ineq2}\\
&\frac{1}{r- 2\epsilon \delta-\epsilon\delta^2}<\frac{r_{\ell}-\mu_{\ell}-\delta\epsilon - \delta \sqrt{d_{\ell}} \epsilon}{\epsilon\delta(1+\delta)(3+r_{\ell}-\delta\sqrt{d_{\ell}}\epsilon)};\label{eq:rand_ineq3}
\end{empheq}
to hold for $\ell=1,...,L$. Note that we used \eqref{eq:rand_ineq1} in \eqref{eq:derive_dual_sep_rand} when we derive the dual separation condition and  \eqref{eq:rand_ineq2} is assumed in Lemma~\ref{lemma:avoid_trivial_random}, lastly \eqref{eq:rand_ineq3} ensures a valid $\lambda$ to exist in \eqref{eq:lambda_range_rand}.
Inequality \eqref{eq:rand_ineq1} and \eqref{eq:rand_ineq2} hold trivially given the two conditions, it remains to show \eqref{eq:rand_ineq3}:
\begin{align*}
\delta(1+\delta) < \frac{r(r_\ell-\mu_\ell)}{\epsilon(4r_\ell+6)}
\Leftrightarrow&\; \epsilon\delta(1+\delta) (2r_\ell + 3) < \frac{r(r_\ell-\mu_\ell)}{2}\\
\Rightarrow &\; \epsilon\delta(1+\delta) (r_\ell -\mu_\ell + r_\ell + 3  ) <  \frac{r(r_\ell-\mu_\ell)}{2}\\
\Leftrightarrow &\; \epsilon\delta(1+\delta) ( r_\ell + 3  ) + 2\epsilon \delta(1+\delta)\frac{r_\ell -\mu_\ell}{2} < \frac{r(r_\ell-\mu_\ell)}{2}\\
\Leftrightarrow& \;  \frac{1}{r-2\epsilon\delta -2\epsilon\delta^2}<\frac{r_\ell -\mu_\ell}{2\epsilon\delta(1+\delta) ( r_\ell + 3  )}\\
\Rightarrow &\; \frac{1}{r-2\epsilon\delta -\epsilon\delta^2}<\frac{r_\ell -\mu_\ell}{2\epsilon\delta(1+\delta) ( r_\ell + 3 -\delta\sqrt{d_\ell}\epsilon )}
\stackrel{(a)}{\Rightarrow}  \eqref{eq:rand_ineq3},
\end{align*}
where (a) holds by applying the first condition.
This concludes the proof for Theorem~\ref{thm:thm_random_noise}.

\subsection{Proof of Theorem~\ref{thm:semirandom}  for the Semirandom Model with Random Noise}
To prove Theorem~\ref{thm:semirandom}, we only need to bound the inradii $r$ and the incoherence parameter $\mu$ under the new assumptions, then plug them into Theorem~\ref{thm:thm_random_noise}.

\begin{lemma}[Inradius bound of random samples]\label{lemma:random_inradius}
  In random sampling setting, when each subspace is sampled $N_{\ell}=\kappa_{\ell} d_{\ell}$ data points randomly, we have:
  \begin{align*}
    Pr\left\{c(\kappa_{\ell})\sqrt{\frac{
    \beta\log{(\kappa_{\ell})}}{ d_{\ell}}}\leq r(\mathcal{Q}^{(\ell)}_{-i})\text{ for all pairs }(\ell,i)\right\}
    \geq 1-\sum_{\ell=1}^{L}N_{\ell}e^{-d_{\ell}^{\beta}N_{\ell}^{1-\beta}}
  \end{align*}
\end{lemma}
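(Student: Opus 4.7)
The plan is to reduce the claim to a one-shot probabilistic bound on the inradius of the symmetric convex hull of $N_\ell-1$ uniformly random points on the unit sphere inside a $d_\ell$-dimensional subspace, and then apply union bounds over the leave-one-out index $i$ and the subspace index $\ell$. The starting identity I would exploit is the dual characterization
\begin{equation*}
r(\mathcal{Q}_{-i}^{(\ell)}) \;=\; \min_{\substack{w\in\mathcal{S}_\ell\\ \|w\|=1}} \max_{j\neq i}\;\bigl|\langle w, y_j^{(\ell)}\rangle\bigr|,
\end{equation*}
which comes from the fact that $\mathcal{Q}_{-i}^{(\ell)}=\conv(\pm y_j^{(\ell)}: j\neq i)$ is a symmetric polytope and its inradius equals the minimum of its support function over the unit sphere of $\cS_\ell$.

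Next I would fix a direction $w$ on the unit sphere of $\cS_\ell$ and control $\max_{j\neq i}|\langle w, y_j\rangle|$ from below. By the lower tail companion of Lemma~\ref{lemma:spherical_cap} applied inside $\cS_\ell$, there is an explicit function $q(t,d_\ell)$ such that $\P(|\langle w, y_j\rangle|\geq t)\geq q(t,d_\ell)$, with $q(t,d_\ell)\gtrsim \exp(-c\,d_\ell t^2)$ for $t$ of order $\sqrt{\log \kappa_\ell/d_\ell}$. Independence of the $y_j$ then gives
\begin{equation*}
\P\Bigl(\max_{j\neq i}|\langle w, y_j\rangle|<t\Bigr) \;\leq\; \bigl(1-q(t,d_\ell)\bigr)^{N_\ell-1}\;\leq\; \exp\bigl(-(N_\ell-1)\,q(t,d_\ell)\bigr).
\end{equation*}

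To upgrade this pointwise bound to a uniform bound over all directions $w$, I would use a standard $\epsilon$-net argument: cover the unit sphere of $\cS_\ell$ by a net of size at most $(3/\epsilon)^{d_\ell}$, apply the pointwise bound plus a union bound over the net, and transfer back to the whole sphere using the fact that $w\mapsto \max_{j}|\langle w, y_j\rangle|$ is $1$-Lipschitz (since $\|y_j\|=1$), which only shrinks the threshold from $t$ to $t-\epsilon$. Taking $t$ of the form $c(\kappa_\ell)\sqrt{\beta \log\kappa_\ell/d_\ell}$ and the net-scale $\epsilon$ proportional to but strictly smaller than $t$, the net-cardinality factor $(3/\epsilon)^{d_\ell}$ is absorbed into the exponent, giving a single-subspace, single-$i$ failure probability of the form $\exp(-d_\ell^{\beta} N_\ell^{1-\beta})$. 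Finally, a union bound over the $N_\ell$ choices of $i$ in each subspace and then over $\ell=1,\dots,L$ yields the advertised failure probability $\sum_\ell N_\ell\, e^{-d_\ell^\beta N_\ell^{1-\beta}}$.

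The main technical obstacle is the simultaneous calibration of $t$, $\epsilon$, and $\beta$: the exponent produced by the net union bound is schematically $(N_\ell-1)q(t,d_\ell) - d_\ell \log(3/\epsilon)$, and I need to choose $t$ and $\epsilon$ so that this exponent is at least $d_\ell^{\beta}N_\ell^{1-\beta}$ (the nontrivial $\beta\in(0,1)$ exponent reflecting the trade-off between sample richness $N_\ell$ and ambient dimension $d_\ell$), while at the same time keeping the surviving threshold $t-\epsilon$ of order $c(\kappa_\ell)\sqrt{\beta\log\kappa_\ell/d_\ell}$. The function $c(\kappa_\ell)$ is in effect defined by this balancing; it is positive and grows mildly with $\kappa_\ell$, and pinning down its precise form is the only delicate calculation. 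Everything else (the $\inf$-$\sup$ representation of the inradius, the spherical-cap lower bound, the Lipschitz transfer, and the final union bound) is routine.
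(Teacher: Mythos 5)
First, a point of reference: the paper does not actually prove this lemma --- Section~\ref{sec:proof_randomized} imports it verbatim from Section~7.2.1 of \citet{soltanolkotabi2011geometric}, who in turn rely on lower bounds for random polytopes from the convex-geometry literature (the Litvak--Pajor--Rudelson--Tomczak-Jaegermann--type inclusion $\conv(\pm y_1,\dots,\pm y_N)\supseteq c\sqrt{\log(N/d)/d}\,B_2^d$). Your skeleton is the right starting point and matches how such results are proved: the dual identity $r(\mathcal{Q}_{-i}^{(\ell)})=\min_{\|w\|=1,\,w\in\cS_\ell}\max_{j\neq i}|\langle w,y_j^{(\ell)}\rangle|$, the spherical-cap small-ball estimate, independence across $j$, and the final union bound over $i$ and $\ell$ (which correctly produces the $N_\ell$ prefactor) are all sound.

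The gap is in the step you describe as ``the only delicate calculation.'' With a single-scale net at resolution $\epsilon\asymp t\asymp\sqrt{\log\kappa_\ell/d_\ell}$ (forced by the $1$-Lipschitz transfer), the net cardinality is $(3/\epsilon)^{d_\ell}=\exp\bigl(\tfrac{d_\ell}{2}\log d_\ell(1+o(1))\bigr)$. On the other side, to end with a surviving threshold of order $c(\kappa_\ell)\sqrt{\beta\log\kappa_\ell/d_\ell}$ with $c(\kappa_\ell)$ bounded below (the lemma asserts $c(\kappa)=1/\sqrt{8}$ for all $\kappa>\kappa_0$), you must take $t$ of at least that order, and then $1-q(t,d_\ell)=\P(|\langle w,y\rangle|<t)\approx\P(|g|<c\sqrt{\beta\log\kappa_\ell})$ is bounded below by a quantity depending only on $\kappa_\ell$; hence $-(N_\ell-1)\log(1-q)=\Theta_{\kappa_\ell}(N_\ell)=\Theta_{\kappa_\ell}(d_\ell)$. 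The union bound over the net is therefore $\exp\bigl(\Theta(d_\ell\log d_\ell)-\Theta_{\kappa_\ell}(d_\ell)\bigr)$, which exceeds $1$ once $d_\ell\gtrsim e^{C\kappa_\ell}$. This is not a matter of calibrating $t$, $\epsilon$, $\beta$, or $c(\kappa)$: a $d_\ell\log d_\ell$ cost can never be paid by a $d_\ell\kappa_\ell^{1-\beta}$ budget when $\kappa_\ell$ is a fixed constant, and that is exactly the regime the paper needs (constant $\kappa$ with $d$ as large as a constant fraction of $n/\log N$ in Theorem~\ref{thm:fullrandom}). The naive net argument only yields the lemma when $\kappa_\ell\gtrsim\log d_\ell$, or yields the weaker inradius $c/\sqrt{d_\ell}$ without the $\sqrt{\log\kappa_\ell}$ gain. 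Closing the gap requires an additional structural ingredient --- e.g., a preliminary $\ell_2$-based inclusion via the smallest singular value of the $d_\ell\times N_\ell$ sample matrix (which needs only a constant-resolution net) combined with a count of how many coordinates of $Y^Tw$ can simultaneously be small, as in the Litvak et al.\ proof --- and importing such a result is effectively what the paper does by citation.
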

This is extracted from Section-7.2.1 of \citet{soltanolkotabi2011geometric}. $\kappa_{\ell}=(N_{\ell}-1)/{d_\ell}$ is the relative number of iid samples. $c(\kappa)$ is some positive value for all $\kappa>1$ and for a numerical value $\kappa_0$, if $\kappa>\kappa_0$, we can take $c(\kappa)=\frac{1}{\sqrt{8}}$. Take $\beta=0.5$, we get the required bound of $r$ in Theorem~\ref{thm:semirandom}.

Now, we   provide a probabilistic upper bound of the projected subspace incoherence condition under the semi-random model by adapting Lemma~7.5 of \citet{soltanolkotabi2011geometric} into our new setup.

\begin{lemma}[Incoherence bound]\label{lemma:deterministic_incoherence}
In deterministic subspaces/random sampling setting, the subspace incoherence is bounded from above:
\begin{align*}
   Pr\Bigl\{&\mu(\mathcal{X}_{\ell})\leq t\left(\log[(N_{\ell}+1)N_{\ell^\prime}]+\log L\right)\frac{\mathrm{aff}(S_{\ell},S_{\ell^\prime})}{\sqrt{d_{\ell}}\sqrt{d_{\ell^\prime}}}\\
   &\text{ for all pairs}(\ell,\ell^\prime)\text{ with }\ell\neq \ell^\prime\Bigr\}
   \geq 1- \frac{1}{L^2}\sum_{\ell\neq \ell^\prime} \frac{1}{(N_{\ell}+1)N_{\ell^\prime}}e^{-\frac{t}{4}}.
\end{align*}
\end{lemma}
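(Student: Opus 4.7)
By Definition~\ref{def:incoherence},
$\mu(\mathcal{X}_\ell) = \max_{j,\ell' : \ell'\neq \ell} \max_{i\in [N_\ell]} |\langle v_i^{(\ell)},\, y_j^{(\ell')}\rangle|$,
so the plan is to control a single inner product $|\langle v_i^{(\ell)}, y_j^{(\ell')}\rangle|$ with very high probability and then union bound over the $\sum_{\ell\neq\ell'} N_\ell N_{\ell'}$ ordered quadruples (plus both signs). The crucial structural observation is that $v_i^{(\ell)}$ is defined via the dual program in Definition~\ref{def:proj_dual_direction} applied to $(x_i^{(\ell)}, X_{-i}^{(\ell)})$, and thus depends only on the data and noise living in $\cS_\ell$. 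Since, in the semi-random model, the samples and the noise are independent across subspaces, $v_i^{(\ell)}$ is independent of $y_j^{(\ell')}$ whenever $\ell'\neq\ell$. Writing $v_i^{(\ell)} = U_\ell a_i$ and $y_j^{(\ell')} = U_{\ell'} b_j$, the target inner product becomes the bilinear form $a_i^T M b_j$, where $M := U_\ell^T U_{\ell'}$ and $\|M\|_F = \mathrm{aff}(\cS_\ell,\cS_{\ell'})$ by Definition~\ref{def:subspace_affinity}.

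\paragraph{Uniform law of $a_i$ via rotational symmetry.} Under the semi-random model the columns of $Y^{(\ell)}$ are i.i.d.\ uniform on $S^{n-1}\cap \cS_\ell$, and under the random noise model the noise columns are spherically symmetric in $\mathbb{R}^n$. Hence for any $Q\in O(d_\ell)$, if $Q'\in O(n)$ acts as $Q$ on $\cS_\ell$ and as identity on $\cS_\ell^\perp$, the joint distribution of $(x_i^{(\ell)}, X_{-i}^{(\ell)})$ is invariant under the coordinatewise action of $Q'$. A direct change-of-variables argument on the dual program $\mathbf{D}_1$ shows that when the inputs are rotated by $Q'$ the optimal dual variable transforms as $\nu \mapsto Q'\nu$, and hence $v_i^{(\ell)} \mapsto Q' v_i^{(\ell)}$. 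Thus the law of $a_i = U_\ell^T v_i^{(\ell)}$ is $O(d_\ell)$-invariant, and since $\|a_i\|=1$, it must be uniform on $S^{d_\ell -1}$. Likewise $b_j$ is uniform on $S^{d_{\ell'}-1}$ and independent of $a_i$.

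\paragraph{Concentration for $a_i^T M b_j$ and union bound.} With $a,b$ independent uniform on spheres, I would bound $|a^T M b|$ in two stages. Condition first on $b$ and apply the spherical cap bound (Lemma~\ref{lemma:spherical_cap}): $\Pr(|a^T Mb|>\eta\|Mb\|\mid b)\le 2e^{-d_\ell \eta^2/2}$. Unconditionally, $\|Mb\|^2$ concentrates around $\|M\|_F^2/d_{\ell'} = \mathrm{aff}^2/d_{\ell'}$: representing $b=g/\|g\|$ with $g\sim \mathcal{N}(0,I_{d_{\ell'}})$, standard chi-square concentration controls $\|g\|^2$ while a Hanson--Wright / Gaussian-bilinear-form bound gives the desired tail on $\|Mg\|^2$ with scale $\|M\|_F$ and $\|M\|_{\mathrm{op}}\leq \|M\|_F$. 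Combining these two events produces
\[
|a_i^T M b_j|\;\lesssim\; \frac{\mathrm{aff}(\cS_\ell,\cS_{\ell'})}{\sqrt{d_\ell d_{\ell'}}}\cdot s,
\]
with failure probability decaying sub-exponentially in $s$. Choosing $s = t\bigl(\log[(N_\ell+1)N_{\ell'}] + \log L\bigr)$ absorbs the logarithmic factors from union-bounding over all $(\ell,\ell',i,j,\pm)$ pairs, yielding the claimed simultaneous bound with overall failure probability at most $L^{-2}\sum_{\ell\neq\ell'}((N_\ell+1)N_{\ell'})^{-1}e^{-t/4}$.

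\paragraph{Main obstacle.} The hard part is getting the concentration step with the correct scaling: the lemma asserts a bound that is \emph{linear} in $\log$ rather than the $\sqrt{\log}$ one would get from a purely sub-Gaussian argument. This forces us into the sub-exponential regime for products of two sub-Gaussians, and tracking the sub-exponential parameters so that the combined failure exponent comes out as $t/4$ (rather than some weaker constant) requires careful bookkeeping of $\|M\|_F$ versus $\|M\|_{\mathrm{op}}$ and of the two chi-square / spherical-cap tail events. Once the concentration inequality is in hand with the right form, the remaining steps (the rotational symmetry of $a_i$ and the union bound) are essentially bookkeeping, adapting Lemma~7.5 of \citet{soltanolkotabi2011geometric} from the noiseless setting to the noisy dual direction here.
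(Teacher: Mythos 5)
Your proposal is correct and follows essentially the same route as the paper: you establish that the projected dual direction is uniformly distributed on the unit sphere of $\cS_\ell$ via equivariance of the dual program $\mathbf{D}_1$ under rotations that act within $\cS_\ell$ and fix $\cS_\ell^\perp$ (exactly the paper's $URU^T$ argument), and then control the bilinear form $|\langle v_i^{(\ell)}, y_j^{(\ell')}\rangle|$ by spherical-cap and concentration bounds before union-bounding. The paper defers that second concentration step to Lemma~7.5 of \citet{soltanolkotabi2011geometric} (spherical caps plus Borell's inequality), whereas you sketch it via chi-square/Hanson--Wright; these are interchangeable ways of obtaining the same sub-exponential tail.
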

\begin{proof}
The proof is an extension of a similar proof in \citet{soltanolkotabi2011geometric}.
First we will show that when noise $z_i^{(\ell)}$ is spherical symmetric, and clean data points $y_i^{(\ell)}$ has iid uniform random direction, projected dual directions $v_i^{(\ell)}$ also follows a uniform random distribution.

Now we  prove the claim. First by definition,
$$ v_i^{(\ell)} = v(x_i^{(\ell)},X_{-i}^{(\ell)},\mathcal{S}_{\ell},\lambda)=\frac{\mathbb{P}_{S_{\ell}}\nu}{\|\mathbb{P}_{S_{\ell}}\nu\|} = \frac{\nu_1}{\|\nu_1\|}.$$
Recall that $\nu$ is the unique optimal solution of $\mathbf{D}_1$ \eqref{eq:dual_fictitious2}. Fix $\lambda$, $\mathbf{D}_1$ depends on two inputs, so we denote $\nu(x,X)$ and consider $\nu$ a function. Moreover, $\nu_1=\mathbb{P}_{\mathcal{S}}\nu$ and $\nu_2=\mathbb{P}_{\mathcal{S}^{\perp}}\nu$. Let $U\in n\times d$ be a set of orthonormal basis of $d$-dimensional subspace $\mathcal{S}$ and a rotation matrix $R\in \mathbb{R}^{d\times d}$. Then rotation matrix within subspace is hence $URU^T$. Let
\begin{align*}
  x_1 :=& \mathbb{P}_{\mathcal{S}} x= y+z_1 \sim URU^Ty + URU^Tz_1,\\
  x_2 :=& \mathbb{P}_{\mathcal{S}^{\perp}} x = z_2.
\end{align*}
As $y$ is distributed uniformly on the unit sphere of $\mathcal{S}$, and $z$ is a spherical symmetric noise (hence $z_1$ and $z_2$ are also spherical symmetric in subspace), for any fixed $\|x_1\|$, the distribution is uniform on the sphere, namely the conditional distribution $Pr\left(x_1\middle|\|x_1\|=\alpha\right)$ is uniform on the sphere with radius $\alpha$. It suffices to show that with fixed $\|x_1\|$, $v$ (the unit direction of projected dual variable $\nu_1$) also follows a uniform distribution on a unit sphere of the subspace.

Since inner product $\langle x,\nu\rangle = \langle x_1,\nu_1\rangle + \langle x_2,\nu_2\rangle$, we argue that if $\nu$ is the optimal solution of
\begin{align*}
\quad \max_{\nu} \; \langle x,\nu \rangle - \frac{1}{2\lambda}\nu^T\nu,\quad\text{subject to:}\quad &\|X^T\nu\|_{\infty} \leq 1,
\end{align*}
then the optimal solution of the following optimization
\begin{align*}
 &\max_{\nu} \; \langle URU^Tx_1+x_2,\nu \rangle - \frac{1}{2\lambda}\nu^T\nu,\\
&\text{subject to:}\quad \|(URU^TX_1+X_2)^T\nu\|_{\infty} \leq 1,
\end{align*}
is indeed the transformed $\nu$ under the same $R$, i.e.,
\begin{align}
  \nu(R)&=\nu(URU^Tx_1+x_2, URU^TX_1+X_2)\nonumber\\
&=URU^T\nu_1(x,X)+\nu_2(x,X)=URU^T\nu_1 + \nu_2.\label{eq:nu(R)}
\end{align}
To verify the argument, check that $\nu^T\nu = \nu(R)^T\nu(R)$ and
\begin{align*}
\langle URU^Tx_1+x_2,\nu(R)\rangle = \langle URU^Tx_1,URU^T\nu_1\rangle + \langle x_1,\nu_2\rangle
= \langle x,\nu\rangle
\end{align*}
for all inner products in both objective function and constraints, preserving the optimality.

By projecting \eqref{eq:nu(R)} to subspace, we show that operator $v(x,X,S)$ is linear \textit{vis a vis} subspace rotation $URU^T$, i.e.,
\begin{equation}\label{eq:v_linearity}
  v(R) = \frac{\mathbb{P}_{S_{\ell}}\nu(R)}{\|\mathbb{P}_{S_{\ell}}\nu(R)\|} = \frac{URU^T\nu_1}{\|URU^T\nu_1\|}=URU^Tv.
\end{equation}

On the other hand, we know that
\begin{align*}
  URU^Tx_1+x_2 \sim x_1+x_2, &&
  URU^TX_1+X_2 \sim X_1+X_2,
\end{align*}
where $A\sim B$ means that the random variables $A$ and $B$ follows the same distribution.
This is because when
$\|x_1\|$ is fixed and each columns in $X_1$ has fixed magnitudes, $URU^Tx_1 \sim x_1$ and $URU^TX_1 \sim X_1$. Also, adding additional random variables $x_2$ and $X_2$ changes the distribution the same way on both sides. Therefore,
\begin{equation}\label{eq:v_distribution}
  v(R) = v(URU^Tx_1+x_2, URU^TX_1+X_2, \mathcal{S})\sim v(x, X, \mathcal{S}).
\end{equation}

Combining \eqref{eq:v_linearity} and \eqref{eq:v_distribution}, we conclude that for any rotation $R$
$$v_i^{(\ell)}(R)\sim URU^Tv_i^{(\ell)}.$$
In other words, the distribution of $v_i^{(\ell)}$ is uniform on the unit sphere of $\mathcal{S}_{\ell}$.


After this key step, the rest is identical to the proof of Lemma~7.5 of \citet{soltanolkotabi2011geometric}. The idea is to use Lemma~\ref{lemma:spherical_cap} (upper bound of area of spherical caps) to provide a probabilistic bound of the pairwise inner product and Borell's inequality to show the concentration around the expected cosine canonical angles, namely, $\|{U^{(k)}}^TU^{(\ell)}\|_F/\sqrt{d_{\ell}}$. The proof is standard so we omit it in this paper.
\end{proof}

\subsection{Proof of Theorem~\ref{thm:fullrandom} for the Fully Random Model with Gaussian Noises}
The proof of Theorem~\ref{thm:fullrandom} essentially applies Theorem~\ref{thm:thm_random_noise} with specific inradii bound and incoherence bound. The bound for inradius is given in Lemma~\ref{lemma:random_inradius} and we use the following Lemma extracted from Step~2 of Section~7.3 in \citet{soltanolkotabi2011geometric} to bound the subspace incoherence.
\begin{lemma}[Incoherence bound of random subspaces]\label{lemma:random_incoherence}
In random subspaces setting, the projected subspace incoherence is bounded from above:
\begin{equation*}
   Pr\left\{\mu(\mathcal{X}_{\ell})\leq \sqrt{\frac{6\log N}{n}}\text{ for all }\ell\right\}\geq 1- \frac{2}{N}.
\end{equation*}
\end{lemma}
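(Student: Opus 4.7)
The plan is to combine two facts: first, that the projected dual direction $v_i^{(\ell)}$ is, under the fully random model, uniformly distributed on the unit sphere $S^{n-1}$ of $\mathbb{R}^n$ and independent of any external data point $y$; and second, that inner products between a fixed unit vector and a uniform random unit vector concentrate by Lemma~\ref{lemma:spherical_cap}. Then a union bound over $O(N^2)$ pairs gives the claim.

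For the distributional claim, I would extend the rotation-invariance argument already used in the proof of Lemma~\ref{lemma:deterministic_incoherence}. There we showed that, conditional on the subspace $\mathcal{S}_\ell$, the within-subspace unit vector $v_i^{(\ell)}$ is uniform on the unit sphere of $\mathcal{S}_\ell$ whenever the clean samples in the subspace have uniformly random directions and the noise is spherically symmetric. In the fully random model we further draw $\mathcal{S}_\ell$ uniformly from the Grassmannian, independently of every other subspace and every noise vector. Conjugating by an arbitrary rotation $R\in O(n)$ shows that the unconditional law of $v_i^{(\ell)}$ is invariant under $R$, hence $v_i^{(\ell)}$ is uniform on $S^{n-1}$. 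Moreover, because $\mathcal{S}_\ell$ together with the data and noise supported on $\mathcal{S}_\ell$ is independent of the data and noise generating any $y\in\mathcal{Y}\setminus\mathcal{Y}_\ell$, the pair $(v_i^{(\ell)},y)$ is independent.

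Given independence, I condition on $y$ (a unit vector) and apply Lemma~\ref{lemma:spherical_cap} to the uniform unit vector $v_i^{(\ell)}$:
\begin{equation*}
\Pr\!\left(\,\bigl|\langle v_i^{(\ell)}, y\rangle\bigr| > \sqrt{\tfrac{6\log N}{n}}\,\right)\ \le\ 2\exp\!\left(-\tfrac{n}{2}\cdot \tfrac{6\log N}{n}\right)\ =\ \tfrac{2}{N^{3}}.
\end{equation*}
The quantity $\mu(\mathcal{X}_\ell)=\max_{y\in\mathcal{Y}\setminus\mathcal{Y}_\ell}\max_i |\langle v_i^{(\ell)}, y\rangle|$ involves at most $N_\ell(N-N_\ell)$ such inner products, and summing over $\ell$ gives at most $\sum_\ell N_\ell(N-N_\ell)\le N^2$ pairs in total. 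A union bound then yields a failure probability of at most $2N^2/N^3 = 2/N$, which is exactly the stated bound.

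The only real subtlety is step one, that is, justifying uniformity of $v_i^{(\ell)}$ on $S^{n-1}$ and its independence from the external $y$; everything else is a one-line invocation of Lemma~\ref{lemma:spherical_cap} plus a union bound. The previous Lemma~\ref{lemma:deterministic_incoherence} already contains the core rotation-equivariance identity for $v$, so the work amounts to composing that within-subspace rotation argument with an additional rotation of the ambient space induced by the uniform choice of $\mathcal{S}_\ell$, and then observing that the data used to define $v_i^{(\ell)}$ and the data defining $y$ live in independent subspaces. Once this is in place, the tail bound and union bound are routine.
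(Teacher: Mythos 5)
Your proposal is correct and follows essentially the same route as the paper: the paper likewise combines the within-subspace uniformity of the projected dual directions (established in the proof of Lemma~\ref{lemma:deterministic_incoherence}) with the uniform random choice of the subspaces to treat $v_i^{(\ell)}$ and external points $y$ as independent uniform directions in the ambient space, then applies Lemma~\ref{lemma:spherical_cap} and a union bound over the at most $N^2$ pairs to obtain the $2/N$ failure probability. Your write-up is in fact somewhat more explicit about the independence and conditioning steps than the paper's one-line justification.
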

Now that we have shown that the projected dual directions are randomly distributed in their respective subspace, together with the fact that the subspaces themselves are randomly generated, we conclude that all clean data points $y$ and projected dual direction $v$ from different subspaces can be considered iid generated from the ambient space. The proof of Lemma~\ref{lemma:random_incoherence} follows by simply applying Lemma~\ref{lemma:spherical_cap} and a union bound across all $N^2$ events.

\section{Experiments}\label{sec:experiments}
To demonstrate the practical implications of our robustness guarantees for LASSO-SSC, we conduct four numerical experiments including three with synthetic generated data and one with real data. For fast computation, we use ADMM implementation of LASSO solver\footnote{Freely available at:\\
\href{http://www.stanford.edu/~boyd/papers/admm/}{http://www.stanford.edu/{\textasciitilde}boyd/papers/admm/}} with default numerical parameters. Its complexity is proportional to the problem size and the convergence guarantee \citep{boyd2011admm}. We also implement a simple ADMM solver for the matrix version SSC
\begin{equation}\label{eq:MatrixLasso}
\begin{aligned}
\min_{C} \; \|C\|_1+\frac{\lambda}{2}\|X-XC\|_F^2 \quad
\text{s.t.} \;\quad\mathrm{diag}(C)=0,
\end{aligned}
\end{equation}
which is consistently faster than the column-by-column LASSO version. This algorithm is first described in \citet{elhamifar2012ssc_journal}. To be self-contained, we   provide the pseudocode and some numerical simulation in the appendix.

\subsection{Numerical simulation}
Our three numerical simulations test the effects of noise magnitude $\delta$, subspace rank $d$ and number of subspace $L$ respectively.

\textbf{Methods}:
 To test our methods for all parameters, we scan through an exponential grid of $\lambda$ ranging from $\sqrt{n}\times 10^{-2}$ to $\sqrt{n}\times10^3$. In all experiments, ambient dimension $n=100$, relative sampling $\kappa=5$, subspace and data are drawn uniformly at random from unit sphere and then corrupted by Gaussian noise $Z_{ij}\sim N(0,\sigma/\sqrt{n})$. We measure the success of the algorithm by the relative violation of Self-Expressiveness Property defined below.
\begin{align*}
\mathrm{RelViolation}\left(C,\mathcal{M}\right)= \frac{\sum_{(i,j)\notin \mathcal{M}}|C|_{i,j}}{\sum_{(i,j)\in \mathcal{M}}|C|_{i,j}}
\end{align*}
where $\mathcal{M}$ is the ground truth mask containing all $(i,j)$ such that $x_i, x_j \in \mathcal{X}^{(\ell)}$ for some $\ell$. Note that $\mathrm{RelViolation}\left(C,\mathcal{M}\right)=0$ implies that SEP is satisfied. We also check that there is no all-zero columns in $C$, and the solution is considered trivial otherwise.

\textbf{Results}:
The simulation results confirm our theoretical findings. In particular, Figure~\ref{fig:Exp1_noise} shows that LASSO subspace detection property is possible for a very large range of $\lambda$ and the dependence on noise magnitude is roughly $1/\sigma$ as predicted in \eqref{eq:thm_rand_noise_lambda_range}. In addition, the sharp contrast of Figure~\ref{fig:Exp2_rank} and \ref{fig:Exp3_L} demonstrates our observations on the sensitivity of $d$ and $L$. 


\begin{figure}[htb]
\begin{minipage}[t]{0.46\linewidth}
  \centering
  \includegraphics[width=\textwidth]{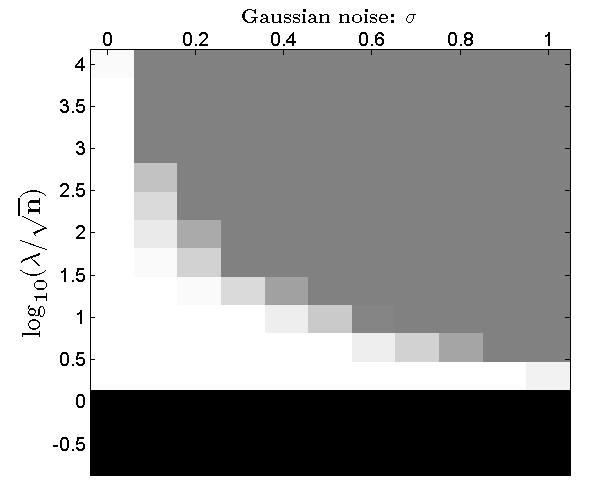}\\
  \caption[Exact recovery vs. increasing noise.]{Exact recovery under noise. Simulated with $n=100, d=4, L=3, \kappa=5$ with increasing Gaussian noise $N(0,\sigma/\sqrt{n})$. \textbf{ Black:} trivial solution ($C=0$); \textbf{Gray:} $\mathrm{RelViolation}>0.1$; \textbf{White:} $\mathrm{RelViolation}=0$.  }\label{fig:Exp1_noise}
\end{minipage}
\hspace{0.3cm}
\begin{minipage}[t]{0.53\linewidth}
  \centering
  \includegraphics[width=\textwidth]{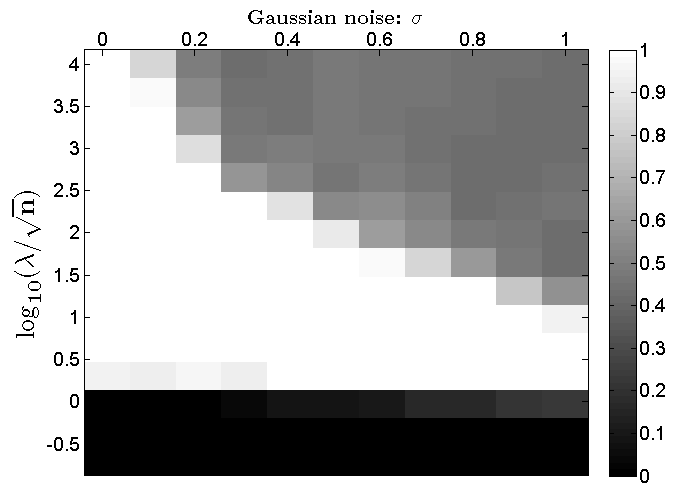}\\
  \caption[Spectral clustering accuracy vs. increasing noise.]{Spectral clustering accuracy for the experiment in Figure~\ref{fig:Exp1_noise}. The rate of accurate classification is represented in grayscale. White region means perfect classification. It is clear that exact subspace detection property (Definition~\ref{def:lasso_detection}) is not necessary for perfect classification. }\label{fig:Exp1_acc_map}
\end{minipage}
\end{figure}

\begin{figure}[ht]

\begin{minipage}[t]{0.48\textwidth}
  \centering
  \includegraphics[width=\textwidth]{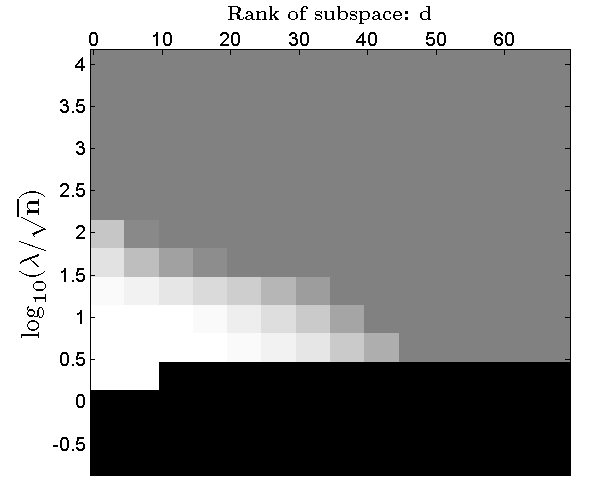}\\
  \caption[Effects of cluster rank $d$.]{Effects of cluster rank $d$. Simulated with $n=100, L=3, \kappa=5, \sigma=0.2$ with increasing $d$. \textbf{Black:} trivial solution ($C=0$); \textbf{Gray:} $\mathrm{RelViolation}>0.1$; \textbf{White:} $\mathrm{RelViolation}=0$.  Observe that beyond a point, subspace detection property is not possible for any $\lambda$. }\label{fig:Exp2_rank}
\end{minipage}
\hspace{0.02\textwidth}
\begin{minipage}[t]{0.48\textwidth}
  \centering
  \includegraphics[width=\textwidth]{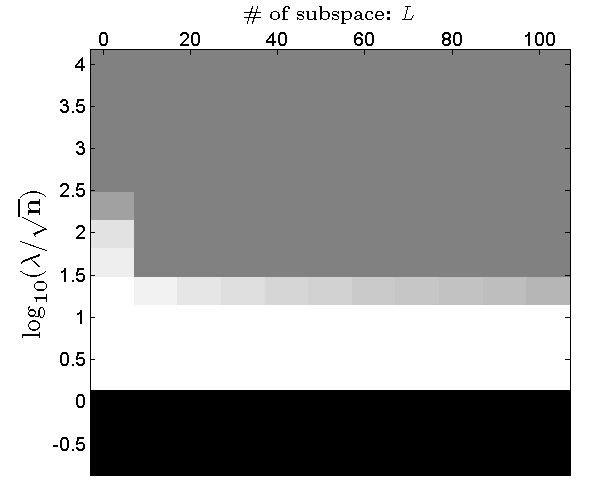}\\
  \caption[Effects of number of subspace $L$.]{Effects of number of subspace $L$. Simulated with $n=100, d=2, \kappa=5, \sigma=0.2$ with increasing $L$. \textbf{Black:} trivial solution ($C=0$); \textbf{Gray:} $\mathrm{RelViolation}>0.1$; \textbf{White:} $\mathrm{RelViolation}=0$.  Note that even at the point when $dL=200$(subspaces are highly dependent), subspace detection property holds for a large range of $\lambda$. }\label{fig:Exp3_L}
\end{minipage}

\end{figure}

\subsection{Face Clustering Experiments}
In this section, we evaluate the noise robustness of with LASSO-SSC on Extended YaleB \citep{lee2005extendedyaleb}, a real life face dataset of 38 subjects. For each subject, 64 face images are taken under different illuminations.


\textbf{Subspace Modeling of Face Images}:
According to \citet{basri2003lambertianface}, face images under different illuminations can be well-approximated by a 9-dimensional linear subspace. In addition, \citet{zhou2007PhotometricFace} reveals the underlying 3-dimensional subspace structure by assuming Lambert's reflectance and ignoring the shadow pixels. For the physics of this subspace model, we refer the readers to \citet{basri2003lambertianface} and \citet{zhou2007PhotometricFace} for detailed explanations.

\textbf{Method}:  We conduct face clustering experiments on Extended YaleB Dataset with both 9D and 3D representations of face images and compare them under varying number of subspaces $L$ and different level of injected noise. Specifically, the 9D subspaces are generated by projecting the data matrix corresponding to each subject to a 9D subspace via PCA and the 3D subspaces are generated by a factorization-based robust matrix completion method. Then we scan through a random selection of $[2,4,6,10,12,18,38]$ subjects and for each experiment we inject additive Gaussian noise $N(0,\sigma/\sqrt{n})$ with $\sigma=[0,0.01,...,0.99,1]$\footnote{In order to compare the effect of varying level of noise, we choose to inject artificial noise in this experiment. The performance of LASSO-SSC on real noise/data corruptions is well-documented in the motion segmentation experiments of \citet{elhamifar2009ssc,elhamifar2012ssc_journal}}. Each photo is resized to $48\times 42$ so we have ambient dimension $n=2016$ and there are $64$ sample points for each subspace, hence $N=64L$. The parameter $\lambda$ is not carefully tuned, but simply chosen to be $\sqrt{n}$, which is order-wise correct for small $\sigma$ according to \eqref{eq:thm_rand_noise_lambda_range}. 

\textbf{Results}:
As we can see in Figure~\ref{fig:Rank3Face}~and~\ref{fig:Rank9Face}, the range where LASSO-Subspace Detection Property holds is much larger for the rank-$3$ experiments than the rank-$9$ experiments. Also, the recovery is not sensitive to the number of faces we want to cluster. Indeed, LASSO-SSC is able to succeed for both rank-9 and rank-3 data with a considerable range of noise even for the full 38 subjects clustering task.

These observations confirm our theoretical and simulation results---on deterministic subspace data from a real-life problem---that noise robustness of LASSO-SSC is sensitive to the subspace dimension $d$ but  not the number of subspaces $L$.


\begin{figure}[ht]
\begin{minipage}[t]{0.48\textwidth}
  \centering
  \includegraphics[width=\textwidth]{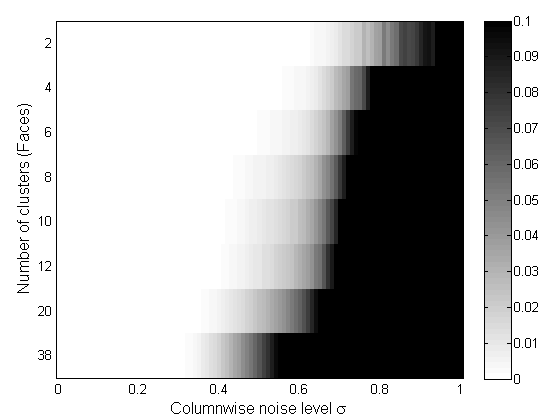}\\
  \caption[Effects of number of subspace $L$.]{RelViolation of the Face clustering experiments with Rank 3 photometric face. }\label{fig:Rank3Face}
\end{minipage}
\hspace{0.02\textwidth}
\begin{minipage}[t]{0.48\textwidth}
  \centering
  \includegraphics[width=\textwidth]{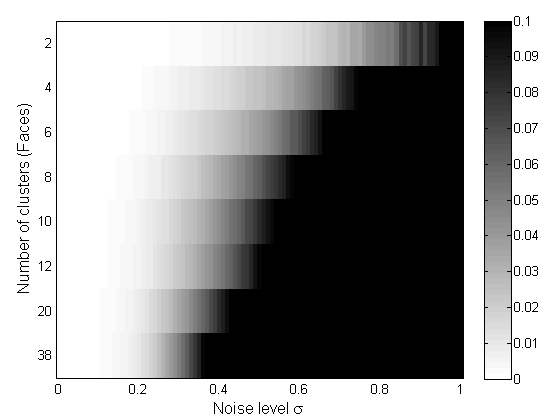}\\
  \caption[Effects of cluster rank $d$.]{RelViolation of the Face clustering experiments with Rank 9 faces (after projection).  }\label{fig:Rank9Face}
\end{minipage}
\end{figure}

\section{Conclusion and Future Directions}\label{sec:conclusion}
We presented a theoretical analysis for noisy subspace segmentation problem that is of great practical interests. We showed that the popular SSC algorithm {\em exactly} (not approximately) detects the subspaces in the noisy case, which justified its empirical success on real problems. Our results are the first in showing LASSO-SSC to work under deterministic data and noise. For stochastic noise, we show that LASSO-SSC works even when noise is much larger than the signal. In addition, we discovered the orderwise relationship between LASSO-SSC's robustness to the level of noise and the subspace dimension, and we found that robustness is insensitive to the number of subspaces.
These results lead to new theoretical understanding of SSC, and
 provide guidelines for practitioners and application-level researchers to judge whether SSC could possibly work well for their respective applications.

Open problems for subspace clustering include the graph connectivity problem raised by \citet{nasihatkon2011graph}, missing data problem (a first attempt is performed in \citet{eriksson2011high_rankMC}, which requires an unrealistic number of data), sparse corruptions on data and others. One direction closely related to this paper is to introduce  a more practical metric of success. As we illustrated in the paper, subspace detection property is not necessary for perfect clustering. In fact from a pragmatic point of view, even perfect clustering is not necessary. Typical applications allow for a small number of misclassifications. It would be interesting to see whether stronger robustness results can be obtained for a more practical metric of success.


\acks{H. Xu was supported by the Ministry of Education of Singapore through AcRF Tier Two grant R-265-000-443-112.}



\appendix
\section{Differences to \citet{soltanolkotabi2013robust}}
As we reviewed above, \citet{soltanolkotabi2013robust} analyzed almost the same algorithm under the semi-random model. Besides the comparisons we made in Section~\ref{sec:RelatedWorks} regarding the model of analysis and allowable noise level, there are a few other minor differences which we list here.
\begin{description}
  \item[``Non-trivial'' v.s. ``Many true discoveries''.] In LASSO-Subspace Detection Property, we only require the resulting regression coefficient to be non-zero, while \citet[Theorem~3.2]{soltanolkotabi2013robust} has a result showing the conditions under which the number of non-zero coefficient is a constant fraction of subspace dimension $d$. Our results are weaker but more general (works without the semi-random assumption).

      In fact, the conditions are more similar than different. We both pick regression coefficient of the same order in the semi-random model. In addition, when $d$ becomes smaller than $\log \rho(i)/c_0$,   these two conditions are essentially the same.

  \item[Choosing parameter $\lambda$.]
        \citet{soltanolkotabi2013robust} provides a two-pass mechanism to adaptively tune the parameter $\lambda$ for each Lasso-SSC and the results are proven for this particular $\lambda$.
        On the other hand, our results are stated for any $\lambda$ in a specified range. The choice of $\lambda$ can also be independently tuned for each Lasso-SSC.

      In practice, it is advisable to choose $\lambda$ slightly larger than what is required for it to be non-trivial. We described two strategies in the discussion underneath Theorem~\ref{thm:thm_general}.

  \item[Proof techniques.] The proofs are admittedly similar in many ways (since we solve the same problem!), but the key technical components in controlling the magnitude of dual variables $\nu_1$ and $\nu_2$ are different. \citet[Lemma~8.5]{soltanolkotabi2013robust} relies on the semi-random model, and the resulting restricted isometry property (of the block of data points corresponding to each subspace). In contrast, our bound for $\|\nu_2\|$ does not require any probabilistic assumptions, therefore more general. 
  It is however looser than \citet[Lemma~8.5]{soltanolkotabi2013robust} by a factor of $\sqrt{d}$ when we specialized in the semi-random model.
  This is probably what led to our worse dependence on the subspace dimension $d$ in the bound we described in the discussion of Theorem~\ref{thm:semirandom}.
\end{description}
In conclusion, we find that our results are complementary to that in \citet{soltanolkotabi2013robust} and provide a novel point of view to the theoretical analysis for subspace clustering problems.

\section{Numerical algorithm to solve Matrix-LASSO-SSC}\label{sec:ADMM_matrix-LASSO-SSC}
In this section we outline the steps of solving the matrix version of LASSO-SSC below. Note that \citet{elhamifar2012arxiv} derived a more general version of Matrix-LASSO-SSC  to account for not only noisy but also sparse corruptions. We include this appendix merely for the convenience of the readers. Consider
\begin{equation}\label{eq:MatrixLasso}
\begin{aligned}
\min_{C} \; &\|C\|_1+\frac{\lambda}{2}\|X-XC\|_F^2 \quad
\text{s.t.} \;\quad\mathrm{diag}(C)=0.
\end{aligned}
\end{equation}
While this convex optimization problem can be solved by some off-the-shelf general purpose solvers such as SeDuMi~\citep{sturm1999sedumi} or SDPT3 \citep{toh1999sdpt3}, such approaches are usually slow and non-scalable. An ADMM \citep{boyd2011admm} version of the problem is described here for fast computation. It solves an equivalent optimization program
\begin{equation}\label{eq:MatrixLasso_modify}
\begin{aligned}
\min_{C} \; &\|C\|_1+\frac{\lambda}{2}\|X-XJ\|_F^2 \\
\text{s.t.} \;&\quad J=C-\mathrm{diag}(C).
\end{aligned}
\end{equation}
We add to the Lagrangian with an additional quadratic penalty term for the equality constraint and get the augmented Lagrangian
\begin{align*}
\mathcal{L}=& \|C\|_1+\frac{\lambda}{2}\|X-XJ\|_F^2 + \frac{\mu}{2}\|J-C+\mathrm{diag}(C)\|_F^2
+tr(\Lambda^T(J-C+\mathrm{diag}(C))),
\end{align*}
where $\Lambda$ is the dual variable and $\mu$ is a parameter. Optimization is done by alternatingly optimizing over $J$, $C$ and $\Lambda$ until convergence. The update steps are derived by solving $\partial \mathcal{L}/\partial J=0$ and $\partial \mathcal{L}/\partial C=0$. Notice that the objective function is non-differentiable for $C$ at origin so we use the now standard soft-thresholding operator~\citep{donoho1995noising}. For both variables, the solution is given in closed-forms. For the update of $\Lambda$, we simply use the gradient descent method. For details of the ADMM algorithm and its guarantee, please refer to \citet{boyd2011admm}. To accelerate the convergence, it is possible to introduce a parameter $\rho$ and increase $\mu$ by $\mu=\rho\mu$ at every iteration. The full algorithm is summarized in Algorithm~\ref{alg:MatrixSSC}.

\begin{algorithm}[tb]
   \caption{Matrix-LASSO-SSC}
   \label{alg:MatrixSSC}
\begin{algorithmic}
   \STATE {\bfseries Input:}
   Data points as columns in $X\in \mathbb{R}^{n\times N}$, tradeoff parameter $\lambda$, numerical parameters $\mu_0$ and $\rho$.
   \STATE Initialize $C=0$, $J=0$, $\Lambda=0$, $k=0$.
   \WHILE{not converged}
   \STATE{1. } Update $J$ by
   $$J=(\lambda X^TX+\mu_k I)^{-1}(\lambda X^TX+\mu_k C-\Lambda).$$
   \STATE{2. } Update $C$ by
   $$ C^{'}=\mathrm{SoftThresh}_{\frac{1}{\mu_k}}\left(J+\Lambda/\mu_k\right), $$
   $$ C=C^{'}-\mathrm{diag}(C^{'}).$$
   \STATE{3. } Update $\Lambda$ by
   $$\Lambda=\Lambda+\mu_k(J-C)$$
   \STATE{4. } Update parameter $\mu_{k+1}=\rho\mu_k.$
   \STATE{5. } Iterate $k=k+1$;
   \ENDWHILE
   \STATE {\bfseries Output:} Affinity matrix $W=|C|+|C|^T$
\end{algorithmic}
\end{algorithm}

Note that for the special case when $\rho=1$, the inverse of $(\lambda Y^TY+\mu I)$ can be pre-computed, and hence each iteration can be computed in linear time. Empirically, we found it good to set $\mu=\lambda$ and it takes roughly 50-100 iterations to converge to a sufficiently good points. We remark that the matrix version of the algorithm is much faster than the column-by-column ADMM-Lasso and achieves almost the same numerical accuracy; see  our experiments in Figure~\ref{fig.runtime_data},\ref{fig.obj_data},\ref{fig.runtime_dim}~and~\ref{fig.obj_dim}.

\begin{figure}[htb]
\begin{minipage}[t]{0.48\linewidth}
  \centering
  \includegraphics[width=8cm]{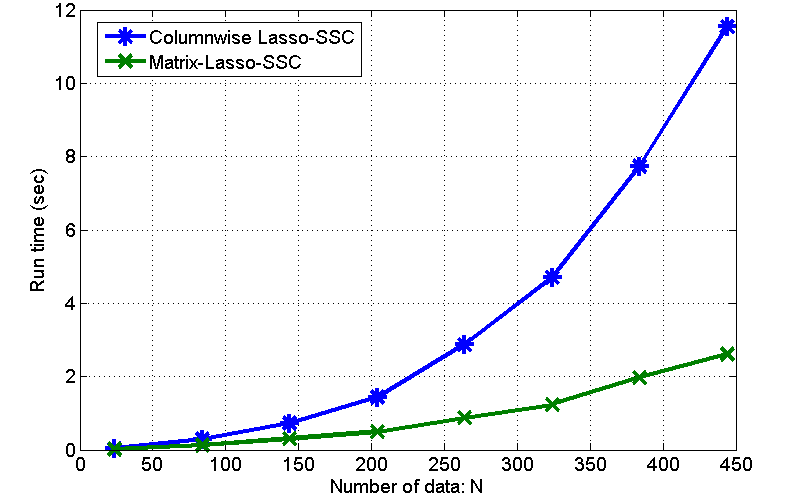}\\
  \caption{Run time comparison with increasing number of data. Simulated with $n=100, d=4, L=3, \sigma=0.2$, $\kappa$ increases from $2$ to $40$ such that the number of data goes from 24- 480. It appears that the matrix version scales better with increasing number of data compared to columnwise LASSO.}\label{fig.runtime_data}
\end{minipage}
\hspace{0.02\linewidth}
\begin{minipage}[t]{0.48\linewidth}
  \centering
  \includegraphics[width=8cm]{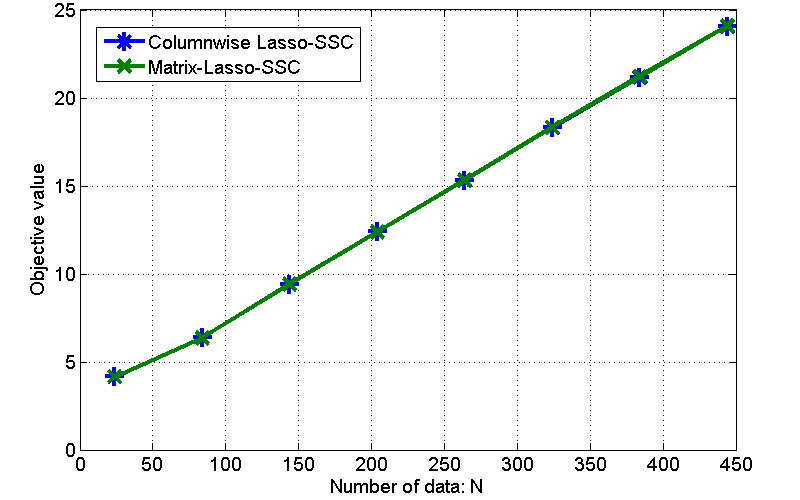}\\
  \caption{Objective value comparison with increasing number of data. Simulated with $n=100, d=4, L=3, \sigma=0.2$, $\kappa$ increases from $2$ to $40$ such that the number of data goes from 24- 480. The objective value obtained at stop points of two algorithms are nearly the same.}.\label{fig.obj_data}
\end{minipage}
\end{figure}

\begin{figure}[htb]
\begin{minipage}[t]{0.48\linewidth}
  \centering
  \includegraphics[width=8cm]{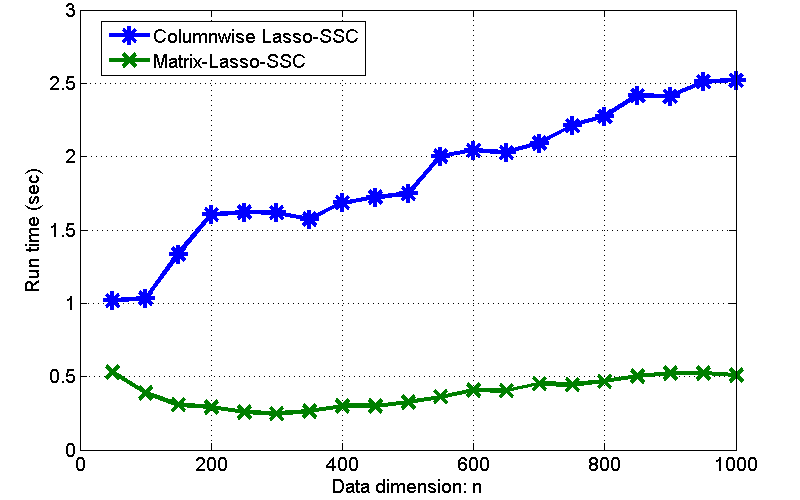}\\
  \caption{Run time comparison with increasing number of data. Simulated with $\kappa=5, d=4, L=3, \sigma=0.2$, ambient dimension $n$ increases from $50$ to $1000$. Note that the dependence on dimension is weak at the scale due to the fast vectorized computation. Nevertheless, it is clear that the matrix version of SSC runs faster.}\label{fig.runtime_dim}
\end{minipage}
\hspace{0.02\linewidth}
\begin{minipage}[t]{0.48\linewidth}
  \centering
  \includegraphics[width=8cm]{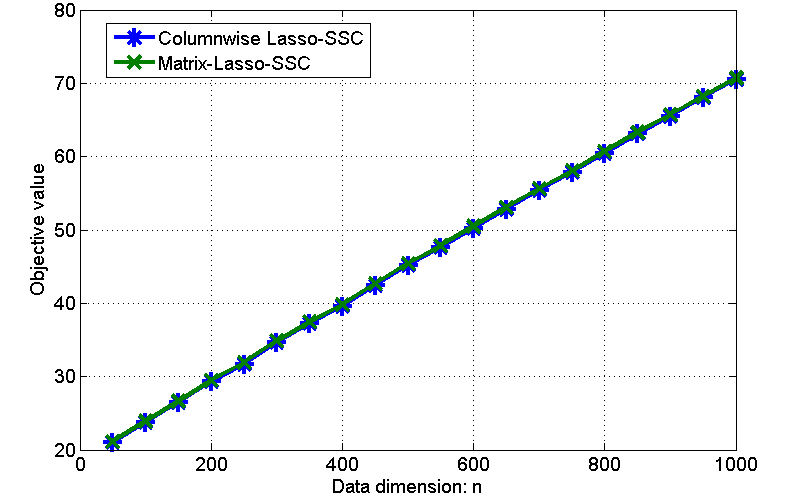}\\
  \caption{Objective value comparison with increasing number of data. Simulated with $\kappa=5, d=4, L=3, \sigma=0.2$, ambient dimension $n$ increases from $50$ to $1000$. The objective value obtained at stop points of two algorithms are nearly the same.}\label{fig.obj_dim}
\end{minipage}
\end{figure}

\vskip 0.2in
\bibliography{NoisySSC_ICML}
\end{document}